\documentclass[11pt]{article}
\usepackage[margin=1in]{geometry}
\usepackage{mathtools,amsfonts,amsthm,amssymb,bbm}
\usepackage{enumitem}
\usepackage[backref,colorlinks,citecolor=blue,linkcolor=magenta,bookmarks=true]{hyperref}
\usepackage[algo2e,ruled,linesnumbered]{algorithm2e}
\usepackage{natbib}
\usepackage[dvipsnames]{xcolor}

\title{
Distortion of AI Alignment:\\
Does Preference Optimization Optimize for Preferences?
}
\author{
  Paul G\"olz\\Cornell University\\
  \small\texttt{paulgoelz@cornell.edu}
  \and
  Nika Haghtalab\\
  UC Berkeley\\
  \small\texttt{nika@berkeley.edu}
  \and
  Kunhe Yang\\
  UC Berkeley\\
  \small\texttt{kunheyang@berkeley.edu}
}

\date{}

\usepackage{wrapfig}
\usepackage{bbm} %
\usepackage{multirow}
\usepackage{hhline}
\usepackage{diagbox}
\usepackage[percent]{overpic}

\usepackage{graphicx}
\usepackage{amsmath,amsthm,amsfonts,amssymb}
\usepackage{mathtools,thmtools,dsfont}
\usepackage{thm-restate}
\usepackage{microtype}
\usepackage{hyperref}
\usepackage[capitalize]{cleveref}

\usepackage{booktabs}
\usepackage{subcaption}
\usepackage{tikz}
\usetikzlibrary{arrows,shapes}  %

\DeclareMathOperator*{\argmin}{argmin}
\DeclareMathOperator*{\argmax}{argmax}

\newcommand{\1}[1]{\mathds{1}_{#1}} %

\newcommand{\Bern}{\mathsf{Bernoulli}} %
\newcommand{\Binomial}{\mathsf{Binomial}} %

\newcommand{\eps}{\varepsilon}
\newcommand{\Ex}[2]{\operatorname*{\mathbb{E}}_{#1}\left[#2\right]} %

\newcommand{\Unif}{\mathsf{Uniform}} %

\newcommand{\vecone}{\mathbf{1}}
\newcommand{\veczero}{\mathbf{0}}

\newcommand{\BC}{\mathsf{BC}}

\newcommand{\ball}{B} %

\newcommand{\SW}{\mathsf{AvgUtil}}

\newcommand{\borda}{\mathsf{Borda}}

\newcommand{\piref}{\pi_{\text{ref}}}
\newcommand{\pihat}{\hat{\pi}}
\newcommand{\KL}{D_{\mathsf{KL}}}

\newcommand{\pistar}{\pi^{\star}}

\newcommand{\ppo}{\mathsf{RLHF}}
\newcommand{\nash}{\mathsf{NLHF}}
\newcommand{\dpo}{\mathsf{DPO}}
\newcommand{\pippo}{\pi_{\ppo}}
\newcommand{\pinash}{\pi_{\nash}}
\newcommand{\transpose}{\mathsf{T}}

\newcommand{\distortion}{\mathit{dist}}
\newcommand{\Lagrangian}{\mathcal{L}}

\newcommand{\pitilde}{\tilde{\pi}}
\newcommand{\cD}{\mathcal{D}}
\newcommand{\kldiv}[2]{D_{\text{KL}}(#1 \,\|\, #2)}
\DeclareMathOperator\arctanh{arctanh}
\newcommand{\simiid}{\stackrel{\text{i.i.d.}}{\sim}}
\newcommand{\Var}[1]{\mathrm{Var}\left(#1\right)}
\theoremstyle{plain}
\newtheorem{theorem}{Theorem}
\crefname{theorem}{Theorem}{Theorems}
\newtheorem{lemma}[theorem]{Lemma}
\crefname{lemma}{Lemma}{Lemmas}

\crefname{claim}{Claim}{Claims}

\crefname{conjecture}{Conjecture}{Conjectures}
\newtheorem{corollary}[theorem]{Corollary}
\crefname{corollary}{Corollary}{Corollaries}
\newtheorem{proposition}[theorem]{Proposition}
\crefname{proposition}{Proposition}{Propositions}

\crefname{remark}{Remark}{Remarks}
\theoremstyle{definition}

\crefname{example}{Example}{Examples}

\crefname{fact}{Fact}{Facts}


\begin{document}

\maketitle

\begin{abstract}
  
After pre-training, large language models are aligned with human preferences based on pairwise comparisons.
State-of-the-art alignment methods (such as PPO-based RLHF and DPO) are built on the assumption of aligning with a single preference model, despite being deployed in settings where users have diverse preferences.
As a result, it is not even clear that these alignment methods produce models that satisfy users \emph{on average}\,---\,a minimal requirement for pluralistic alignment.
Drawing on social choice theory and modeling users' comparisons through individual Bradley-Terry (BT) models, we introduce an alignment method's \emph{distortion}: the worst-case ratio between the optimal achievable average utility, and the average utility of the learned policy.

The notion of distortion helps draw sharp distinctions between alignment methods: \emph{Nash Learning from Human Feedback} achieves the minimax optimal distortion of $(\frac{1}{2} \!+\! o(1)) \cdot \beta$ (for the BT temperature $\beta$), robustly across utility distributions, distributions of comparison pairs, and permissible KL divergences from the reference policy. RLHF and DPO, by contrast, suffer $\geq (1 \!-\! o(1)) \cdot \beta$ distortion already without a KL constraint, and $e^{\Omega(\beta)}$ or even unbounded distortion in the full setting, depending on how comparison pairs are sampled.
  \end{abstract}
  
  \section{Introduction}

Reinforcement Learning from Human Feedback (RLHF)~\citep{ouyang2022training,rafailov2023direct,christiano2017deep,ziegler2019fine} has become the dominant paradigm for aligning large language models (LLMs) with human values and preferences.
In a typical alignment pipeline, human feedback is provided as ordinal comparisons between pairs of candidate model outputs. This feedback is used to fine-tune a pre-trained model, steering it toward the preferences expressed in these comparisons.
A major limitation of RLHF and of many proposed alternatives (including DPO~\citep{rafailov2023direct},
$\Phi$PO~\citep{azar2024general}, KTO~\citep{ethayarajh2024kto}, SimPO~\citep{meng2024simpo}, $\chi$PO~\citep{huang2024correcting}) 
is that they do not take into account that users will disagree on which model outputs are most useful or least harmful.
A growing body of evidence\,---\,from both the general public and the research community~\citep{AIID,sorensen2024roadmap,conitzer2024position,bai2022constitutional}\,---\,suggests that this blind spot of current alignment methods can lead to unfair outcomes. For example, \citet{chakraborty2024maxmin} argue that RLHF may align with a majority group's preferences and ignore the preferences of a minority.

In this work, we study a more basic question: \emph{do current alignment methods reliably lead to a high average utility across the users?}
Even such a minimal requirement might not be automatically met since alignment methods such as RLHF were originally designed with a single, perhaps representative, user in mind whose noisy ordinal preferences are assumed to be consistent with an underlying %
utility model.
As a result, RLHF fits a single reward model to the observed ordinal comparisons of a population of users with different utility functions, effectively constructing a utility function for a ``mythical'' representative user.
Could it be that optimizing a model for this mythical user leads to poor outcomes on average for real users?
More fundamentally, \emph{do ordinal preferences even contain enough information to ensure high average utility across a heterogeneous user population?}

\begin{figure}
\centering
\includegraphics[width=\textwidth]{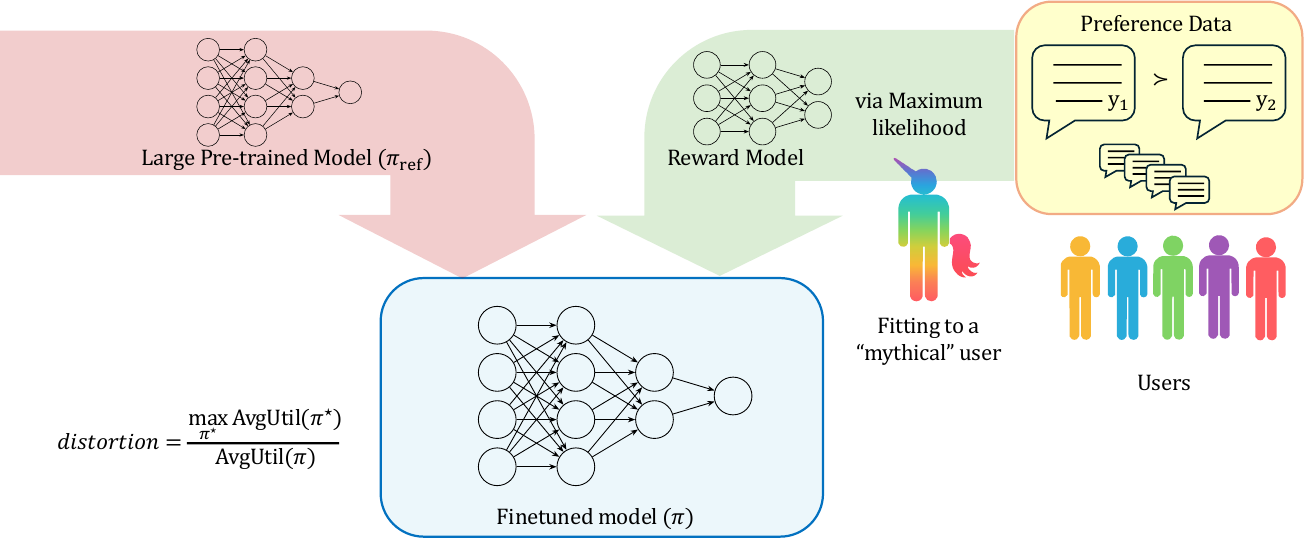}
\caption{\footnotesize \textbf{The typical RLHF pipeline.} The preference optimization process begins by collecting comparison data from users with heterogeneous utilities. A single Bradley-Terry model is then fit to this data via Maximum Likelihood Estimation (MLE), producing a single reward model that represents a ``mythical user'' whose utility best explains the observed comparisons. This reward model is used to fine-tune the pretrained policy. We define \emph{distortion} as the ratio between the average utility of an optimal policy and that of the output policy, which measures how well a policy aligned with the mythical user's utility aligns with the true average utility.}
\end{figure}

\paragraph{Distortion of alignment.}
To address these questions, we introduce the
\emph{distortion} of an alignment method,\footnote{While prior work has called for the study of distortion in alignment~\citep{dai2024mapping}, or used alignment as a motivation for studying the distortion of voting rules~\citep{goyal2024metric,EHM24}, our work is to our best knowledge the first to systematically define and analyze the distortion of alignment methods. We discuss these related efforts in \cref{app:related}.} which we define as the ratio between the optimal average utility of a policy (if the training process had access to users' true utilities) and the average utility achieved by the fine-tuned policy.
A larger distortion implies lower average quality relative to the optimal policy.
This notion is adapted from social choice theory~\cite[e.g.,][]{PR06a,BCH+12,AFS+21},  where distortion quantifies the loss in average utility caused by using a voting rule that relies solely on ordinal preferences rather than full cardinal utilities.

Our setting departs from this classical formulation of distortion in two ways.
First, we assume that users make pairwise comparisons probabilistically, following a Bradley--Terry model based on the user's idiosyncratic utilities.
This assumption of probabilistic comparisons enables much less pessimistic distortion bounds than in the classic, deterministic-choice setting while capturing heterogeneous preferences.
Second, our model and distortion bounds reflect that, in alignment, the models' generation policy is constrained to stay close to the pre-trained reference policy.
These departures generate insights for both the social choice and the alignment communities.

  \subsection{Our Results}
Our results address both the social choice setting with individual Bradley--Terry comparisons (\cref{sec:social-choice}) and the alignment setting (\cref{sec:rlhf}), which additionally constrains the policy to remain close in Kullback-Leibler (KL) divergence to a reference policy. 
The social choice setting is a special case of the alignment setting, in which the proximity constraint is not binding.
Besides RLHF, which coincides with the \emph{Borda} voting rule in the social choice setting, we study the proposed alternative \emph{Nash Learning from Human Feedback (NLHF)}~\citep{munos2024nash}, which coincides with the \emph{Maximal Lotteries}~\citep{Fishburn84} voting rule.
\emph{Direct Preference Optimization (DPO)}~\citep{rafailov2023direct} is equivalent to RLHF in our analysis and hence has the same distortion.
We define these alignment methods and voting rules in \cref{sec:prelim}.
In \cref{sec:discussion_modeling_choices}, we discuss how our results extend to KL-regularized (rather than constrained) alignment methods and to generalized models of sampling comparison pairs.

In this overview of results, summarized in \cref{tab:results}, we present our bounds for the case where the number of sampled pairwise comparisons goes to infinity. In later sections, we accompany these statements with polynomially fast, finite-sample convergence bounds.

Our results establish that some distortion is unavoidable: in the social choice setting (i.e., without KL constraints), if each user only provides a single comparison, we show through a non-identifiability argument\footnote{While the non-identifiability of mixtures of ranking models is well established~\citep{zhao16mixture,zhao2019learning,zhang2022identifiability}, our result quantifies the resulting loss in average utility.}
that, for each value $\beta>0$ of the Bradley--Terry temperature, \emph{every} alignment method (or, equivalently, any voting rule) will suffer a distortion of $(\frac{1}{2}+o(1))\beta$ on some instances.
This lower bound reflects a fundamental information bottleneck: even under Bradley--Terry generative assumptions, ordinal feedback is not rich enough to perfectly optimize for the average utility of a heterogeneous user population.
If each user provides $d \geq 2$ (possibly correlated) comparisons, the same lower bound applies to all voting rules that satisfy a probabilistic relaxation of the \emph{Condorcet loser criterion}, a social-choice axiom widely satisfied by desirable voting rules, including Borda and Maximal Lotteries.
As a result, this lower bound extends to RLHF and NLHF.

In the social choice setting, we show that both Borda and Maximal Lotteries have a distortion that is bounded in $\beta$.
Borda's distortion lies between $(1-o(1))\beta$ and $O(\beta^2)$, whereas Maximal Lotteries' distortion is $(\frac{1}{2}+o(1))\beta$, matching even the lower-order terms of the lower bound.
These results are of independent interest to the social choice community since they show that the introduction of randomized pairwise comparisons circumvents the necessary growth of distortion in the number of alternatives $m$.
Recently, \citet{goyal2024metric} showed that the same Bradley--Terry assumption can reduce the distortion of specific voting rules in the metric distortion setting, where utilities are distances in a metric space.
Our results show that the Bradley--Terry assumption has an even larger impact in the general-utility distortion setting, where constant distortion is classically impossible, than in metric distortion. For the AI community, these distortion bounds also carry implications for AI leaderboards such as Chatbot Arena~\cite{chiang2024chatbot}, where heterogeneous user preferences across diverse tasks are aggregated via MLE under a single Bradley--Terry model\,---\,effectively equivalent to using Borda scores. We elaborate on these implications in \Cref{sec:discussion-arena}.

In the alignment setting, we show that NLHF maintains Maximal Lotteries' optimal $(\frac{1}{2}+o(1))\beta$ distortion with remarkable robustness: regardless of the population's utilities, how comparison pairs are sampled, the number of comparisons per user, the reference policy, and the bound on the permissible KL divergence, NLHF obtains a $\Omega(1/\beta)$ fraction of the highest average utility achievable within the KL-divergence bound.
Though we present this result for KL-constrained NLHF, we show in \cref{sec:discussion_modeling_choices} that this directly implies a similar distortion guarantee for \emph{regularized} NLHF.
In contrast, RLHF's distortion can grow as $e^{\Omega(\beta)}$ in the alignment setting
and is even unbounded in $\beta$ if the two outcomes to be compared are sampled in a correlated way rather than i.i.d.

\begin{table}
    \centering \footnotesize
    \caption{Overview of distortion bounds by alignment method and setting.}
    \label{tab:results} %
    \begin{tabular}{@{}l@{\hskip 2mm}l@{\hskip 2mm}l@{}} %
    \toprule
    Alignment Method & 
    Social Choice Setting & 
    AI Alignment Setting \\
    \midrule
    
    RLHF~\citep{ziegler2019fine} & \begin{tabular}[c]{@{}l@{}l@{}}
    $\leq O(\beta^2)$ \textsuperscript{\textcolor{MidnightBlue}{Thm~\ref{thm:upperbound_borda}}}& \multirow{2}{*}{~~\emph{(Borda)}} \\ $\geq (1 - o(1)) \, \beta$  \textsuperscript{\textcolor{MidnightBlue}{Thm~\ref{thm:bordalower}}} & \end{tabular} & 
        $\geq e^{\Omega(\beta)}$ \textsuperscript{\textcolor{MidnightBlue}{Thm~\ref{thm:lowerbound_ppo}}} / Unbounded in $\beta$\textsuperscript{*}  \textsuperscript{\textcolor{MidnightBlue}{Thm~\ref{thm:unbounded-correlated-sampling}}}\\
    \addlinespace %
    NLHF~\citep{munos2024nash} & 
        $= \big(\tfrac{1}{2} + o(1)\big) \, \beta$ \textsuperscript{\textcolor{MidnightBlue}{Cor~\ref{cor:upperbound_maximallotteries}}} ~\emph{(Max. Lotteries)}
       
    & 
    $= \big(\tfrac{1}{2} + o(1)\big) \, \beta$ \textsuperscript{\textcolor{MidnightBlue}{Thm~\ref{thm:upperbound_nash}}}
    \\ %

    \addlinespace
    \begin{tabular}[c]{@{}l@{}}all \emph{(one comparison per user/} \\ \emph{Condorcet loser property)} \end{tabular} & 
    $\ge \big(\tfrac{1}{2} + o(1)\big) \, \beta$ \textsuperscript{\textcolor{MidnightBlue}{Thm~\ref{thm:lowerbound_alg_independent}}}
    & %

        $\ge \big(\tfrac{1}{2} + o(1)\big) \, \beta$ \textsuperscript{\textcolor{MidnightBlue}{Thm~\ref{thm:lowerbound_alg_independent}}}
 \\ %
    \bottomrule
    \end{tabular} \\
    \hfill\textsuperscript{*}comparison pairs sampled from a distribution over pairs.
    \end{table}

We discuss additional related work in \cref{app:related}.
  
  \section{Preliminaries}
\label{sec:prelim}

Let $A = \{1, \dots, m\}$ be a finite set of \emph{alternatives}.
The population of users is described by a probability distribution $\mathcal{D}$ over \emph{utility vectors} $u = (u(1), \dots, u(m))$, whose entries $0 \leq u(x) \leq 1$ indicate a user's utility for alternative $x$.\footnote{Our assumption that utilities be in $[0,1]$ is weaker than any of the three assumptions\,---\,unit-sum, unit-range, or approval~\citep{EKP+24}\,---\,made in classic distortion to avoid trivial infinite lower bounds.}
The objective is to find an alternative $x$ such that its \emph{average utility} $\SW(x) \coloneqq \mathbb{E}_{u \sim \mathcal{D}}[u(x)]$ across the user population (also known as the \emph{utilitarian social welfare}) is as high as possible.
We extend this notation to probability distributions $\pi$ over alternatives by setting $\SW(\pi) \coloneqq \mathbb{E}_{x \sim \pi}[\SW(x)]$.

Both voting rules and alignment methods observe comparisons from $n$ users.
We model each $i = 1, \dots, n$ as a fresh user with independently drawn utility vector $u_i \sim \mathcal{D}$.
For exposition, we assume that each user $i$ provides an equal number $d \geq 1$ of pairwise comparisons.
For each $i$, and for $j=1, \dots, d$, we independently draw alternatives $x_i^j, y_i^j$ from a fixed distribution $\mu$ over alternatives, in which the minimum probability mass $\mu_{\min} \coloneqq \min_{x \in A}\mu(x)$ is positive.\footnote{In \cref{sec:discussion_modeling_choices}, we discuss how most of our results extend more general distributions over comparison pairs, which can, for example. capture $k$-wise comparisons.}
User $i$ then compares each pair $\{x_i^j, y_i^j\}$ (for $j=1, \dots, d$) through a Bradley-Terry model based on $i$'s utilities: $i$ prefers $x_i^j$ over $y_i^j$ (written ``$x_i^j \succ_i y_i^j$'') with probability $\sigma\big(\beta \cdot (u_i(x_i^j) - u_i(y_i^j))\big)$, where $\sigma(t)\coloneqq 1/(1+e^{-t})$ is the logistic sigmoid function and $\beta > 0$ is a temperature parameter, and prefers $y_i^j$ over $x_i^j$ (``$y_i^j \succ_i x_i^j$'') otherwise.\footnote{Should we sample the same alternative $x = x_i^j = y_i^j$ twice for a pair, the user is not asked for a pairwise comparison. We record this as ``$x \succ_{i} x$'', in a slight abuse of notation.}
Whereas this specifies the marginal probability of each pairwise comparison, we make no assumption about the correlation between $i$'s choices.
For example, $i$ might derive the pairwise comparisons from a Plackett-Luce ranking, ensuring that the user's comparisons are always consistent.\footnote{In particular, if we sample the same unordered pair twice for a user, the answers can be perfectly correlated.}
We set $p(x \succ y)$ for the expected win rate $\mathbb{E}_{u \sim \cD}\big[\sigma\big(\beta \cdot (u(x)-u(y))\big)\big]$.
\paragraph{Social Choice Setting.}
A \emph{voting rule} $f$ %
observes the sampled pairwise comparisons $\{x_i^j \succ_i y_i^j\}_{i \in [n], j \in [d]}$ and maps them to a probability distribution over alternatives.
For some $m$, $\mathcal{D}$, $\beta$, $d$, $\mu$, and the correlation between comparisons, the \emph{average utility} of $f$ for $n$ samples is $\SW_n(f) \coloneqq \mathbb{E}\big[\SW\big(f(\{x_i^j \succ_i y_i^j\}_{i,j})\big)\big]$, where the expectation is taken over the pairwise comparisons.
The \emph{distortion} of $f$ on $\cD$ is the competitive ratio between $\SW_n(f)$ and the optimal average utility $\max_{x \in A} \SW(x)$ in the limit of $n \to \infty$ samples, and the distortion of $f$ the worst-case distortion over all $\cD$:
\[ \mathit{dist}(f, \mathcal{D}) \coloneqq \limsup_{n \to \infty} \frac{\max_{x \in A} \SW(x)}{\SW_n(f)}, \quad \quad \mathit{dist}(f) = \sup_{\mathcal{D}} \mathit{dist}(f, \mathcal{D}).\]
For alternatives $x, y$, let $\#(x \succ y) \coloneqq \{(i, j) \mid x_i^j=x, y_i^j=y \}$ denote the number of pairwise comparisons in which $x$ beat $y$.
The \emph{(normalized) Borda score}~\citep{shirali2025direct} of alternative $x$ is
\[ \BC(x) \coloneqq \frac{ \sum_{y \in A}\#(x \succ y)}{ \sum_{y \in A}\#(x \succ y) + \sum_{y \neq x}\#(y \succ x)}, \]
i.e., the fraction of pairwise comparisons involving $x$ in which it wins.
The \emph{Borda} voting rule chooses the winner uniformly among all alternatives with maximum Borda score.
The \emph{Maximal Lotteries} voting rule first computes the margin matrix $M \in \mathbb{R}^{m \times m}$, where $M_{x,y} = \frac{\#(x \succ y) - \#(y \succ x)}{\#(x \succ y) + \#(y \succ x)}$.
It then considers a symmetric two-player zero-sum game in which player~1 selects alternative $x_1$, player~2 selects alternative $x_2$, and the payoffs are  $M_{x_1, x_2}$ for player~1 and $M_{x_2, x_1} = - M_{x_1, x_2}$ for player~2.
The maximal lotteries rule returns a distribution $\pi \in \argmax_{\pi_1 \in \Delta(A)} \min_{\pi_2 \in \Delta(A)} \mathbb{E}_{x_1 \sim \pi_1, x_2 \sim \pi_2} [M_{x_1, x_2}]$, i.e., a mixed strategy in Nash equilibrium. (When several such $\pi$ exist, our results hold for any choice.) %

\paragraph{Alignment Setting.}
The alignment setting generalizes the social choice setting in two ways: first, user utilities $u_i(y \mid \boldsymbol{x})$ may depend on a state $\boldsymbol{x}$; second, the goal in determining a policy $\pi$ is not purely to maximize the reward $\SW(\pi \mid \boldsymbol{x})$, but a trade-off between this reward and the goal of remaining close to a reference policy $\piref(\cdot \mid \boldsymbol{x}) \in \Delta(A)$ in terms of the KL divergence $\kldiv{\cdot}{\piref}$.
For theoretical tractability, we focus our analysis on a single state $\boldsymbol{x}$, which we from here on omit from the notation.
Conceptually, this treatment of alignment on a state-by-state basis corresponds to an assumption that our policy class is expressive enough so that it can take the optimal distribution of actions at each state\footnote{This assumption is common in the literature; see, for example, \citet{rafailov2023direct}'s application of first-order optimality conditions of PPO loss minimization.} %
and abstracts from the generalization problem of estimating the population's preference between a pair of alternatives at the given state $\boldsymbol{x}$ based on preferences in similar states $\boldsymbol{x}'$.

Having set aside the dependency between states, we focus on how the regularization with respect to a reference policy impacts the ability to optimize the average utility of three alignment methods: RLHF, DPO, and NLHF.
In addition to the pairwise comparisons, these methods take in a \emph{reference policy} $\piref \in \Delta(A)$ and a \emph{KL bound} $\tau \geq 0$, and map these inputs to a policy $\pi$ in the \emph{KL-ball} $B_{\tau}(\piref) \coloneqq \{\pi \in \Delta(A) \mid \kldiv{\pi}{\piref} \leq \tau\}$ around the reference policy.
RLHF, DPO, and NLHF are typically implemented with a KL regularization rather than our KL-constrained formulation, which we adopt to enable a comparison on equal terms. In \Cref{sec:discussion_modeling_choices}, we show that these perspectives are equivalent, and that distortion upper bounds carry over to regularized alignment methods.

The \emph{RLHF method} first estimates rewards for each alternative, using maximum likelihood estimation assuming that comparisons were generated by a single Bradley--Terry model:
\[ \textstyle  r \coloneqq  \argmax_{r \in \mathbb{R}^m}  \sum_{1 \leq i \leq n, 1 \leq j \leq d} \log\big(\sigma(r(x_i^j) - r(y_i^j))\big).\footnote{Assuming that each $x \in A$ wins at least one pairwise comparison against each $y \neq x$, this optimization has a maximizer, which is unique up to additive shifts, by strict convexity.} \]
Next, RLHF uses PPO~\citep{schulman2017proximal} to compute the policy with maximum expected reward within the KL-ball:\[\pippo \coloneqq \argmax_{\pi \in B_{\tau}(\piref)} \mathbb{E}_{x \sim \pi} [r(x)].\] 
In our setup, DPO is equivalent to 
RLHF (see \Cref{app:equivalence_dpo}) and thus has the same distortion.

The alignment method NLHF %
was inspired in part by a desire to better align with the preferences of a heterogeneous group~\citep{munos2024nash}.
NLHF naturally adapts the definition of maximal lotteries by constraining both players' mixed strategies to the KL-ball, i.e., \[\pinash \coloneqq \argmax_{\pi_1 \in B_\tau(\piref)} \min_{\pi_2 \in B_\tau(\piref)} \Ex{x_1 \sim \pi_1, x_2 \sim \pi_2}{M_{x_1,x_2}}.\] 

To generalize the definition of distortion to alignment methods, we set the maximum average utility of any policy in the KL-ball as the benchmark.
For fixed $m, \cD, \beta, d, \mu$ and correlation between pairwise comparisons, the distortion of alignment method $f$ is
\[  \mathit{dist}(f) = \sup_{\cD, \piref, \tau} \limsup_{n \to \infty} \frac{\max_{\pi \in B_\tau(\piref)}\SW(\pi)}{\SW_n(f(\cdot, \piref, \tau))}.\]

  \section{Social Choice (or AI Alignment without KL Constraint)}
\label{sec:social-choice}

We begin the demonstration of our distortion framework in the social choice setting.
From the perspective of alignment, this setting is the limit where the KL constraint (equivalently, KL regularization) to the reference policy vanishes.
Hence, distortion measures whether the ``direction'' in which an alignment method pushes the pre-trained policy is aligned with average utility at all.

Moreover, the social choice setting allows us to illustrate how the Bradley-Terry assumption overcomes the pessimism of classic deterministic-choice distortion.
In the classic setting, high distortion\,---\,$\Omega(\sqrt{m})$ even for randomized voting rules and under utility-normalization assumptions~\citep{BCH+12,EKP+24}\,---\,is unavoidable because a voting rule observes no signal about \emph{preference intensity}, i.e., whether a user prefers $a$ over $b$ strongly or is merely breaking a tie between equally valued alternatives.
Random Bradley-Terry comparisons would clearly side-step this problem if we could observe many samples of each pairwise comparison \emph{for a single utility vector}: by consistency, the Bradley-Terry MLE would recover the utilities (up to an additive shift), allowing us to select the utility-maximizing alternative and achieve a perfect distortion of 1.

It is not obvious, by contrast, that random pairwise comparisons will be similarly useful in our heterogeneous setting, where each observation is drawn from a mixture of  users' Bradley-Terry models. 
Because users are not labeled and may provide as little as a single pairwise comparison, there is no hope to cluster users and estimate rewards per cluster.
Instead, a source of inspiration is an observation by \citet{CP11a} in a much simpler model, in which  each user votes for a single alternative with probability equal to their utility (which is normalized to sum to $1$).
Since the probability of the event ``$i$ votes for $x$'' equals $u_i(x)$, 
the total number of votes of alternative $x$ (i.e., 
$\sum_{i \in N} \1{\text{$i$ votes for $x$}}$)
is an unbiased estimator of its total utility.
\begin{wrapfigure}[12]{r}{0.37\textwidth}
\vspace{-.25cm}
\centering
\includegraphics[width=\linewidth]{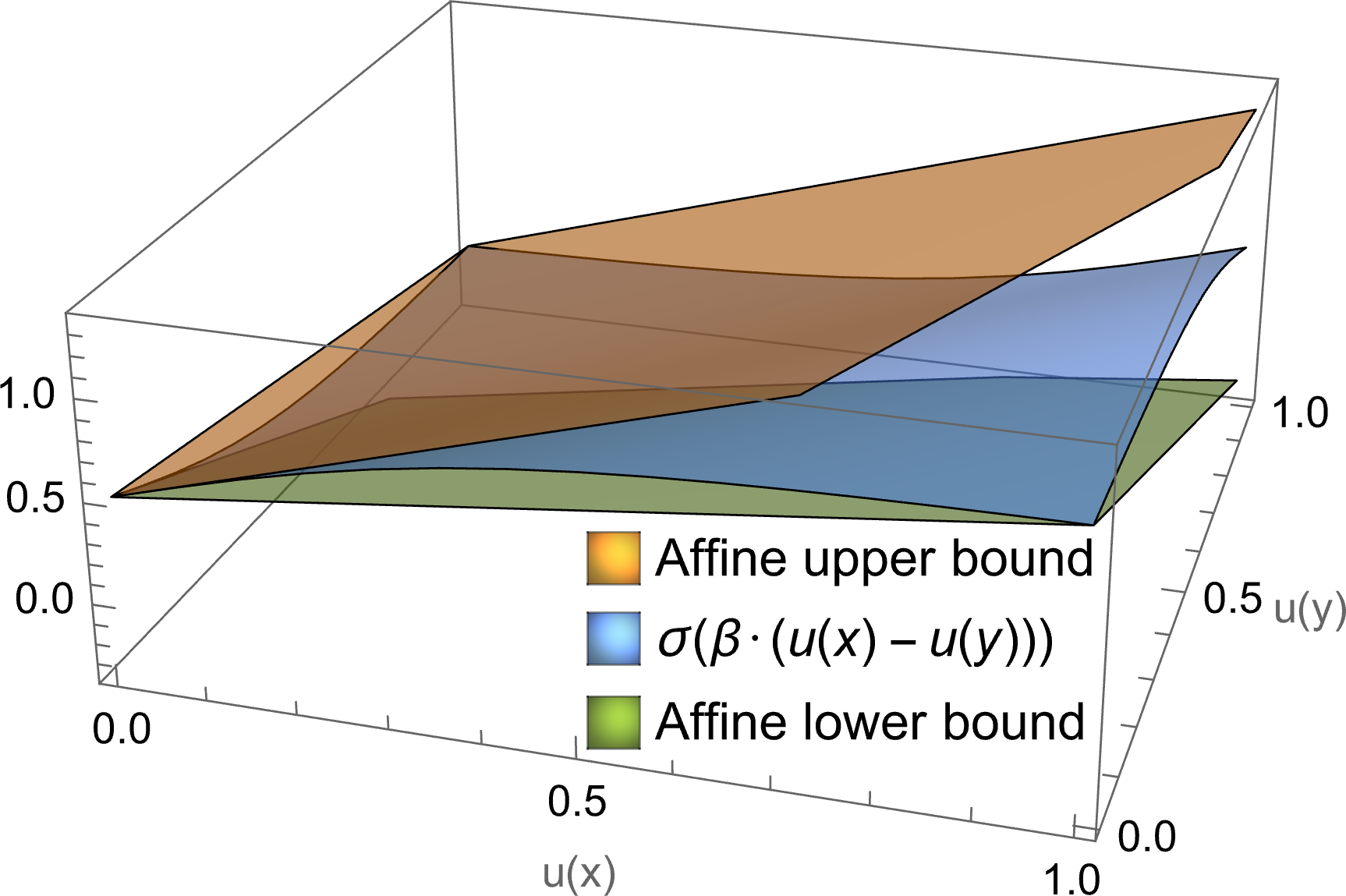}
\caption{Bounds on probability of preferring $x$ over $y$, $\beta=5$.}
\label{fig:probbounds}
\end{wrapfigure}
For many samples, this estimator concentrates around its mean and allows to select the optimal alternative.
The argument would extend if some observed events from a user had a probability that is affine in the user's utilities.
Alas, we are not so lucky: the sigmoid function in the probability of the event ``$i$ ranks $x$ over $y$'' is nonlinear, and we show in \cref{thm:lowerbound_alg_independent} that this nonlinearity makes a distortion of at least $\frac{\beta}{2} \, \frac{1 + e^{-\beta}}{1 - e^{-\beta}}>1$ unavoidable.

Though our observations' probabilities are not affine in the utilities, we can bound these probabilities by affine functions, which ultimately powers our distortion upper bounds.
As shown in \cref{fig:probbounds}, we sandwich the probability $\sigma(\beta \cdot (u(x) - u(y)))$ that a user with utilities $u$ prefers $x$ over $y$ between the affine lower bound $\beta \cdot (L \, u(x) - \ell_\beta \, u(y))+ \frac{1}{2}$ and affine upper bound $\beta \cdot (\ell_{\beta} \, u(x) - L \, u(y))+ \frac{1}{2}$, for constants $L, \ell_\beta$ defined below.
By linearity, this bound extends to a bound of the expected win-rate $p(x\succ y) = \Ex{u\sim \cD}{\sigma(\beta\cdot(u(x)-u(y)))}$ by affine expressions in $\SW(x) = \Ex{u \sim \cD}{u(x)}$ and $\SW(y) = \Ex{u \sim \cD}{u(y)}$.
We defer the lemma's formal proof to \cref{app:linearizationproof}.

\begin{restatable}[Linearization of Expected Win-Rates]{lemma}{lemsigmoidlinear}
    \label{lemma:sigmoid-linear}
    Let $L \coloneqq \sigma'(0) = 1/4$ and $\ell_{\beta} \coloneqq \frac{\sigma(\beta)-\frac{1}{2}}{\beta}=\frac{1}{2\beta}\cdot\frac{1-e^{-\beta}}{1+e^{-\beta}}$. For any pair of alternatives $x,y\in A$, we have
    \begin{align*}
    \textstyle
        \beta\cdot\left(\ell_{\beta}\cdot\SW(x)-L\cdot\SW(y)\right)\le
        p(x\succ y)-\frac{1}{2}
        \le  \beta\cdot\left(L\cdot\SW(x)-\ell_{\beta}\cdot\SW(y)\right).
    \end{align*}
    \end{restatable}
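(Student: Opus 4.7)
The plan is to split the argument into two stages: a pointwise linearization of $\sigma(\beta(a-b)) - \tfrac{1}{2}$ on $[0,1]^2$, and then an appeal to linearity of expectation. Concretely, I aim to establish that, for all $a, b \in [0,1]$,
\[
\beta\,(\ell_\beta \cdot a - L \cdot b) \;\le\; \sigma(\beta(a-b)) - \tfrac{1}{2} \;\le\; \beta\,(L \cdot a - \ell_\beta \cdot b).
\]
Setting $a = u(x)$, $b = u(y)$ and taking expectations over $u \sim \cD$ yields the lemma, because $p(x \succ y) = \mathbb{E}_u[\sigma(\beta(u(x)-u(y)))]$ and $\SW(x) = \mathbb{E}_u[u(x)]$, $\SW(y) = \mathbb{E}_u[u(y)]$.

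The one-variable ingredient is obtained from two standard properties of the sigmoid: (i) $\sigma$ is concave on $[0,\infty)$ with $\sigma'(0) = L$, and (ii) $\sigma(t) - \tfrac{1}{2}$ is odd in $t$. Concavity plus the tangent at $0$ gives $\sigma(t) - \tfrac{1}{2} \le L\,t$ on $[0, \beta]$, while the secant from $(0, 0)$ to $(\beta, \sigma(\beta) - \tfrac{1}{2})$ gives the matching lower bound $\sigma(t) - \tfrac{1}{2} \ge \ell_\beta \cdot t$ on the same interval. Reflecting through the origin, odd symmetry delivers $L\,t \le \sigma(t) - \tfrac{1}{2} \le \ell_\beta\,t$ for $t \in [-\beta, 0]$.

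To lift these to the two-variable bound, I would do a case split on the sign of $a - b$. For the upper bound, if $a \ge b$, then $\beta(a-b) \in [0,\beta]$ and the tangent inequality gives $\sigma(\beta(a-b)) - \tfrac{1}{2} \le \beta L (a - b) \le \beta(L\,a - \ell_\beta\,b)$, where the last step uses $L \ge \ell_\beta$ and $b \ge 0$. If $a < b$, then $\beta(a-b) \in [-\beta, 0]$ and the mirrored inequality yields $\sigma(\beta(a-b)) - \tfrac{1}{2} \le \beta \ell_\beta (a - b) \le \beta(L\,a - \ell_\beta\,b)$, this time using $L \ge \ell_\beta$ together with $a \ge 0$. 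The lower bound is handled analogously (or, more cleanly, by applying the upper bound to the swapped pair $(b, a)$ and negating, exploiting $\sigma(-t) - \tfrac{1}{2} = -(\sigma(t) - \tfrac{1}{2})$).

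The only subtlety is the inequality $L \ge \ell_\beta$, which is immediate once one recognizes $\ell_\beta$ as the average slope of $\sigma$ over $[0, \beta]$: by concavity of $\sigma$ on $[0, \infty)$, this average slope is dominated by the slope at $0$, namely $L$. Beyond this and the bookkeeping in the case split, the argument is mechanical; I do not expect any genuine obstacle.
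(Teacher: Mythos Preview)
Your proposal is correct and follows essentially the same approach as the paper: linearize $\sigma$ on $[-\beta,\beta]$ by tangent and secant, do the case split on the sign of $u(x)-u(y)$, use $L\ge \ell_\beta$ together with nonnegativity of the utilities to pass to $\beta(L\,u(x)-\ell_\beta\,u(y))$, and obtain the lower bound by the swap $p(x\succ y)=1-p(y\succ x)$. The only cosmetic difference is that the paper gets the upper bound on $[-\beta,0]$ directly from convexity of $\sigma$ there (chord above graph), whereas you get it by reflecting the secant bound from $[0,\beta]$ via the odd symmetry of $\sigma(t)-\tfrac12$; these are equivalent.
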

\subsection{Upper Bound on Borda Distortion}
\label{sec:borda-upper-bound}
\citet{siththaranjan2023distributional} observed that RLHF and the Borda voting rule are closely linked in that the Bradley-Terry MLE rewards are ordered by their alternatives' Borda score.
Hence, as the KL constraint relaxes, RLHF moves all of the policy's probability mass on the Borda winner.
Because Borda has infinite distortion in the classic setting~\citep{PR06a}, we would hope that distortion is more reasonable under our assumptions.
Fortunately, Borda indeed has at most $O(\beta^2)$ distortion, which we prove through two applications of our linearization lemma:
\begin{restatable}[Borda Distortion Upper Bound]{theorem}{thmbordaupper}
    \label{thm:upperbound_borda}
    For any instance $\cD$ with any number of alternatives $m$, distribution $\mu$ over alternatives, and temperature $\beta$, Borda has at most distortion
    $\big(\frac{\beta}{2}\cdot
    \frac{1+e^{-\beta}}{1-e^{-\beta}}
\big)^2=O(\beta^2).$
In the finite-sample regime, we have that 
\begin{align*}
\textstyle
        \SW_n(\borda)\ge 
        \big(\frac{2}{\beta}\cdot
            \frac{1-e^{-\beta}}{1+e^{-\beta}}
        \big)^2\cdot
        \max_{x^\star\in A} \SW(x^\star)
        -O\left( 
            \frac{1}{\beta^2}\sqrt{\frac{\log(mn\beta)}{n\cdot\min\{1,d\mu_{\min}^2\}}}
            +\frac{m\log(mn\beta)}{n\cdot \beta^2 \mu_{\min}^2}
            \right).
    \end{align*}
\end{restatable}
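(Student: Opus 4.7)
The strategy is to analyze Borda's selection in the population limit via two applications of \cref{lemma:sigmoid-linear}, then upgrade to a finite-sample bound via concentration.

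As $n\to\infty$, the empirical Borda score $\BC(x)$ concentrates---up to a bounded $[1,2]$-valued factor $2/(2{-}\mu(x))$ coming from self-paired samples---around the expected win-rate $\barf(x) \coloneqq \sum_y \mu(y)\,p(x\succ y)$ of $x$ against a random opponent drawn from $\mu$. So Borda selects (approximately) some $\hat x$ with $\barf(\hat x)\ge \barf(y)$ for every $y$; write $x^\star \coloneqq \argmax_x \SW(x)$ and $\overline{\SW} \coloneqq \sum_y \mu(y)\,\SW(y)$. \emph{First application.} Since $p(x\succ y)+p(y\succ x)=1$, the $\mu$-weighted mean of $\barf$ equals $\tfrac12$, so $\barf(\hat x)\ge \tfrac12$. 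Averaging the upper half of \cref{lemma:sigmoid-linear} against $\mu$ yields
\[0 \;\le\; \barf(\hat x)-\tfrac12 \;\le\; \beta\bigl(L\,\SW(\hat x) - \ell_\beta\,\overline{\SW}\bigr),\]
which rearranges to the self-bounding inequality $\overline{\SW}\le (L/\ell_\beta)\,\SW(\hat x)$.

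\emph{Second application.} Rewriting Borda optimality as $\barf(\hat x)\ge \barf(x^\star)$, upper-bounding the left side and lower-bounding the right side via \cref{lemma:sigmoid-linear}, and cancelling the $\tfrac12$'s gives
\[\ell_\beta\,\SW(x^\star) \;\le\; L\,\SW(\hat x) + (L-\ell_\beta)\,\overline{\SW}.\]
Substituting the first-step bound on $\overline{\SW}$ collapses the right-hand side to $(L^2/\ell_\beta)\,\SW(\hat x)$, yielding $\SW(\hat x)\ge (\ell_\beta/L)^2\,\SW(x^\star)$. Since $L/\ell_\beta = (\beta/2)\cdot(1+e^{-\beta})/(1-e^{-\beta})$, this is exactly the claimed $(L/\ell_\beta)^2=O(\beta^2)$ population-limit distortion.

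For the \emph{finite-sample} rate, I would establish uniform concentration $|\BC(x) - \mathbb{E}[\BC(x)]|\le \eps$ via Bernstein inequalities, separately handling (i) the random pair draws $(x_i^j,y_i^j)\sim \mu\times\mu$---which, via a union bound over the $m^2$ pairs and the requirement of $\Omega(\log/\mu_{\min}^2)$ samples per pair, accounts for the lower-order $m\log(mn\beta)/(n\beta^2\mu_{\min}^2)$ term---and (ii) the Bradley--Terry coin flips conditional on those pairs, giving the dominant $\sqrt{\log(mn\beta)/(n\min\{1,d\mu_{\min}^2\})}$ term. Re-running the two-step argument with this additive slack propagates errors through each linearization, and combined with the $(\ell_\beta/L)^2=\Theta(1/\beta^2)$ distortion factor produces the stated $1/\beta^2$-prefactored error. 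The main obstacle is getting sharp concentration for alternatives with small $\mu(x)$, which are compared rarely: Bernstein (exploiting that Bradley--Terry variances are bounded by $\tfrac14$) rather than Hoeffding yields the $\min\{1,d\mu_{\min}^2\}$ scaling, and the self-pair normalization $2/(2{-}\mu(x))$ must be tracked carefully so it does not degrade the leading $(\ell_\beta/L)^2$ constant.
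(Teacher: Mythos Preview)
Your proposal is correct and follows essentially the same route as the paper: the population-limit argument is identical (averaging gives $\barf(\hat x)\ge\tfrac12$, one application of \cref{lemma:sigmoid-linear} yields the self-bounding $\overline{\SW}\le(L/\ell_\beta)\SW(\hat x)$, a second application on $\barf(\hat x)\ge\barf(x^\star)$ then collapses to $(L/\ell_\beta)^2$), and the finite-sample part likewise proceeds via Bernstein-type concentration of the Borda scores and a final choice of $\delta$. One minor simplification relative to your sketch: the paper's accounting (counting each self-pair twice in the denominator of $\BC$) makes the limiting Borda score equal to $\barf(x)$ \emph{exactly}, so the $2/(2-\mu(x))$ factor you flag never appears and need not be tracked.
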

\begin{proof}[Proof sketch (full proof in \Cref{app:borda})]
For exposition, we sketch this proof for $n \to \infty$, assuming that each alternative's Borda count has converged to its expectation $\BC^\star(x) \coloneqq \sum_{y\in A}\mu(y)\cdot p(x\succ y)$.
Let $\hat{x}=\argmax_{x\in A}\BC^\star(x)$ be the Borda winner in the limit, and $x^\star=\argmax_{x\in A}\SW(x)$ be the utility maximizer.
Applying \Cref{lemma:sigmoid-linear} to the win-rates $p(\hat{x}\succ y)$ and $p(x^\star\succ y)$ for all $y\in A$, we have that
\begin{align}
    \textstyle \BC^\star(\hat{x})-\frac{1}{2}\le{}& \textstyle \sum_{y\in A}\mu(y)\cdot\beta\left(L\cdot\SW(\hat{x})-\ell_{\beta}\cdot\SW(y)\right)\nonumber\\
    ={}&\beta\left(L\cdot\SW(\hat{x})-\ell_\beta\cdot\SW(\mu)\right); \label{eq:bcahatupper} \\
    \textstyle \BC^\star(x^\star)-\frac{1}{2}\ge{}& \textstyle \sum_{y\in A}\mu(y)\cdot\beta\left(\ell_{\beta}\cdot\SW(x^\star)-L\cdot\SW(y)\right)\nonumber\\
    ={}&\beta\left(\ell_\beta\cdot\SW(x^\star)-L\cdot\SW(\mu)\right). \notag
\end{align}
Since $\BC^\star(\hat{x})\ge\BC^\star(x^\star)$, we obtain from the above two inequalities that
\begin{align}
    L\cdot\SW(\hat{x})+(L-\ell_\beta)\cdot\SW(\mu)\ge \ell_{\beta}\cdot\SW(x^\star).
    \label{eq:borda-upper-bound-1}
\end{align}
A standard averaging argument shows that $\mathbb{E}_{x \sim \mu}[\BC^\star(x)]\ge1/2$, which implies that $\BC^\star(\hat{x})$ must be at least $1/2$.
Combining this with \cref{eq:bcahatupper}, we obtain that $\SW(\mu)\le \frac{L}{\ell_{\beta}}\cdot\SW(\hat{x})$.
Substituting this into \Cref{eq:borda-upper-bound-1} (noting that $L \geq \ell_\beta$), we have that
\begin{align*}
    \textstyle \frac{L^2}{\ell_{\beta}} \SW(\hat{x}) ={}& L\cdot\SW(\hat{x})+\frac{(L-\ell_{\beta}) \, L}{\ell_{\beta}}\cdot\SW(\hat{x})\\
    \ge{}& L\cdot\SW(\hat{x})+(L-\ell_{\beta})\cdot\SW(\mu)\ge \ell_\beta\cdot\SW(x^\star),
\end{align*}
which yields the distortion guarantee $\frac{\SW(x^\star)}{\SW(\hat{x})} \leq \big(\frac{L}{\ell_{\beta}}\big)^2 = \big(\frac{\beta}{2} \frac{1+e^{-\beta}}{1-e^{-\beta}}\big)^2$.
\end{proof}

\subsection{Lower Bounds (and Upper Bound for Maximal Lotteries)}
The upper bound for Borda nested two applications of the linearization lemma.
As a result, it twice incurred a distortion factor of $\frac{L}{\ell_\beta} = \frac{\sigma'(0)}{(\sigma(\beta)-\sigma(0))/\beta}$, which measures the sigmoid function's deviation from linearity in the relevant range.
Below, we show that \emph{any} voting rule must incur this factor at least once, at least for the case of $d=1$ comparisons per user.
This distortion occurs even though the voting rule has access to infinitely many pairwise comparison samples, which shows that the nonlinearity of the Bradley-Terry model can cause a loss of the information necessary to find the utility maximizer.

The proof (in \cref{app:alg-indp-lower-bound}) constructs a user population $\cD$ in which a small minority has utility 1 for some special alternative $a$ and 0 for all other alternatives, whereas the majority has a small utility $\epsilon$ for all alternatives except for $a$, for which they have utility 0.
The sizes of these blocs are balanced such that all expected win-rates are $1/2$.
Due to the diminishing returns in the sigmoid function, the resulting average utility for $a$ is $\smash{\frac{L}{\ell_\beta}}$ times higher than that of the other alternatives.
But since the pairwise comparisons observed by a voting rule are just independent Bernoulli draws with bias $1/2$, all versions of the instance with permuted alternatives are indistinguishable. Since no voting rule can identify alternative $a$ better than random guessing, they must incur $\smash{\frac{L}{\ell_\beta}}$ distortion.

If there are $d \geq 2$ observations per user, the above argument does not apply to all voting rules because an elaborate voting rule might use the correlations within a user's comparisons to identify $a$.
We can, however, extend the lower bound to $d \geq 2$ for all voting rules that put at most $1/m$ probability mass on a \emph{Condorcet loser}, i.e., an alternative $x$ such that $\#( y \succ x) > \#(x \succ y)$ for all $y \neq x$.
This property generalizes the \emph{Condorcet loser criterion} and is satisfied by a wide range of voting rules deemed desirable, including Borda and Maximal Lotteries.
The proof uses essentially the same instance as above, slightly tipping the expected win-rates against $a$ to make it a Condorcet loser. Since the social choice setting is a special case of alignment, the lower bound extends to alignment.

\begin{restatable}[Voting Rule-Independent Distortion Lower Bound]{theorem}{thmlowerboundalgindependent}
    \label{thm:lowerbound_alg_independent}
Fix any $\beta > 0$.
If each user provides $d{=}1$ comparison, no voting rule can guarantee distortion better than $\frac{\beta}{2} \cdot \frac{1 + e^{-\beta}}{1 - e^{-\beta}}$ for large $m$.
If each user reports $d\ge2$ pairwise comparisons, any voting rule that puts at most $1/m$ probability mass on a Condorcet loser must have at least the above distortion.
\end{restatable}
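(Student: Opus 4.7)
The plan is to construct a family of hard instances parameterized by a \emph{special alternative} $a \in A$, in which $a$ has substantially higher average utility than every other alternative while the observed pairwise comparisons look essentially identical across the $m$ choices of $a$. The target ratio $\frac{L}{\ell_\beta} = \frac{\beta}{2}\cdot\frac{1+e^{-\beta}}{1-e^{-\beta}}$ is precisely the looseness of the linearization in \cref{lemma:sigmoid-linear}, and the construction is designed so that the sigmoid's diminishing returns translate that looseness into a utility gap.

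For fixed $a$, take $\mu$ uniform over $A$ and let $\cD$ place probability $p$ on the utility vector with $u(a)=1$ and $u(y)=0$ for $y \neq a$, and probability $1-p$ on the vector with $u(a)=0$ and $u(y)=\epsilon$ for $y \neq a$. Any pair $(y,y')$ with $y,y'\neq a$ automatically gives $p(y \succ y') = \tfrac{1}{2}$ by symmetry. For the mixed pair, $p(a \succ y) = p\,\sigma(\beta) + (1-p)\big(1 - \sigma(\beta\epsilon)\big)$; requiring this to equal $\tfrac{1}{2}$ and expanding $\sigma(\beta\epsilon) \approx \tfrac{1}{2}+\beta\epsilon/4$ yields $\epsilon = \frac{4 p\,\ell_\beta}{1-p} + O(p^2)$. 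Substituting, $\SW(a)/\SW(y) = p/((1-p)\,\epsilon) \to \frac{1}{4\ell_\beta} = \frac{L}{\ell_\beta}$ as $p \to 0$.

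For $d=1$, every observation is, conditionally on its sampled pair, an independent $\Bern(\tfrac{1}{2})$, so the joint distribution of the data is identical across all $m$ choices of $a$. Hence, for every alternative $b \in A$, the scalar $q(b) \coloneqq \mathbb{E}[f(\text{data})(b)]$ is the same function in every instance. Since $\sum_{b \in A} q(b) = 1$, some $a^\star$ satisfies $q(a^\star) \leq 1/m$; picking this $a^\star$ as the special alternative gives
\begin{equation*}
\SW_n(f) = q(a^\star)\,p + (1 - q(a^\star))(1-p)\,\epsilon \leq \tfrac{p}{m} + (1-p)\,\epsilon,
\end{equation*}
so the distortion is at least $p/\big(p/m+(1-p)\epsilon\big) \to L/\ell_\beta$ in the limits $m\to\infty$ and then $p\to 0$. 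For $d \geq 2$, within-user correlations can make the data distribution depend on $a$, so the symmetry argument breaks. I would instead perturb the construction so that $p(a\succ y)=\tfrac{1}{2}-\delta$ strictly for some small $\delta > 0$ (re-calibrating $\epsilon$; the asymptotic utility ratio survives as $p,\delta\to 0$). For fixed $\delta$ and $n\to\infty$, the empirical counts concentrate, so $\#(y\succ a) > \#(a\succ y)$ for every $y \neq a$ with probability $\to 1$, making $a$ an empirical Condorcet loser; the hypothesis on $f$ then forces $\mathbb{E}[f(\text{data})(a)] \leq 1/m + o(1)$, and the welfare bound above carries over.

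The main hurdle is calibrating the expansions and the order of limits. I must verify that the $O(p^2)$ correction in the $p$-$\epsilon$ relation is benign for the utility ratio; and in the $d \geq 2$ case, that $\delta$ can be made small enough to preserve that ratio while still being large enough (relative to $n$) to guarantee that $a$ is an empirical Condorcet loser with probability $\to 1$. The $d=1$ symmetry argument also needs to be phrased carefully so that a single instance realizes the lower bound, rather than merely averaging distortion across permutations of $a$.
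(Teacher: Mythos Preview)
Your proposal is correct and follows essentially the same approach as the paper: the same two-type population (a tiny minority loving the special alternative $a$, a majority mildly preferring the rest) calibrated so that all pairwise win-rates equal $1/2$, the same pigeonhole-over-alternatives argument for $d=1$, and the same perturbation to make $a$ a Condorcet loser for $d\ge 2$. The only cosmetic differences are that the paper parameterizes by $\epsilon$ (solving for the minority's mass exactly rather than via Taylor expansion) and handles the $n$-dependence of the pigeonholed alternative by restricting to the infinitely many $n$ on which $q_n(a)\le 1/m$, which is exactly the care you flag as a ``main hurdle.''
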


The Maximal Lotteries voting rule exactly matches the above lower bound of $\frac{\beta}{2} \cdot \frac{1 + e^{-\beta}}{1 - e^{-\beta}}$.
We omit the proof here, as it follows as a direct corollary of NLHF's upper bound (\cref{thm:upperbound_nash}) in the next section.
\begin{corollary}[of \Cref{thm:upperbound_nash}]
\label{cor:upperbound_maximallotteries}
The Maximal Lotteries voting rule has a distortion of $\frac{\beta}{2} \cdot \frac{1 + e^{-\beta}}{1 - e^{-\beta}}$.
\end{corollary}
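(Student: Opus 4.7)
The plan is to obtain the corollary by combining the two adjacent theorems, each supplying one direction of the claimed equality.

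\textbf{Upper bound via \Cref{thm:upperbound_nash}.} I would note that the Maximal Lotteries rule is exactly NLHF in the limit $\tau \to \infty$: as the KL radius grows, the ball $B_\tau(\piref)$ exhausts $\Delta(A)$, the NLHF policy $\pinash$ collapses to the unconstrained symmetric minimax strategy of the margin game (i.e.\ Maximal Lotteries), and the alignment benchmark $\max_{\pi \in B_\tau(\piref)} \SW(\pi)$ converges to $\max_{\pi \in \Delta(A)} \SW(\pi) = \max_{x \in A} \SW(x)$ by linearity of $\SW$. Because \Cref{thm:upperbound_nash} bounds the distortion ratio uniformly in $\tau$, taking the limit yields $\mathit{dist}(\maxlottery) \le \frac{\beta}{2} \cdot \frac{1+e^{-\beta}}{1-e^{-\beta}}$.

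\textbf{Lower bound via \Cref{thm:lowerbound_alg_independent}.} For $d = 1$ the theorem is algorithm-independent and applies directly to Maximal Lotteries. For $d \ge 2$ I would verify that Maximal Lotteries places zero probability (in particular, at most $1/m$) on any Condorcet loser. If $x$ is a Condorcet loser, then $M_{x,y} < 0$ for every $y \neq x$ while $M_{x,x} = 0$. The symmetric zero-sum margin game has value $0$, so any maximin strategy $\pi^{ML}$ with $\pi^{ML}(x) > 0$ must admit $x$ as a best response, i.e.\ $\sum_y \pi^{ML}(y)\, M_{x,y} = 0$. Combined with the strict negativity off the diagonal, this forces $\pi^{ML} = \delta_x$; but then the opponent strictly profits by playing any $y \neq x$ (since $M_{y,x} > 0$), contradicting equilibrium. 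Hence \Cref{thm:lowerbound_alg_independent} supplies the matching lower bound.

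\textbf{Expected sticking point and shortcut.} The only subtlety is making the $\tau \to \infty$ reduction airtight---ensuring that the empirical Nash equilibrium under an expanding KL ball converges to the unconstrained one and that the distortion ratio is continuous in $\tau$. One can bypass this entirely with a direct one-line argument in the social choice setting: since $\pi^{ML}$ is a maximin strategy in the symmetric margin game of value $0$, for $x^\star \in \argmax_{x \in A} \SW(x)$ we have $\mathbb{E}_{x_1 \sim \pi^{ML}}\!\left[2\, p(x_1 \succ x^\star) - 1\right] \ge 0$; applying the upper linearization of \Cref{lemma:sigmoid-linear} to each $p(x_1 \succ x^\star)$ and taking expectation gives $L \cdot \SW(\pi^{ML}) \ge \ell_\beta \cdot \SW(x^\star)$, which is exactly the claimed distortion of $L/\ell_\beta = \frac{\beta}{2} \cdot \frac{1+e^{-\beta}}{1-e^{-\beta}}$.
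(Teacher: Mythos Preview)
Your proposal is correct and matches the paper's approach: the paper simply notes that the social choice setting is a special case of alignment (so \Cref{thm:upperbound_nash} yields the upper bound directly, with no limiting argument needed once $\tau$ is large enough that $B_\tau(\piref)=\Delta(A)$) and that \Cref{thm:lowerbound_alg_independent} supplies the matching lower bound because Maximal Lotteries satisfies the probabilistic Condorcet loser criterion. Your explicit verification of that criterion and the one-line shortcut via the linearization lemma are exactly the details the paper leaves implicit.
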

The Borda rule, in contrast, does not match this optimal distortion bound, as shown by the following bound that we prove and state in full detail in \cref{app:borda-lower}.
\begin{theorem}[Borda Distortion Lower Bound, Informally]
\label{thm:bordalower}
For any $\beta > 0$ and $m \geq 3$, the distortion guaranteed by Borda (and, hence, RLHF) is greater than and bounded away from $\frac{\beta}{2} \frac{1 + e^{-\beta}}{1 - e^{-\beta}}$.
In particular, as $\beta \to \infty$, this distortion guarantee is at least $(1 - o(1)) \cdot \beta$.
\end{theorem}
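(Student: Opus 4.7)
The plan is to exhibit, for every $\beta > 0$, a specific instance $\mathcal{D}_\beta$ on which Borda's distortion strictly exceeds $\frac{\beta}{2}\cdot\frac{1+e^{-\beta}}{1-e^{-\beta}} = L/\ell_\beta$, with the gap growing so that it reaches $(1-o(1))\cdot\beta$ as $\beta\to\infty$. My starting point is the two-type family underlying the algorithm-independent lower bound (\Cref{thm:lowerbound_alg_independent}): a fraction $q$ of users have $u(a^\star)=1,\ u(b_i)=0$, and the remaining $1-q$ users have $u(a^\star)=0,\ u(b_i)=\epsilon$. Setting $T\coloneqq\sigma(\beta\epsilon)-1/2$, the computation
\[
\BC(b_i)-\BC(a^\star) \;=\; \tfrac{1}{2}-p(a^\star \succ b_i) \;=\; (1-q)\,T - q\,\beta\ell_\beta
\]
shows that for $q$ just below $T/(\beta\ell_\beta+T)$ and $\epsilon$ not too small, Borda deterministically selects one of the $b_i$, producing distortion $q/((1-q)\epsilon)$.

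First I would prove the ``bounded away'' part of the claim. By carefully tracking the $O(1/m)$ correction terms in the $\BC$ computation and using that $(\sigma(\beta\epsilon)-1/2)/\epsilon$ is \emph{strictly} less than $\beta L$ for every $\epsilon>0$ (a consequence of the strict concavity of $\sigma$ on $\mathbb{R}_{\ge 0}$), I would identify an explicit $(q,\epsilon,m)$ triple for which $\SW(a^\star)/\SW(b_i)$ exceeds $L/\ell_\beta$ by a concrete positive amount. The needed extra slack comes from two sources simultaneously: (i) choosing $\epsilon$ bounded away from the $\epsilon\to 0$ limit in which the $(\sigma(\beta\epsilon)-1/2)/\epsilon \to \beta L$ bound would become tight but not strict, and (ii) breaking the residual symmetry in the $b_i$'s (e.g., by adding a third user type that tilts the pairwise scores among the $b_i$'s without changing $\SW(a^\star)$), so that Borda's deterministic winner is strictly worse than the uniform average over the $b_i$'s.

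Second, for the asymptotic $(1-o(1))\beta$ bound, I would choose $\epsilon$ and $q$ as functions of $\beta$. Writing $\epsilon = c/\beta$ for a constant $c$, the distortion at the Borda boundary becomes $(\sigma(c)-1/2)/(c\,\ell_\beta)$, which equals $2\beta\cdot(\sigma(c)-1/2)/c\cdot(1+o(1))$ as $\beta\to\infty$. Optimizing this expression alone recovers only the $\beta/2$ algorithm-independent floor. To push the constant from $1/2$ up to $1$, I would amplify the construction by placing the above sub-instance ``on top of'' a second auxiliary sub-instance that contributes a further distortion factor of $2$: concretely, I would arrange $m\ge 3$ alternatives into one ``challenger'' $a^\star$, a single decoy $b$ carrying a moderate utility stream from one group of users, and $m-2$ clones whose role is to pull down $\BC(a^\star)$ relative to $\BC(b)$ by dominating the Borda average. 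The two effects compose multiplicatively in the competitive-ratio bookkeeping, and choosing the intensities along each axis to scale as $\Theta(1/\beta)$ yields distortion $(1-o(1))\cdot\beta$, matching the claim up to lower-order terms.

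The main obstacle is the factor-of-two improvement over the algorithm-independent bound. A plain two-type construction saturates at $L/\ell_\beta$, which asymptotes to $\beta/2$, precisely the Max Lotteries upper bound (\Cref{cor:upperbound_maximallotteries}). The non-trivial step is to engineer an instance in which Borda's score for every non-optimal alternative lies on the ``wrong side'' of $1/2$ \emph{even after} the comparisons among the non-optimal alternatives are averaged in. The cleanest way to do so is to design the sub-populations so that the upper linearization of $\BC(\hat x)$ and the lower linearization of $\BC(x^\star)$ in \Cref{lemma:sigmoid-linear} are \emph{simultaneously} close to tight on their respective sides, which is what the two-sub-instance layering is meant to achieve. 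Verifying this requires a careful multi-parameter optimization, analogous to the one in the proof of \Cref{thm:upperbound_borda} but performed in the opposite direction to extract a matching lower bound.
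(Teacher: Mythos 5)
Your plan correctly identifies the crux\,---\,pushing the constant from $\tfrac{1}{2}$ up to $1$\,---\,and correctly observes that the plain two-type instance from \cref{thm:lowerbound_alg_independent} only attains $L/\ell_\beta$ in the limit. But both of your proposed mechanisms for exceeding $L/\ell_\beta$ have genuine problems. First, the ``bounded away'' step runs in the wrong direction: in the two-type instance the distortion at the balance point equals $\frac{\sigma(\beta\epsilon)-1/2}{\epsilon\,(\sigma(\beta)-1/2)}$, and the strict inequality $(\sigma(\beta\epsilon)-1/2)/\epsilon<\beta L$ that you invoke shows this quantity is \emph{strictly below} $L/\ell_\beta$ for every fixed $\epsilon>0$, approaching it only as $\epsilon\to0$. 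Taking $\epsilon$ bounded away from $0$ therefore loses ground rather than gaining it. Likewise, tilting the Borda scores among the $b_i$'s cannot help while the $b_i$'s share the same average utility (Borda then picks an equally bad winner), and if you perturb their utilities you have not explained why Borda would select a below-average one. Second, the asymptotic step rests on the assertion that an auxiliary sub-instance ``contributes a further distortion factor of $2$'' and that the ``two effects compose multiplicatively,'' but no mechanism is given for why such a composition is legitimate; this is precisely the hard part of the proof and cannot be taken on faith.

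For comparison, the paper's proof uses three alternatives $(a,b,c)$ and three user types, and the factor of (almost) $2$ comes from a specific utility structure rather than from composing two instances: the optimal alternative $a$ receives utility $1-\gamma$ from a group that values the decoy $b$ at $1$ (so this group is nearly indifferent between $a$ and $b$, contributing only $\sigma(\beta\gamma)-\tfrac{1}{2}$ to the comparison, yet handing $a$ almost its full utility mass), and utility $1$ from a second group that strongly prefers $a$ to $b$; the group sizes are tuned so that $p(a\succ b)=\tfrac{1}{2}$ exactly. The low-utility alternative $c$ is made the Borda winner by concentrating $\mu$ on $b$ and tipping $p(c\succ b)$ just above $\tfrac{1}{2}$. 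An l'H\^opital computation as $\epsilon\to0$ then yields the bound $\frac{L}{\ell_\beta}\bigl(1-\gamma+\frac{\sigma(\beta\gamma)-1/2}{\sigma(\beta)-1/2}\bigr)$, whose parenthesized factor is strictly greater than $1$ for every $\gamma\in(0,1)$ (giving ``bounded away'') and tends to $2$ for $\gamma=\log(\beta+1)/\beta$ as $\beta\to\infty$ (giving $(1-o(1))\,\beta$). The idea your sketch is missing is exactly this: the champion must harvest utility from \emph{two} populations while all of its pairwise comparisons remain balanced at $\tfrac{1}{2}$, with one population paying nearly full utility at negligible cost in win-rate.
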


\subsection{Discussion}
\label{sec:discussion-arena}
\paragraph{Implications for Social Choice Theory.}
Though we have presented these bounds in terms of their implications for alignment, they are of independent interest to social choice theory.
In our view, a major drawback of the distortion framework (with nonnegative utilities) is that it leads to unreasonably high distortion, and some unnatural prescriptions.
For example, any deterministic voting rule has distortion $\Omega(m^2)$, where optimal distortion is achieved by Plurality~\citep{CP11a} (a rule widely disregarded by social choice theorists) wheras Borda and all Condorcet consistent rules have infinite distortion~\citep{PR06a}. 
These limitations may explain in part why recent research activity \citep[e.g.,][]{GHS20,CRW+24,goyal2024metric} has focused on the \emph{metric distortion} setting~\citep{ABE+18,AFS+21}, in which many natural voting rules have constant distortion.
But this comes at the cost of expressiveness: the metric setting assumes utilities are (negated) distances satisfying the triangle inequality. For example, the metric distortion setting implies that, if $i$ has high utility for $x$ and $y$, and $j$ has high utility for $x$, then $j$ must also have high utility for $y$, which need not be the case in our setting.
We see our assumption of a user-specific random choice model as another way to make distortion a more practical criterion for choosing between voting rules.

\paragraph{Implications for AI Leaderboards.}
For the AI community, our social-choice distortion results also have implications beyond being a special case of alignment.
A notable example is Chatbot Arena’s evaluation of language models~\citep{chiang2024chatbot}, where users submit prompts, are shown the responses of two anonymized models, and select their preferred response.
The leaderboard aggregates these pairwise comparisons by fitting a Bradley-Terry model via MLE, and ranking the models according to their estimated reward.
As in the alignment setting, this approach assumes a single latent notion of LLM quality, ignoring the fact that LLMs are used by diverse users for a wide range of tasks, each with their own goals, preferences, and prompt styles.
This setting fits neatly into our social-choice model, where $\cD$ captures a random user's utility for the responses of different models to a random prompt (drawn from an arbitrary joint distribution over users and prompts), and $\SW$ quantified the average utility a model delivers for a random user and task, which we call the model's \emph{usability} in this section.

Since Chatbot Arena and RLHF are based on the same MLE, our distortion bounds on RHLF in the social choice setting imply that the usability of the top-ranked language model (i.e., the Borda winner) may be $(1 - o(1)) \cdot \beta$ times worse than the usability of some other ranked model (\cref{thm:bordalower}) (but at most by a $O(\beta^2)$ factor, see \cref{thm:upperbound_borda}).
Our later results in an extended setting in which comparison pairs are drawn in a correlated way (\cref{sec:discussion_modeling_choices}) show that Chatbot Arena's ranking is highly sensitive to the distribution of LLM pairs.
For certain correlated distributions, the gap in usability could be unbounded (\cref{thm:unbounded-correlated-sampling}), which is concerning since Chatbot Arena adaptively oversamples new and highly ranked models.

These findings suggest that current leaderboard rankings may not fully reflect true model quality.
Could alternative aggregation rules, such as Maximal Lotteries or the Copeland voting rule, provide more accurate assessments of model usability and be more robust to the choice of sampling distribution?
Does adaptive sampling introduce systematic biases that exacerbate the distortion of current pipelines? Addressing these questions is an important direction for future work to ensure the fidelity of leaderboard-based evaluations.
  
  \section{AI Alignment with KL Constraint}
\label{sec:rlhf}

We now tackle the general alignment setting, in which the output policy $\pi$ must be chosen within a prescribed KL divergence of the reference policy $\piref$.
This setting is more challenging than the social choice setting because even an alignment method that would choose high-utility alternatives in the absence of constraints might make poor use of a finite KL budget.
\subsection{Lower Bound for RLHF}
Before presenting the optimal distortion upper bound for NLHF, we illustrate the pitfalls of the alignment setting with a lower bound on RLHF.
This bound shows that a KL constraint can cause RLHF to have exponential distortion in $\beta$, exceeding its quadratic upper bound in the social choice setting (\cref{thm:upperbound_borda}).

\begin{restatable}[RLHF Distortion Lower Bound]{theorem}{thmlowerboundppo}
    \label{thm:lowerbound_ppo}
    For $m\ge3$, there is a sequence of alignment problems on which the distortion of RLHF scales as $e^{\Omega(\beta)}$ in $\beta$.
\end{restatable}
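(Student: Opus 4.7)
The plan is constructive: for each sufficiently large $\beta$, I would exhibit an alignment problem with $m=3$ alternatives, a reference policy $\piref$, and a KL budget $\tau$ on which RLHF's distortion is $e^{\Omega(\beta)}$. The underlying mechanism relies on the closed-form Lagrangian solution to RLHF's KL-constrained reward maximization, $\pippo(x)\propto \piref(x)\, e^{\lambda\, r(x)}$, where $\lambda$ is the multiplier corresponding to the KL constraint. When the MLE reward gap between a ``bad'' alternative $x^{\mathsf{bad}}$ and a ``good'' alternative $x^{\mathsf{good}}$ scales as $\Theta(\beta)$ and $\lambda$ is bounded away from $0$, this softmax turns the linear-in-$\beta$ reward gap into an $e^{\Theta(\beta)}$ ratio between the two policy masses, while a competing policy $\pi^{\star}\in B_\tau(\piref)$\,---\,for example a mild perturbation of $\piref$ toward $x^{\mathsf{good}}$\,---\,can still achieve $\SW(\pi^{\star}) = \Omega(1)$.

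Concretely, the construction has three coordinated pieces. First, I would pick the utility distribution $\cD$ as a mixture of a few utility vectors so that (i) $\SW(x^{\mathsf{good}}) = \Omega(1)$ while $\SW(x^{\mathsf{bad}}) = o(1)$, and (ii) the Bradley--Terry MLE (which ignores subpopulation structure and must reconcile all pairwise win rates through a single reward vector) ranks $x^{\mathsf{bad}}$ strictly above $x^{\mathsf{good}}$ with a reward gap of order $\beta$. Second, $\piref$ is chosen to put exponentially small mass on $x^{\mathsf{good}}$, amplifying the softmax tilt against it. Third, $\tau$ is set large enough that $\pi^{\star}$ attaining $\SW = \Omega(1)$ is feasible, yet small enough that the Lagrange multiplier $\lambda$ remains bounded away from $0$. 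Combining these ingredients, $\pippo(x^{\mathsf{good}})/\pippo(x^{\mathsf{bad}}) = (\piref(x^{\mathsf{good}})/\piref(x^{\mathsf{bad}}))\cdot e^{\lambda(r(x^{\mathsf{good}})-r(x^{\mathsf{bad}}))}$ is $e^{-\Omega(\beta)}$, so $\SW(\pippo) \leq o(1) + e^{-\Omega(\beta)}\cdot \SW(x^{\mathsf{good}}) = e^{-\Omega(\beta)}$, giving distortion $\SW(\pi^{\star})/\SW(\pippo) = e^{\Omega(\beta)}$.

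The main obstacle is achieving step (ii): \cref{lemma:sigmoid-linear} tightly couples pairwise MLE reward gaps to average-utility differences, so a two-alternative mechanism cannot produce an $\Omega(\beta)$ reward gap in the wrong direction\,---\,driving $p(x^{\mathsf{bad}}\succ x^{\mathsf{good}})$ close to $1$ would force $\SW(x^{\mathsf{bad}})$ to be genuinely larger than $\SW(x^{\mathsf{good}})$. The resolution is to exploit the multi-alternative nature of the MLE: by introducing a third alternative $z$ whose pairwise statistics against $x^{\mathsf{bad}}$ and $x^{\mathsf{good}}$ are mutually inconsistent with any single Bradley--Terry model, the MLE's single-reward compromise can inflate $r(x^{\mathsf{bad}})$ and depress $r(x^{\mathsf{good}})$ well beyond what the direct $x^{\mathsf{bad}}$-vs-$x^{\mathsf{good}}$ comparisons would allow, mirroring the mechanism behind the Borda lower bound (\cref{thm:bordalower}). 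The delicate part of the proof is then verifying that the subpopulation mixing, the asymmetric $\piref$, and the choice of $\tau$ compound in the same direction to yield a genuinely exponential (rather than polynomial) distortion, rather than the polynomial gap one obtains from any single one of these effects in isolation.
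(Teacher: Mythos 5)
Your high-level mechanism is the right one and matches the paper's: a third alternative (in the paper, a heavily oversampled block of $c$-type alternatives) makes the pairwise win-rates inconsistent with any single Bradley--Terry model, so the MLE misorders two alternatives $a,b$; the reference policy then starves the third alternative of mass so that the KL ball essentially only contains mixtures of $a$ and $b$, trapping RLHF with the misordered pair. But your quantitative plan for extracting an \emph{exponential} gap has a genuine hole. You take $\SW(x^{\mathsf{bad}})=o(1)$ and conclude $\SW(\pippo)\le o(1)+e^{-\Omega(\beta)}\cdot\SW(x^{\mathsf{good}})=e^{-\Omega(\beta)}$; the last equality is false, since the $o(1)$ term (which is at least $\pippo(x^{\mathsf{bad}})\cdot\SW(x^{\mathsf{bad}})\approx\SW(x^{\mathsf{bad}})$) dominates. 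However sharply the softmax concentrates on $x^{\mathsf{bad}}$, you always have $\SW(\pippo)\ge(1-o(1))\,\SW(x^{\mathsf{bad}})$, so the distortion is at most roughly $\SW(\pi^\star)/\SW(x^{\mathsf{bad}})$, which with $\SW(x^{\mathsf{bad}})=o(1)$ is super-constant but not exponential. The exponential must come from the \emph{utility} of the alternative RLHF selects being exponentially small, not from softmax amplification of a $\Theta(\beta)$ reward gap. The paper arranges exactly this: the bloc that prefers $b$ has probability $\delta=\Theta(e^{-\beta})$, so $\SW(b)=\Theta(e^{-\beta})$ while $\SW(a)=\Theta(1/\beta)$, and the reward reversal $r(b)>r(a)$ only needs to hold in sign (the limiting Borda gap is merely $\Omega(1/m)$), since the point mass on $b$ is feasible and reward-maximization then places all but $e^{-\beta}$ mass on $b$ for free.

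A second, related design flaw: you put exponentially small reference mass on $x^{\mathsf{good}}$ while also demanding a feasible benchmark $\pi^\star$ with $\Omega(1)$ mass on it, which forces $\tau=\Omega(\beta)$; at that budget the point mass on $x^{\mathsf{bad}}$ is also feasible, the constraint geometry you rely on (a multiplier $\lambda$ bounded away from $0$) is no longer guaranteed, and in any case the argument collapses back to the issue above. The paper inverts this: the tiny reference mass goes on the \emph{majority-favorite, high-reward} alternative $c$, precisely to exclude it from the feasible set for both RLHF and the benchmark (with a constant budget $\tau=1$), while $a$ and $b$ each retain reference mass $\approx 1/2$ so the benchmark $\approx$ point mass on $a$ is cheaply feasible. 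To repair your proof you would need to (i) make $\SW(x^{\mathsf{bad}})=e^{-\Omega(\beta)}$ via an exponentially small sub-population, (ii) move the starved reference mass onto the high-utility/high-reward alternative rather than onto $x^{\mathsf{good}}$, and (iii) drop the requirement of a $\Theta(\beta)$ reward gap, which is neither achievable in the needed direction here nor necessary.
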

\begin{proof}[Proof sketch (full proof in \Cref{appendix:lowerbound_ppo})]
For ease of exposition, consider an instance with three alternatives $a, b, c$ where $\mu(c)$ is about $e^{\beta}$ times larger than $\mu(a) = \mu(b)$.\footnote{To avoid such unbalanced $\mu$, one could equivalently copy alternative $c$ many times and let $\mu$ be uniform.}
Let the population consist of a tiny minority (a $\Theta(e^{\beta})$ fraction) with utilities $u(a)=0, u(b)=1, u(c)=0$, and a large majority with utilities $u(a)=\frac{1}{\beta}, u(b)=0, u(c)=1$.
Both $a$ and $b$ are likely to be beaten by $c$, but by carefully choosing the size of the minority, we can make $p(b \succ c) > p(a \succ c)$, i.e., we can make $b$'s advantage of being preferred by the minority outweigh $a$'s advantage of being slightly less dispreferred by the majority.
Since $\mu(c)$ is so much larger than $\mu(a), \mu(b)$, the vast majority of pairwise comparisons involving $a$ or $b$ are against $c$.
As a result, the MLE reward for $b$ will be higher than for $a$, even though $\SW(a) =\Theta( \frac{1}{\beta})$ is exponentially larger than $\SW(b) = \Theta(e^\beta)$.
(In the social choice setting, this would not be a problem because $c$ has even higher average utility and higher reward.)

The lower bound arises for a reference policy that puts a tiny probability mass $\eps$ on $c$, and $\frac{1-\eps}{2}$ probability mass each on $a$ and $b$, together with a KL constraint of $\tau = \log 2$.
Now $\kldiv{\pi}{\piref} = \pi(a) \log\frac{\pi(a)}{(1-\eps)/2} + \pi(b) \log\frac{\pi(b)}{(1-\eps)/2} + \pi(c) \log\frac{\pi(c)}{\eps}$.
Since $\eps$ is very small, $\pi(c)$ cannot be increased by enough to make a meaningful difference on the achievable utility; but the KL budget essentially allows to spread the probability mass of $\pi$ freely between $a$ and $b$.
Since $b$ has a higher MLE reward, RLHF puts almost all of $\pi$'s mass on $b$, which yields exponentially less utility than the utility-maximizing policy in the KL ball, which puts almost all mass on $a$.
\end{proof}

This bound formalizes a key limitation of the \emph{reward-based} approach inherent to RLHF.
The MLE phase of RLHF attempts to fit rewards to the observed comparisons, whose frequencies are determined by $\mu$.
Due to preference heterogeneity, not all three pairwise win-rates can be simultaneously fit by a reward vector, so the MLE sacrifices accuracy on the rarely observed pair $\{a,b\}$ for higher accuracy of comparisons involving $c$.
By placing so little mass on $c$, our choice of $\piref$ forces RLHF to choose between the misrepresented alternatives $a$ and $b$, causing it to make a high-distortion choice.

Our lower bound exploits that the distribution $\mu$ governing the frequencies of comparison pairs differs greatly from the reference policy $\piref$.
We leave open to characterize RLHF's distortion under the assumption that $\mu = \piref$, for which we only know the lower bound \cref{thm:bordalower}.

\subsection{Distortion of NLHF}
While we saw above that a mismatch between input distribution $\mu$ and reference policy $\piref$ can lead RLHF towards highly suboptimal policies, NLHF has no such problem.
Below, we show that, across all settings of our model, NLHF's distortion exactly matches the lower bound from \cref{thm:lowerbound_alg_independent}.

Despite the generality of this result, the proof is no harder than our upper bound for Borda in the social choice setting and involves only a single application of the linearization lemma.
It also highlights a key advantage over RLHF's reward-based approach:
Since the NLHF policy is computed as a Nash-equilibrium strategy in a game where the opponent might select any policy in the KL ball, the NLHF automatically ``hedges'' to perform well in expectation against all such policies, including the utility-maximizing benchmark $\pi^\star$.
Since the social choice setting is a special case of alignment, this theorem immediately implies the distortion upper bound for Maximal Lotteries (\cref{cor:upperbound_maximallotteries}), and both NLHF and Maximal Lotteries are minimax optimal by the lower bound in \cref{thm:lowerbound_alg_independent}.

\begin{restatable}[NLHF Distortion Upper Bound]{theorem}{thmupperboundnash}
    \label{thm:upperbound_nash}
    For any instance $\cD$ and any $m$, data distribution $\mu$, temperature $\beta$ of the Bradley-Terry model, and any reference policy $\piref$ and KL budget $\tau$, we have
        $\distortion(\nash) \le \frac{\beta}{2}\cdot\frac{1+e^{-\beta}}{1-e^{-\beta}}.$
    In the finite-sample regime, we have
    \begin{align*}
    \textstyle
        \SW_n(\nash)\ge \big(\frac{2}{\beta}\cdot
            \frac{1-e^{-\beta}}{1+e^{-\beta}}\big)
        \cdot
        \max_{\pistar\in\ball_{\tau}(\piref)} \SW(\pistar)
        -O\Big( 
            \frac{1}{\beta}\sqrt{\frac{\log(mn)}{n\cdot\min\{1,\,d\cdot\mu_{\min}^2\}}}+\frac{\log(mn)}{n\cdot\beta\mu_{\min}^2}
            \Big).
    \end{align*}
\end{restatable}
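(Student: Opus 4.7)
The plan hinges on a clean structural property of NLHF that sidesteps the fragility of RLHF's reward fitting. Both the empirical margin matrix $M$ and its population version $M^\star_{x,y}\coloneqq 2\,p(x\succ y)-1$ are antisymmetric, and both players in the NLHF game share the same convex constraint set $\ball_\tau(\piref)$. Hence the value of the (empirical or population) game is zero by a standard symmetry argument: any strategy is exactly countered by the opponent mirroring it, yielding expected payoff $0$ by antisymmetry. Consequently the empirical Nash strategy $\pinash$ satisfies the ``hedging'' inequality
\[
\Ex{x_1\sim\pinash,\,x_2\sim\pi}{M_{x_1,x_2}}\;\ge\;0 \qquad \text{for every } \pi\in \ball_\tau(\piref),
\]
which I will apply at the utility maximizer $\pistar \coloneqq \argmax_{\pi\in \ball_\tau(\piref)}\SW(\pi)$.

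Assume, as will be verified in the concentration step, that $\max_{x,y}|M_{x,y}-M^\star_{x,y}|\le\eps$ with high probability. Transferring the hedging inequality from $M$ to $M^\star$ costs at most $2\eps$, so $\Ex{x_1\sim\pinash,\,x_2\sim\pistar}{M^\star_{x_1,x_2}}\ge -2\eps$. Applying the upper bound in \Cref{lemma:sigmoid-linear} pointwise to $M^\star_{x,y}=2\bigl(p(x\succ y)-\tfrac12\bigr)$ and taking expectation over $(x_1,x_2)\sim\pinash\otimes\pistar$ gives
\[
\Ex{x_1\sim\pinash,\,x_2\sim\pistar}{M^\star_{x_1,x_2}}\;\le\;2\beta\bigl(L\cdot\SW(\pinash)-\ell_\beta\cdot\SW(\pistar)\bigr).
\]
Chaining the two bounds and solving for $\SW(\pinash)$ yields $\SW(\pinash)\ge(\ell_\beta/L)\,\SW(\pistar)-O\!\bigl(\eps/(\beta L)\bigr)$. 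Since $\ell_\beta/L=\tfrac{2}{\beta}\cdot\tfrac{1-e^{-\beta}}{1+e^{-\beta}}$, this is exactly the distortion constant in the theorem. Sending $n\to\infty$ so that $\eps\to 0$ recovers the clean distortion statement, while keeping $\eps$ finite and averaging over the concentration event (with the trivial bound $\SW(\pinash)\ge 0$ on the complementary low-probability event) gives the finite-sample form.

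The main obstacle is the uniform concentration of the ratio $M_{x,y}=\frac{\#(x\succ y)-\#(y\succ x)}{\#(x\succ y)+\#(y\succ x)}$, which is simultaneously delicate because its denominator $N_{xy}$ can be small when $\mu_{\min}$ is small, and because comparisons within a single user may be arbitrarily correlated. The plan is to write $W_{xy}-M^\star_{x,y}\,N_{xy}=\sum_{i=1}^n\bigl(Z_i-M^\star_{x,y}\,N_{xy,i}\bigr)$ as a sum of $n$ independent, mean-zero terms bounded in $[-d,d]$ (one per user), then apply Hoeffding to obtain deviations of order $d\sqrt{n\log(m/\delta)}$; a Chernoff bound yields $N_{xy}\gtrsim nd\,\mu(x)\mu(y)$ provided $nd\mu_{\min}^2\gtrsim\log(m/\delta)$. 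Dividing and taking a union bound over the $O(m^2)$ pairs produces
\[
\max_{x,y}|M_{x,y}-M^\star_{x,y}| \;=\; O\!\left(\sqrt{\tfrac{\log(mn)}{n\cdot\min\{1,\,d\,\mu_{\min}^2\}}}\right)+O\!\left(\tfrac{\log(mn)}{n\mu_{\min}^2}\right),
\]
where the $\min\{1,\cdot\}$ reflects that within-user correlation caps the effective sample size per pair at $n$, and the additive term absorbs the rare event that $N_{xy}$ is anomalously small. Propagating this $\eps$ through the algebra of the preceding paragraph delivers the stated finite-sample guarantee.
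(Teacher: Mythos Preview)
Your structural argument is essentially the paper's: exploit that the empirical NLHF game is symmetric and zero-sum so its value is zero, derive the hedging inequality at $\pistar$, transfer to population win-rates via uniform concentration, and apply the linearization lemma once to convert the win-rate inequality into $\SW(\pinash)\ge(\ell_\beta/L)\,\SW(\pistar)-O(\eps/\beta)$. This part is correct and matches the paper.

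The gap is in the concentration step. Hoeffding on per-user summands bounded in $[-d,d]$ gives a numerator deviation of order $d\sqrt{n\log(m/\delta)}$; dividing by $N_{xy}\gtrsim nd\,\mu_{\min}^2$ yields
\[
\eps \;\lesssim\; \frac{1}{\mu_{\min}^2}\sqrt{\frac{\log(m/\delta)}{n}},
\]
which is strictly weaker than the claimed $\sqrt{\log(m/\delta)/(n\cdot\min\{1,d\mu_{\min}^2\})}$ (e.g., when $d\mu_{\min}^2\ge 1$ you lose a factor $1/\mu_{\min}^2$). Your heuristic that ``within-user correlation caps the effective sample size at $n$'' is the right intuition but does not follow from the Hoeffding bound you wrote down. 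The paper (\cref{lem:winrate-estimation-error}) obtains the sharper rate by writing the numerator as $\sum_i k_i(p_i-p)$ with $k_i\sim\Binomial(d,q)$ independent of $p_i\sim\Bern(p)$, bounding $\Var{k_i(p_i-p)}\le dq(1-q)+d^2q^2$, and applying Bernstein; the variance term $dq+d^2q^2$ (rather than the crude range bound $d^2$) is exactly what produces the $\min\{1,d\mu_{\min}^2\}$ in the denominator. Swapping Bernstein for Hoeffding fixes the finite-sample claim; the asymptotic distortion bound is unaffected.
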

\begin{proof}[Proof sketch (full proof in \cref{app:nlhf-upperbound})]
    For exposition, we assume that the NLHF method knows the expected win-rates $p(x\succ y)$, and defer the proof of finite-sample guarantees.
    Hence, the NLHF policy by definition satisfies \[\pi_{\nash} \in \argmax_{\pi_1 \in B_\tau(\piref)} \min_{\pi_2 \in B_\tau(\piref)} \mathbb{E}_{x_1 \sim \pi_1, x_2 \sim \pi_2}[p(x_1 \succ x_2) - p(x_2 \succ x_1)].\]
    Since this describes a Nash-equilibrium strategy for a symmetric two-player zero-sum game, and any such game has value 0, it must hold that
    \[\min_{\pi_2 \in B_\tau(\piref)} \mathbb{E}_{x_1 \sim \pi_\nash, x_2 \sim \pi_2}[p(x_1 \succ x_2) - p(x_2 \succ x_1)] = 0.\]
    Plugging in $p(x_1 \succ x_2) - p(x_2 \succ x_1) = 2 \, p(x_1 \succ y_1) - 1$, we obtain \[\min_{\pi_2 \in B_\tau(\piref)} \Ex{x_1 \sim \pi_\nash, x_2 \sim \pi_2}{p(x_1 \succ x_2) - \frac{1}{2}} = 0.\]
    Using the utility-maximizing policy $\pi^\star \coloneqq \argmax_{\pi \in B_\tau(\piref)} \SW(\pi)$ for $\pi_2$, we obtain that $\mathbb{E}_{x_1 \sim \pi_\nash, x_2 \sim \pi^\star}[p(x_1 \succ x_2) - \frac{1}{2}] \geq 0$.
    
    At this point, we upper bound the win-rate with the linearization lemma (\cref{lemma:sigmoid-linear}), and obtain
    \begin{align*}
        0\le{}& \mathbb{E}_{x_1 \sim \pi_{\nash}, x_2 \sim \pistar}\left[\beta\cdot\left( L\cdot \SW(x_1)-\ell_\beta\cdot\SW(x_2)\right)\right]\\
        ={}&\beta\cdot\left( L\cdot \SW(\pi_{\nash})-\ell_\beta\cdot\SW(\pistar)\right).
    \end{align*}
    This implies that $\frac{\SW(\pistar)}{\SW(\pi_{\nash})}\le \frac{L}{\ell_\beta}=\frac{\beta}{2}\cdot\frac{1+e^{-\beta}}{1-e^{-\beta}}=O(\beta)$, thus completing the proof.
\end{proof}
The simplicity of the proof above also speaks to its generality.
For instance, the only property of KL divergence we used was that the feasible region $B_\tau(\piref)$ is a closed convex set (to ensure the existence of a Nash equilibrium).
Consequently, the distortion bound of Nash learning extends to other ways of constraining proximity to the reference policy, such as by $\chi^2$ divergence~\citep{huang2024correcting}.

  \section{Extensions of the Model}
\label{sec:discussion_modeling_choices}
\paragraph{KL Constraints vs.\ Regularization.}
\label{sec:klregularization}
In our model, we defined alignment methods as taking in an explicit KL bound $\tau$ as an input parameter, which is convenient for comparing the policy against a fair benchmark.
In practice, however, alignment methods such as RLHF, DPO, and NLHF are \emph{regularized} rather than constrained in terms of their KL-divergence.
For example, the PPO phase of RLHF finds a policy $\pi$ maximizing the regularized objective $\mathbb{E}_{x \sim \pi}[r(x)] - \lambda \kldiv{\pi}{\piref}$, and the payoff matrix in the game solved by NLHF is $\mathbb{E}_{x_1 \sim \pi_1, x_2 \sim \pi_2}[M_{x_1,x_2}] - \lambda \kldiv{\pi_1}{\piref} + \lambda \kldiv{\pi_2}{\piref}$, where $\lambda \geq 0$ is a regularization parameter given to the alignment method instead of $\tau$.

 We prove in \cref{app:equivalence_regularized} that the regularized and constrained versions of RLHF and NLHF are equivalent.
 That is, each policy $\pi$ returned by the $\lambda$-regularized version of a method is optimal for the $\tau$-constrained version for $\tau = \kldiv{\pi}{\piref}$ (and any policy returned by the $\tau$-constrained version is optimal for the $\lambda$-regularized version and some $\lambda \geq 0$).
 
 Through this equivalence, any distortion upper bound in our setting applies to the KL-regularized versions of the alignment method: if $\pi$ results from the $\lambda$-regularized alignment method, $\pi$ is optimal for the $\tau \!=\! \kldiv{\pi}{\piref}$-constrained version by equivalence, at which point the distortion upper bound shows that $\pi$ can compete with any policy with no larger KL divergence from $\piref$.\footnote{Why not define distortion by flexibly selecting the benchmark based on the output policy's KL divergence? For this definition, an alignment method that always returns the reference policy would spuriously achieve distortion 1: because its KL divergence is 0, we would benchmark it only against the reference policy, i.e., itself.}
 Applying this observation to \cref{thm:upperbound_nash}, we obtain the following guarantee for regularized NLHF:
 \begin{restatable}{corollary}{corregularizednlhf}
    \label{cor:regularized-nlhf}
    If $\lambda$-regularized NLHF (for any $\lambda \geq 0$) returns a policy $\pitilde_\nash$, this policy's average utility is at least a $\frac{2}{\beta}\cdot\frac{1-e^{-\beta}}{1+e^{-\beta}}$ fraction of the optimal average utility of any policy $\pi$ with $\kldiv{\pi}{\piref} \leq \kldiv{\pitilde_\nash}{\piref}$ (minus finite-sample errors, see \cref{thm:upperbound_nash}).
\end{restatable}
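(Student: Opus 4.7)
The plan is to reduce the $\lambda$-regularized setting to the KL-constrained setting via the equivalence established in \cref{app:equivalence_regularized}, and then directly invoke \cref{thm:upperbound_nash}. All substantive work is done by those two ingredients; the corollary itself is essentially a two-line reduction.

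First I would fix any $\lambda \ge 0$ and let $\pitilde_\nash$ denote the policy returned by $\lambda$-regularized NLHF on the sampled comparison data. Set $\tau := \kldiv{\pitilde_\nash}{\piref}$. By the equivalence result in \cref{app:equivalence_regularized}, $\pitilde_\nash$ is also an optimal output of $\tau$-constrained NLHF on the same dataset, because the regularized and constrained saddle-point problems share an optimum precisely when the KL budget $\tau$ is chosen to match the KL divergence attained at the regularized equilibrium.

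Next I would apply \cref{thm:upperbound_nash} to this $\tau$-constrained NLHF output. The theorem yields
\[\SW_n(\pitilde_\nash) \;\geq\; \tfrac{2}{\beta}\cdot\tfrac{1-e^{-\beta}}{1+e^{-\beta}}\cdot\max_{\pistar\in \ball_\tau(\piref)}\SW(\pistar) \;-\; \text{(the finite-sample error term from \cref{thm:upperbound_nash})}.\]
Any benchmark policy $\pi$ satisfying $\kldiv{\pi}{\piref} \le \kldiv{\pitilde_\nash}{\piref} = \tau$ lies in $\ball_\tau(\piref)$, so the maximum on the right-hand side is at least $\SW(\pi)$. Rearranging gives exactly the ratio guarantee claimed in the corollary, with the same additive finite-sample error.

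The main subtlety is not in the corollary itself but in the direction of the equivalence I rely on: given the $\lambda$-regularized saddle point, I need it to coincide with the saddle point of the $\tau$-constrained zero-sum game for $\tau = \kldiv{\pitilde_\nash}{\piref}$. This is slightly more delicate for NLHF than for RLHF because the regularized payoff carries KL terms of opposite signs for the two players, $-\lambda\,\kldiv{\pi_1}{\piref} + \lambda\,\kldiv{\pi_2}{\piref}$. I expect the appendix to handle this by verifying that at the regularized equilibrium both players' KKT multipliers for the constraint $\kldiv{\cdot}{\piref}\le\tau$ equal $\lambda$, and then invoking minimax duality on the convex KL ball $\ball_\tau(\piref)$. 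Once that equivalence is in hand, no further analysis is needed beyond invoking \cref{thm:upperbound_nash}.
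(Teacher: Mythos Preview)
Your proposal is correct and is exactly the paper's argument: invoke the regularized-to-constrained direction of \cref{prop:regularized-nlhf} to identify $\pitilde_\nash$ with the output of $\tau$-constrained NLHF for $\tau=\kldiv{\pitilde_\nash}{\piref}$, then apply \cref{thm:upperbound_nash}. Your remark about the ``subtlety'' in the equivalence direction is also on point---the appendix handles it precisely by showing that the regularized Nash equilibrium is a best response in the constrained game for both players, using the anti-symmetry of the payoff.
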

\paragraph{Sampling of Comparison Pairs.} In our model, we assume that each voter provides $d$ pairwise comparisons, where both members $x_i^j,y_i^j$ of each comparison pair are sampled i.i.d.\ from $\mu$.
More generally, we can model $\{x_i^j,y_i^j\}\sim\nu$ where $\nu$ is a distribution over unordered alternative pairs, or even a distribution over $d$ pairs of alternatives from which $\{x_i^1,y_i^1,\ldots,x_i^d,y_i^d\}$ are sampled. (To keep the alignment methods well defined, we assume that each comparison pair has positive probability of being sampled.)
The latter of these models can, for example, express $k$-wise (rather than pairwise) comparisons, if $d = \binom{r}{2}$ are all pairs inside a randomly chosen set of $r$ alternatives.
Almost all of our results continue to hold in these general models: the lower bound for all alignment methods that satisfy the Condorcet loser criterion in the social choice setting (\cref{thm:lowerbound_alg_independent}), the exponential lower bound for RLHF (\cref{thm:lowerbound_ppo}), and the upper bound for NLHF/Maximal Lotteries (\cref{thm:upperbound_nash})\footnote{The finite-sample bounds even improve in the latter model since each pair appears only once.}.

Given that our proofs continue to work out, the only ``disadvantage'' of these stronger models for sampling comparison pairs is that, without a distribution $\mu$, the Borda voting rule is no longer defined (and we see no obvious way to generalize the Borda--MLE equivalence~\citep{siththaranjan2023distributional,procaccia2025clone}).
It seems that RLHF does not only become harder to analyze under these comparison-pair models, but actually performs worse: we show in \cref{app:other-sampling-models} that RLHF can have a distortion that is not bounded in $\beta$ in these extended models, leading to an even clearer separation with NLHF.
\begin{restatable}[Unbounded Distortion of RLHF Under Correlated Sampling]{theorem}{thmunboundedcorrelatedrlhf}
    \label{thm:unbounded-correlated-sampling}
    For any $\beta > 0$, there exists a sequence of alignment instances and distributions $\nu \in \Delta\big(\binom{A}{2}\big)$ over comparison pairs such that RLHF's distortion is unbounded.
\end{restatable}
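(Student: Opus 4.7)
The plan is to generalize \cref{thm:lowerbound_ppo} by exploiting the extra freedom of a general $\nu\in\Delta(\binom{A}{2})$: whereas an i.i.d.\ sampling distribution satisfies $\nu(\{x,y\})=2\mu(x)\mu(y)$ and therefore cannot drive the weight of any one pair to zero without shrinking many others, a correlated $\nu$ can place arbitrarily small (yet positive) mass on a chosen pair while keeping others near $1/m$. This decoupling lets the MLE's treatment of disjoint groups of alternatives proceed essentially independently. I will use it to force RLHF to spread its probability uniformly over a growing number of low-utility decoys, driving the average RLHF utility to zero for any fixed $\beta$.

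Concretely, I consider a sequence of instances indexed by an integer $K$ with $K+2$ alternatives: a ``target'' $a$, a pivot $c$, and decoys $b_1,\ldots,b_K$. The sampling distribution $\nu_K$ concentrates on the spokes $\{a,c\}$ and $\{b_i,c\}$ for each $i$ and assigns only vanishing positive mass to all other pairs (in particular $\{a,b_i\}$ and $\{b_i,b_j\}$). In the asymptotic MLE this forces $r_a-r_c=\log\bigl(p(a\succ c)/(1-p(a\succ c))\bigr)$ and likewise $r_{b_i}-r_c=\log\bigl(p(b_i\succ c)/(1-p(b_i\succ c))\bigr)$, so the MLE's ranking of $a$ versus each $b_i$ reduces to comparing their win-rates against the pivot. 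I then mimic \cref{thm:lowerbound_ppo}'s population: a large majority mildly prefers $a$ over $c$ but strongly prefers $c$ to every $b_i$, while a small minority of total measure $\alpha_K$ is partitioned into $K$ subgroups, each concentrating utility on a single $b_i$. Taking $\piref$ uniform over $\{a,b_1,\ldots,b_K\}$ and $\tau_K=\log(K+1)$ makes both the Dirac $\delta_a$ and the uniform-over-decoys distribution feasible, giving $\SW(\pistar)=\Theta(1/\beta)$ and $\SW(\pippo)=(1/K)\sum_i\SW(b_i)=\Theta(\alpha_K/K)$, so the distortion is $\Theta(K/(\alpha_K\beta))$.

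The main obstacle is the sigmoid-saturation step from \cref{thm:lowerbound_ppo}: ensuring $p(b_i\succ c)>p(a\succ c)$ for \emph{every} $i$ in this single-pivot setup forces $\alpha_K=\Omega(Ke^{-\beta})$, so the distortion bottoms out at the $e^{\Omega(\beta)}$ scale of \cref{thm:lowerbound_ppo} rather than growing without bound in $K$. To push past this barrier I plan to introduce an additional family of $L$ pivots $c_1,\ldots,c_L$, sampled under $\nu_K$ alongside $c$ and chosen so that each $b_i$'s advantage over $a$ accumulates across $L$ independent comparisons---this lets me keep $\alpha_K$ bounded in $K$ while $K\to\infty$, so that each $\SW(b_i)=\Theta(\alpha_K/K)$ vanishes and the distortion $K/(\alpha_K\beta)$ diverges. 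Verifying that the MLE-reversal is preserved once the $L$ pivots couple through the small-weight cross pairs, and that PPO's KL-constrained policy indeed concentrates on the decoys rather than on a pivot, are the technically most involved steps of the proof.
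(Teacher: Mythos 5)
Your starting observation is right and matches the paper's: a correlated $\nu$ can put vanishing (but positive) mass on chosen pairs, which decouples the MLE into local first-order conditions $\sum_{y}\nu(\{x,y\})\,\sigma(r(x)-r(y)) \approx \sum_y \nu(\{x,y\})\,p(x\succ y)$ along the heavy pairs only. You also correctly diagnose the obstacle in your star construction: overcoming the majority's gap $\sigma(1-\beta)-\sigma(-\beta)=\Theta(e^{-\beta})$ at the single pivot forces each decoy's fan club to have mass $\Omega(e^{-\beta})$, capping the distortion at $e^{O(\beta)}$, i.e.\ bounded for fixed $\beta$. The gap in your proposal is the escape route: adding $L$ parallel pivots does \emph{not} let the advantage ``accumulate.'' The MLE stationarity condition for $r(b_i)$ equates a $\nu$-weighted \emph{average} of $\sigma(r(b_i)-r(c_l))$ with the corresponding average of $p(b_i\succ c_l)$; it does not sum evidence across pivots. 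So to get $r(b_i)>r(a)$ you still need $b_i$ to beat $a$ in averaged win rate against the pivots, and the same per-pivot calculation applies: whatever utility scale $\epsilon$ you give $a$ and whatever utility you give the pivots, the minority mass needed per decoy scales with the majority's win-rate gap, and the ratio $\SW(a)/\SW(b_i)$ stays at $e^{O(\beta)}/\beta$. A single layer of MLE ``hops'' between $a$ and the decoys can only reverse utilities by a factor bounded in $\beta$ (this is essentially the content of \cref{lemma:sigmoid-linear} applied once to each side of the pivot), so no star-shaped $\nu$ can yield unbounded distortion for fixed $\beta$.

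What is needed\,---\,and what the paper does\,---\,is to \emph{chain} unboundedly many small reversals rather than attempt one large one. \cref{lem:sequenceunbounded} constructs alternatives $a_1,a_2,\dots$ over a population of just three utility vectors such that $p(a_t\succ a_{t-1})>1/2$ for every $t$ while $\SW(a_t)$ strictly decreases to $0$; each step sacrifices only $\Theta\big(\frac{1}{\beta}\log(1+\tanh(\beta\SW(a_{t-1})/4)^3)\big)$ of utility, but the losses compound to zero along the chain. Taking $\nu$ concentrated on adjacent pairs $\{a_t,a_{t+1}\}$ makes the MLE rewards increase monotonically along the chain, so RLHF selects $a_m$ with $\SW(a_m)\to 0$ while $\SW(a_1)=1/3$, giving unbounded distortion. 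If you want to salvage your architecture, you would have to replace the parallel pivots by a path of intermediaries between $a$ and the decoys\,---\,at which point you have rediscovered the chain construction; as written, the proposal does not close the gap it identifies.
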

\section{Discussion}
\label{sec:discussion}
In this paper, we introduced the notion of distortion for AI alignment.
We showed that one such alignment method, NLHF, obtains the optimal distortion guarantee of $(\frac{1}{2} + o(1)) \, \beta$.
Putting this bound into perspective, if we assume that a user will rate a minimally preferred alternative over a maximally preferred alternative with $1\%$ probability, this suggests a value of $\beta = \log \frac{99\%}{1\%} \approx 4.60$ and a distortion guarantee of about $2.34$, which is a quite reasonable worst-case guarantee.

For the incumbent method, RLHF, our analysis gave more negative results.
Its distortion was worse than NLHF's in the unconstrained setting, exponentially worse in the constrained setting, and unbounded if the comparison pairs are not drawn i.i.d..
Given the ubiquity of RLHF, characterizing its distortion is a pressing open question, especially when the distribution $\mu$ for drawing comparison pairs coincides with the reference policy, or finding similar assumptions that guarantee a lower distortion.
A major technical difficulty in this is that bounding this distortion requires reasoning not only about the relative ordering of rewards but also their magnitudes.

The distortion framework opens up many more questions: How large is the distortion of alignment methods besides RLHF, DPO, and NLHF? Can we extend the model to take into account the generalization of preferences across states? Can the lower bound on distortion be overcome with a little additional information? Finally: can we extend our model to go beyond average utility and measure fairness?\footnote{The concept of distortion of proportional fairness by \cite{EKP+24} points in one possible direction.} After all, high average utility is necessary, but not sufficient, for successful alignment to a heterogeneous population.

\section*{Acknowledgments}
We thank Mark Bedaywi, Jim Dai, Sonja Kraiczy, Soroosh Shafiee, and Eric Zhao for helpful conversations.
This work was supported in part by the National Science Foundation under grant CCF-2145898, by the Office of Naval Research under grant N00014-24-1-2159, an Alfred P.\ Sloan fellowship, and a Schmidt Sciences AI2050 fellowship.
Part of this work was performed while P.G.\ was at the Simons Institute for the Theory of
Computing as a FODSI research fellow, for which he acknowledges the NSF's support through grant DMS-2023505.

  \bibliographystyle{plainnat}
  \bibliography{refs}

\begin{thebibliography}{59}
\providecommand{\natexlab}[1]{#1}
\providecommand{\url}[1]{\texttt{#1}}
\expandafter\ifx\csname urlstyle\endcsname\relax
  \providecommand{\doi}[1]{doi: #1}\else
  \providecommand{\doi}{doi: \begingroup \urlstyle{rm}\Url}\fi

\bibitem[AII()]{AIID}
{AI Incident Database}.
\newblock URL \url{https://incidentdatabase.ai/}.

\bibitem[Alvo and Philip(2014)]{alvo2014statistical}
Mayer Alvo and LH~Philip.
\newblock \emph{Statistical methods for ranking data}, volume 1341.
\newblock Springer, 2014.

\bibitem[Anshelevich et~al.(2018)Anshelevich, Bhardwaj, Elkind, Postl, and
  Skowron]{ABE+18}
Elliot Anshelevich, Onkar Bhardwaj, Edith Elkind, John Postl, and Piotr
  Skowron.
\newblock Approximating optimal social choice under metric preferences.
\newblock \emph{Artificial Intelligence}, 264:\penalty0 27--51, 2018.

\bibitem[Anshelevich et~al.(2021)Anshelevich, {Filos-Ratsikas}, Shah, and
  Voudouris]{AFS+21}
Elliot Anshelevich, Aris {Filos-Ratsikas}, Nisarg Shah, and Alexandros~A.
  Voudouris.
\newblock Distortion in social choice problems: {{The}} first 15 years and
  beyond.
\newblock In Zhi-Hua Zhou, editor, \emph{Proceedings of the Thirtieth
  International Joint Conference on Artificial Intelligence}, pages 4294--4301,
  2021.

\bibitem[Azar et~al.(2024)Azar, Guo, Piot, Munos, Rowland, Valko, and
  Calandriello]{azar2024general}
Mohammad~Gheshlaghi Azar, Zhaohan~Daniel Guo, Bilal Piot, Remi Munos, Mark
  Rowland, Michal Valko, and Daniele Calandriello.
\newblock A general theoretical paradigm to understand learning from human
  preferences.
\newblock In \emph{International Conference on Artificial Intelligence and
  Statistics}, pages 4447--4455. PMLR, 2024.

\bibitem[Azari~Soufiani et~al.(2014)Azari~Soufiani, Parkes, and
  Xia]{azari2014statistical}
Hossein Azari~Soufiani, David~C Parkes, and Lirong Xia.
\newblock A statistical decision-theoretic framework for social choice.
\newblock \emph{Advances in Neural Information Processing Systems}, 27, 2014.

\bibitem[Bai et~al.(2022{\natexlab{a}})Bai, Jones, Ndousse, Askell, Chen,
  DasSarma, Drain, Fort, Ganguli, Henighan, et~al.]{bai2022training}
Yuntao Bai, Andy Jones, Kamal Ndousse, Amanda Askell, Anna Chen, Nova DasSarma,
  Dawn Drain, Stanislav Fort, Deep Ganguli, Tom Henighan, et~al.
\newblock Training a helpful and harmless assistant with reinforcement learning
  from human feedback.
\newblock \emph{arXiv preprint arXiv:2204.05862}, 2022{\natexlab{a}}.

\bibitem[Bai et~al.(2022{\natexlab{b}})Bai, Kadavath, Kundu, Askell, Kernion,
  Jones, Chen, Goldie, Mirhoseini, McKinnon, et~al.]{bai2022constitutional}
Yuntao Bai, Saurav Kadavath, Sandipan Kundu, Amanda Askell, Jackson Kernion,
  Andy Jones, Anna Chen, Anna Goldie, Azalia Mirhoseini, Cameron McKinnon,
  et~al.
\newblock Constitutional ai: Harmlessness from ai feedback.
\newblock \emph{arXiv preprint arXiv:2212.08073}, 2022{\natexlab{b}}.

\bibitem[Benade et~al.(2021)Benade, Nath, Procaccia, and Shah]{BNP+21}
Gerdus Benade, Swaprava Nath, Ariel~D. Procaccia, and Nisarg Shah.
\newblock Preference elicitation for participatory budgeting.
\newblock \emph{Management Science}, 67\penalty0 (5):\penalty0 2813--2827,
  2021.

\bibitem[Boutilier et~al.(2012)Boutilier, Caragiannis, Haber, Lu, Procaccia,
  and Sheffet]{BCH+12}
Craig Boutilier, Ioannis Caragiannis, Simi Haber, Tyler Lu, Ariel~D. Procaccia,
  and Or~Sheffet.
\newblock Optimal social choice functions: A utilitarian view.
\newblock In \emph{Proceedings of the 13th {{ACM Conference}} on {{Electronic
  Commerce}}}, pages 197--214. ACM, 2012.

\bibitem[Calandriello et~al.(2024)Calandriello, Guo, Munos, Rowland, Tang,
  Pires, Richemond, Le~Lan, Valko, Liu, et~al.]{calandriello2024human}
Daniele Calandriello, Zhaohan~Daniel Guo, Remi Munos, Mark Rowland, Yunhao
  Tang, Bernardo~Avila Pires, Pierre~Harvey Richemond, Charline Le~Lan, Michal
  Valko, Tianqi Liu, et~al.
\newblock Human alignment of large language models through online preference
  optimisation.
\newblock In \emph{Proceedings of the 41st International Conference on Machine
  Learning}, pages 5409--5435, 2024.

\bibitem[Caragiannis and Procaccia(2011)]{CP11a}
Ioannis Caragiannis and Ariel~D. Procaccia.
\newblock Voting almost maximizes social welfare despite limited communication.
\newblock \emph{Artificial Intelligence}, 175\penalty0 (9-10):\penalty0
  1655--1671, 2011.

\bibitem[Chakraborty et~al.(2024)Chakraborty, Qiu, Yuan, Koppel, Huang,
  Manocha, Bedi, and Wang]{chakraborty2024maxmin}
Souradip Chakraborty, Jiahao Qiu, Hui Yuan, Alec Koppel, Furong Huang, Dinesh
  Manocha, Amrit~Singh Bedi, and Mengdi Wang.
\newblock Maxmin-{RLHF}: Alignment with diverse human preferences.
\newblock \emph{arXiv preprint arXiv:2402.08925}, 2024.

\bibitem[Charikar et~al.(2024)Charikar, Ramakrishnan, Wang, and Wu]{CRW+24}
Moses Charikar, Prasanna Ramakrishnan, Kangning Wang, and Hongxun Wu.
\newblock Breaking the {{Metric Voting Distortion Barrier}}.
\newblock \emph{Journal of the ACM}, 71\penalty0 (6):\penalty0 1--33, 2024.

\bibitem[Chen et~al.(2024)Chen, Chen, Rege, and Vinayak]{chen2024pal}
Daiwei Chen, Yi~Chen, Aniket Rege, and Ramya~Korlakai Vinayak.
\newblock Pal: Pluralistic alignment framework for learning from heterogeneous
  preferences.
\newblock \emph{arXiv preprint arXiv:2406.08469}, 2024.

\bibitem[Chiang et~al.(2024)Chiang, Zheng, Sheng, Angelopoulos, Li, Li, Zhu,
  Zhang, Jordan, Gonzalez, et~al.]{chiang2024chatbot}
Wei-Lin Chiang, Lianmin Zheng, Ying Sheng, Anastasios~Nikolas Angelopoulos,
  Tianle Li, Dacheng Li, Banghua Zhu, Hao Zhang, Michael Jordan, Joseph~E
  Gonzalez, et~al.
\newblock Chatbot arena: An open platform for evaluating llms by human
  preference.
\newblock In \emph{Forty-first International Conference on Machine Learning},
  2024.

\bibitem[Chidambaram et~al.(2024)Chidambaram, Seetharaman, and
  Syrgkanis]{chidambaram2024direct}
Keertana Chidambaram, Karthik~Vinay Seetharaman, and Vasilis Syrgkanis.
\newblock Direct preference optimization with unobserved preference
  heterogeneity.
\newblock \emph{arXiv preprint arXiv:2405.15065}, 2024.

\bibitem[Christiano et~al.(2017)Christiano, Leike, Brown, Martic, Legg, and
  Amodei]{christiano2017deep}
Paul~F Christiano, Jan Leike, Tom Brown, Miljan Martic, Shane Legg, and Dario
  Amodei.
\newblock Deep reinforcement learning from human preferences.
\newblock \emph{Advances in neural information processing systems}, 30, 2017.

\bibitem[Conitzer and Sandholm(2005)]{conitzer2005common}
Vincent Conitzer and Tuomas Sandholm.
\newblock Common voting rules as maximum likelihood estimators.
\newblock In \emph{Proceedings of the Twenty-First Conference on Uncertainty in
  Artificial Intelligence}, pages 145--152, 2005.

\bibitem[Conitzer et~al.(2024)Conitzer, Freedman, Heitzig, Holliday, Jacobs,
  Lambert, Moss{\'e}, Pacuit, Russell, Schoelkopf,
  et~al.]{conitzer2024position}
Vincent Conitzer, Rachel Freedman, Jobst Heitzig, Wesley~H Holliday, Bob~M
  Jacobs, Nathan Lambert, Milan Moss{\'e}, Eric Pacuit, Stuart Russell, Hailey
  Schoelkopf, et~al.
\newblock Position: social choice should guide ai alignment in dealing with
  diverse human feedback.
\newblock In \emph{Proceedings of the 41st International Conference on Machine
  Learning}, pages 9346--9360, 2024.

\bibitem[Dai and Fleisig(2024)]{dai2024mapping}
Jessica Dai and Eve Fleisig.
\newblock Mapping social choice theory to {RLHF}.
\newblock In \emph{ICLR 2024 Workshop on Reliable and Responsible Foundation
  Models}, 2024.

\bibitem[Ebadian et~al.(2024{\natexlab{a}})Ebadian, Halpern, and Micha]{EHM24}
Soroush Ebadian, Daniel Halpern, and Evi Micha.
\newblock Metric distortion with elicited pairwise comparisons.
\newblock In \emph{Proceedings of the 33rd {{International Joint Conference}}
  on {{Artificial Intelligence}} ({{IJCAI}})}, pages 2791--2798,
  2024{\natexlab{a}}.

\bibitem[Ebadian et~al.(2024{\natexlab{b}})Ebadian, Kahng, Peters, and
  Shah]{EKP+24}
Soroush Ebadian, Anson Kahng, Dominik Peters, and Nisarg Shah.
\newblock Optimized {{Distortion}} and {{Proportional Fairness}} in {{Voting}}.
\newblock \emph{ACM Transactions on Economics and Computation}, 12\penalty0
  (1):\penalty0 1--39, 2024{\natexlab{b}}.

\bibitem[Ethayarajh et~al.(2024)Ethayarajh, Xu, Muennighoff, Jurafsky, and
  Kiela]{ethayarajh2024kto}
Kawin Ethayarajh, Winnie Xu, Niklas Muennighoff, Dan Jurafsky, and Douwe Kiela.
\newblock {KTO}: Model alignment as prospect theoretic optimization.
\newblock \emph{arXiv preprint arXiv:2402.01306}, 2024.

\bibitem[Fishburn(1984)]{Fishburn84}
Peter~C. Fishburn.
\newblock Probabilistic social choice based on simple voting comparisons.
\newblock \emph{The Review of Economic Studies}, 51\penalty0 (4):\penalty0
  683--692, 1984.

\bibitem[Flanigan et~al.(2023)Flanigan, Procaccia, and Wang]{FPW23}
Bailey Flanigan, Ariel~D Procaccia, and Sven Wang.
\newblock Distortion {{Under Public-Spirited Voting}}.
\newblock In \emph{Proceedings of the 24th {{ACM Conference}} on {{Economics}}
  and {{Computation}}}, {{EC}} '23, page 700. Association for Computing
  Machinery, 2023.

\bibitem[Ge et~al.(2024)Ge, Halpern, Micha, Procaccia, Shapira, Vorobeychik,
  and Wu]{ge2024axioms}
Luise Ge, Daniel Halpern, Evi Micha, Ariel~D Procaccia, Itai Shapira, Yevgeniy
  Vorobeychik, and Junlin Wu.
\newblock Axioms for {AI} alignment from human feedback.
\newblock \emph{arXiv preprint arXiv:2405.14758}, 2024.

\bibitem[Gkatzelis et~al.(2020)Gkatzelis, Halpern, and Shah]{GHS20}
Vasilis Gkatzelis, Daniel Halpern, and Nisarg Shah.
\newblock Resolving the optimal metric distortion conjecture.
\newblock In \emph{2020 {{IEEE}} 61st {{Annual Symposium}} on {{Foundations}}
  of {{Computer Science}} ({{FOCS}})}, pages 1427--1438. IEEE, 2020.

\bibitem[Goyal and Sarmasarkar(2025)]{goyal2024metric}
Mohak Goyal and Sahasrajit Sarmasarkar.
\newblock Metric distortion under probabilistic voting.
\newblock \emph{arXiv preprint arXiv:2405.14223v4}, 2025.

\bibitem[Huang et~al.(2024)Huang, Zhan, Xie, Lee, Sun, Krishnamurthy, and
  Foster]{huang2024correcting}
Audrey Huang, Wenhao Zhan, Tengyang Xie, Jason~D Lee, Wen Sun, Akshay
  Krishnamurthy, and Dylan~J Foster.
\newblock Correcting the mythos of {KL}-regularization: Direct alignment
  without overoptimization via chi-squared preference optimization.
\newblock \emph{arXiv preprint arXiv:2407.13399}, 2024.

\bibitem[Kizilkaya and Kempe(2022)]{KK22a}
Fatih~Erdem Kizilkaya and David Kempe.
\newblock Plurality {{Veto}}: {{A Simple Voting Rule Achieving Optimal Metric
  Distortion}}.
\newblock In \emph{Proceedings of the {{Thirty-First International Joint
  Conference}} on {{Artificial Intelligence}}}, pages 349--355. International
  Joint Conferences on Artificial Intelligence Organization, 2022.

\bibitem[Liu et~al.(2024)Liu, Lu, Zhang, Liu, Guo, Yang, Blanchet, and
  Wang]{liu2024provably}
Zhihan Liu, Miao Lu, Shenao Zhang, Boyi Liu, Hongyi Guo, Yingxiang Yang, Jose
  Blanchet, and Zhaoran Wang.
\newblock Provably mitigating overoptimization in {RLHF}: Your {SFT} loss is
  implicitly an adversarial regularizer.
\newblock In \emph{The Thirty-eighth Annual Conference on Neural Information
  Processing Systems}, 2024.

\bibitem[Maura-Rivero et~al.(2025)Maura-Rivero, Lanctot, Visin, and
  Larson]{maura2025jackpot}
Roberto-Rafael Maura-Rivero, Marc Lanctot, Francesco Visin, and Kate Larson.
\newblock Jackpot! alignment as a maximal lottery.
\newblock \emph{arXiv preprint arXiv:2501.19266}, 2025.

\bibitem[Meng et~al.(2024)Meng, Xia, and Chen]{meng2024simpo}
Yu~Meng, Mengzhou Xia, and Danqi Chen.
\newblock {SimPO}: Simple preference optimization with a reference-free reward.
\newblock \emph{Advances in Neural Information Processing Systems},
  37:\penalty0 124198--124235, 2024.

\bibitem[Mishra(2023)]{mishra2023ai}
Abhilash Mishra.
\newblock Ai alignment and social choice: Fundamental limitations and policy
  implications.
\newblock \emph{arXiv preprint arXiv:2310.16048}, 2023.

\bibitem[Munos et~al.(2024)Munos, Valko, Calandriello, Azar, Rowland, Guo,
  Tang, Geist, Mesnard, Fiegel, et~al.]{munos2024nash}
Remi Munos, Michal Valko, Daniele Calandriello, Mohammad~Gheshlaghi Azar, Mark
  Rowland, Zhaohan~Daniel Guo, Yunhao Tang, Matthieu Geist, Thomas Mesnard,
  C{\^o}me Fiegel, et~al.
\newblock Nash learning from human feedback.
\newblock In \emph{International Conference on Machine Learning}, pages
  36743--36768. PMLR, 2024.

\bibitem[Noothigattu et~al.(2020)Noothigattu, Peters, and
  Procaccia]{noothigattu2020axioms}
Ritesh Noothigattu, Dominik Peters, and Ariel~D Procaccia.
\newblock Axioms for learning from pairwise comparisons.
\newblock \emph{Advances in Neural Information Processing Systems},
  33:\penalty0 17745--17754, 2020.

\bibitem[Ouyang et~al.(2022)Ouyang, Wu, Jiang, Almeida, Wainwright, Mishkin,
  Zhang, Agarwal, Slama, Ray, et~al.]{ouyang2022training}
Long Ouyang, Jeffrey Wu, Xu~Jiang, Diogo Almeida, Carroll Wainwright, Pamela
  Mishkin, Chong Zhang, Sandhini Agarwal, Katarina Slama, Alex Ray, et~al.
\newblock Training language models to follow instructions with human feedback.
\newblock \emph{Advances in Neural Information Processing Systems},
  35:\penalty0 27730--27744, 2022.

\bibitem[Park et~al.(2024)Park, Liu, Kong, Zhang, and Ozdaglar]{park2024rlhf}
Chanwoo Park, Mingyang Liu, Dingwen Kong, Kaiqing Zhang, and Asuman Ozdaglar.
\newblock {RLHF} from heterogeneous feedback via personalization and preference
  aggregation.
\newblock \emph{arXiv preprint arXiv:2405.00254}, 2024.

\bibitem[Poddar et~al.(2024)Poddar, Wan, Ivison, Gupta, and
  Jaques]{poddarpersonalizing}
Sriyash Poddar, Yanming Wan, Hamish Ivison, Abhishek Gupta, and Natasha Jaques.
\newblock Personalizing reinforcement learning from human feedback with
  variational preference learning.
\newblock In \emph{The Thirty-eighth Annual Conference on Neural Information
  Processing Systems}, 2024.

\bibitem[Procaccia and Rosenschein(2006)]{PR06a}
Ariel~D. Procaccia and Jeffrey~S. Rosenschein.
\newblock The distortion of cardinal preferences in voting.
\newblock In \emph{International {{Workshop}} on {{Cooperative Information
  Agents}}}, pages 317--331. Springer, 2006.

\bibitem[Procaccia et~al.(2025)Procaccia, Schiffer, and
  Zhang]{procaccia2025clone}
Ariel~D Procaccia, Benjamin Schiffer, and Shirley Zhang.
\newblock Clone-robust ai alignment.
\newblock \emph{arXiv preprint arXiv:2501.09254}, 2025.

\bibitem[Rafailov et~al.(2023)Rafailov, Sharma, Mitchell, Manning, Ermon, and
  Finn]{rafailov2023direct}
Rafael Rafailov, Archit Sharma, Eric Mitchell, Christopher~D Manning, Stefano
  Ermon, and Chelsea Finn.
\newblock Direct preference optimization: Your language model is secretly a
  reward model.
\newblock \emph{Advances in Neural Information Processing Systems},
  36:\penalty0 53728--53741, 2023.

\bibitem[Schulman et~al.(2017)Schulman, Wolski, Dhariwal, Radford, and
  Klimov]{schulman2017proximal}
John Schulman, Filip Wolski, Prafulla Dhariwal, Alec Radford, and Oleg Klimov.
\newblock Proximal policy optimization algorithms.
\newblock \emph{arXiv preprint arXiv:1707.06347}, 2017.

\bibitem[Shirali et~al.(2025)Shirali, Nasr-Esfahany, Alomar, Mirtaheri, Abebe,
  and Procaccia]{shirali2025direct}
Ali Shirali, Arash Nasr-Esfahany, Abdullah Alomar, Parsa Mirtaheri, Rediet
  Abebe, and Ariel Procaccia.
\newblock Direct alignment with heterogeneous preferences.
\newblock \emph{arXiv preprint arXiv:2502.16320}, 2025.

\bibitem[Siththaranjan et~al.(2023)Siththaranjan, Laidlaw, and
  Hadfield-Menell]{siththaranjan2023distributional}
Anand Siththaranjan, Cassidy Laidlaw, and Dylan Hadfield-Menell.
\newblock Distributional preference learning: Understanding and accounting for
  hidden context in {RLHF}.
\newblock \emph{arXiv preprint arXiv:2312.08358}, 2023.

\bibitem[Sorensen et~al.(2024)Sorensen, Moore, Fisher, Gordon, Mireshghallah,
  Rytting, Ye, Jiang, Lu, Dziri, et~al.]{sorensen2024roadmap}
Taylor Sorensen, Jared Moore, Jillian Fisher, Mitchell Gordon, Niloofar
  Mireshghallah, Christopher~Michael Rytting, Andre Ye, Liwei Jiang, Ximing Lu,
  Nouha Dziri, et~al.
\newblock Position: a roadmap to pluralistic alignment.
\newblock In \emph{Proceedings of the 41st International Conference on Machine
  Learning}, pages 46280--46302, 2024.

\bibitem[Swamy et~al.(2024)Swamy, Dann, Kidambi, Wu, and
  Agarwal]{swamy2024minimaximalist}
Gokul Swamy, Christoph Dann, Rahul Kidambi, Zhiwei~Steven Wu, and Alekh
  Agarwal.
\newblock A minimaximalist approach to reinforcement learning from human
  feedback.
\newblock \emph{arXiv preprint arXiv:2401.04056}, 2024.

\bibitem[Tatli et~al.(2024)Tatli, Chen, and Vinayak]{tatli2024learning}
Gokcan Tatli, Yi~Chen, and Ramya~Korlakai Vinayak.
\newblock Learning populations of preferences via pairwise comparison queries.
\newblock In \emph{International Conference on Artificial Intelligence and
  Statistics}, pages 1720--1728. PMLR, 2024.

\bibitem[Wang et~al.(2023)Wang, Liu, and Jin]{wang2023rlhf}
Yuanhao Wang, Qinghua Liu, and Chi Jin.
\newblock Is {RLHF} more difficult than standard {RL}? a theoretical
  perspective.
\newblock \emph{Advances in Neural Information Processing Systems},
  36:\penalty0 76006--76032, 2023.

\bibitem[Wang et~al.(2024)Wang, So, and Vinayak]{wang2024metric}
Zhi Wang, Geelon So, and Ramya~Korlakai Vinayak.
\newblock Metric learning from limited pairwise preference comparisons.
\newblock In \emph{Uncertainty in Artificial Intelligence}, pages 3571--3602.
  PMLR, 2024.

\bibitem[Wu et~al.(2024)Wu, Sun, Yuan, Ji, Yang, and Gu]{wu2024self}
Yue Wu, Zhiqing Sun, Huizhuo Yuan, Kaixuan Ji, Yiming Yang, and Quanquan Gu.
\newblock Self-play preference optimization for language model alignment.
\newblock \emph{arXiv preprint arXiv:2405.00675}, 2024.

\bibitem[Xia(2018)]{xia2018bayesian}
Lirong Xia.
\newblock Bayesian estimators as voting rules.
\newblock In \emph{Uncertainty in artificial intelligence}, 2018.

\bibitem[Xia(2019)]{xia2019learning}
Lirong Xia.
\newblock \emph{Learning and decision-making from rank data}.
\newblock Morgan \& Claypool Publishers, 2019.

\bibitem[Zhang et~al.(2022)Zhang, Zhang, Loh, and
  Liang]{zhang2022identifiability}
Xiaomin Zhang, Xucheng Zhang, Po-Ling Loh, and Yingyu Liang.
\newblock On the identifiability of mixtures of ranking models.
\newblock \emph{arXiv preprint arXiv:2201.13132}, 2022.

\bibitem[Zhao and Xia(2019)]{zhao2019learning}
Zhibing Zhao and Lirong Xia.
\newblock Learning mixtures of plackett-luce models from structured partial
  orders.
\newblock \emph{Advances in Neural Information Processing Systems}, 32, 2019.

\bibitem[Zhao et~al.(2016)Zhao, Piech, and Xia]{zhao16mixture}
Zhibing Zhao, Peter Piech, and Lirong Xia.
\newblock Learning mixtures of plackett-luce models.
\newblock In Maria~Florina Balcan and Kilian~Q. Weinberger, editors,
  \emph{Proceedings of The 33rd International Conference on Machine Learning},
  volume~48 of \emph{Proceedings of Machine Learning Research}, pages
  2906--2914, New York, New York, USA, 2016. PMLR.

\bibitem[Zhong et~al.(2024)Zhong, Deng, Su, Wu, and Zhang]{zhong2024provable}
Huiying Zhong, Zhun Deng, Weijie~J Su, Zhiwei~Steven Wu, and Linjun Zhang.
\newblock Provable multi-party reinforcement learning with diverse human
  feedback.
\newblock \emph{arXiv preprint arXiv:2403.05006}, 2024.

\bibitem[Ziegler et~al.(2019)Ziegler, Stiennon, Wu, Brown, Radford, Amodei,
  Christiano, and Irving]{ziegler2019fine}
Daniel~M Ziegler, Nisan Stiennon, Jeffrey Wu, Tom~B Brown, Alec Radford, Dario
  Amodei, Paul Christiano, and Geoffrey Irving.
\newblock Fine-tuning language models from human preferences.
\newblock \emph{arXiv preprint arXiv:1909.08593}, 2019.

\end{thebibliography}
  
  \newpage
  \appendix
  \section{Additional Related Work}
\label{app:related}
\paragraph{Reward-based and reward-free alignment methods.}
The RLHF pipeline typically includes first training a reward model via maximum likelihood estimation (MLE), then applying RL algorithms such as Proximal Policy Optimization (PPO)~\citep{schulman2017proximal} to optimize a policy that maximizes the reward~\citep{ziegler2019fine,bai2022training}.
\citet{rafailov2023direct} proposes an alternative approach, Direct Preference Optimization (DPO), which bypasses explicit reward model training by directly optimizing an equivalent objective derived from the closed form of KL-constrained reward-maximizing policy. While the original formulation is based on a single Bradley–Terry model, we show in \Cref{app:equivalence_regularized} that the equivalence extends to settings with heterogeneous preferences. Building on the DPO framework, several recent methods including $\chi$PO~\citep{huang2024correcting}, RPO~\citep{liu2024provably} and SimPO~\citep{meng2024simpo} have been proposed to improve the robustness and effectiveness.

\citet{azar2024general} introduce $\Psi$PO, another reward-free method that optimizes the expectation of a $\Psi$-transformation of the win-rates estimated from the offline comparison data. When $\Psi$ is the identity function, the resulting method\,---\,IPO\,---\,reduces to directly optimizing the normalized Borda count. Since RLHF is implicitly optimizing the normalized Borda count~\citep{siththaranjan2023distributional,procaccia2025clone}, this connection implies that IPO, DPO, and RLHF are all equivalent in the unregularized/unconstrained setting.

Another reward-free method is Nash Learning from Human Feedback (NLHF)~\citep{munos2024nash} and its variants~\citep{swamy2024minimaximalist,wu2024self,calandriello2024human}, which finds the Nash equilibrium of a game defined over the win-rate margins (i.e., $p(x\succ y)-p(y\succ x)$) via online learning or self-play style algorithms. \citet{maura2025jackpot} point out that NLHF can be viewed as a natural generalization of the Maximal Lotteries rule in social choice.
\citet{wang2023rlhf} consider finding the Nash equilibrium of the win-rate matrix and reduce the problem to multiagent reward-based RL. They provide an impossibility result, showing the optimal policy is indeterminate when the underlying ranking model (e.g., Bradley-Terry with certain temperature) is unknown. In contrast, our results show that even when the ranking model is known, the optimal policy can remain nonidentifiable due to preference heterogeneity.

\paragraph{AI Alignment under heterogeneous user preferences.}
A growing body of recent works studies algorithms for AI alignment under heterogeneous user preferences. \citet{siththaranjan2023distributional} points out that RLHF implicitly optimizes the normalized Borda count, which can lead to poor outcomes in the social choice setting. To address this, they propose Distributional Preference Learning (DPL), a method that estimates a distribution of score values for each alternative. 
Another line of work deals with heterogeneity by clustering user preferences and learning several reward models at once, then aggregate the learned reward models using various techniques such as max-min optimization, which optimizes the worst-case reward among all clusters~\citep{chidambaram2024direct,chakraborty2024maxmin}, or through aggregation rules motivated by axiomatic properties in social choice theory~\citep{zhong2024provable,park2024rlhf}.
\citet{poddarpersonalizing} proposes a variational inference approach that infers user-specific latent variables from preference data which enables steerable personalized language models.
\citet{chen2024pal} proposes a framework based on the ideal point model, which learns a latent space of user preferences that can few-shot generalize to unseen users.

\paragraph{Statistical and Axiomatic Perspectives on Preference Aggregation.}
Maximum likelihood estimators (MLE), which serves as the core of the widely-used RLHF pipeline, can be viewed as voting rules: given a set of rankings, they output a score for each alternative, thereby producing a single aggregated ranking.
This connection was first observed by \citet{conitzer2005common}, who show that any scoring-based voting rule is a maximum likelihood estimator under a specific noise model.
A rich literature in social choice theory has studied the axiomatic properties of such MLE-based voting rules under various randomized ranking models~\citep{azari2014statistical,xia2018bayesian,noothigattu2020axioms,ge2024axioms,procaccia2025clone}. Notably, \citet{ge2024axioms} analyzes the axiomatic properties of MLE-based AI alignment methods under the Bradley-Terry model for linear utility functions.

On the learning side, several works study the problem of learning mixture models from ranking data, see textbooks~\citep{alvo2014statistical,xia2019learning} for a comprehensive overview. Recently, \citet{wang2024metric,tatli2024learning} focus on learning metric spaces from pairwise preferences.
Our work is notably related to the results on the non-identifiability of learning mixture of Bradley-Terry models from pairwise or $k$-wise preferences~\citep{zhao16mixture,zhao2019learning,zhang2022identifiability}. We build on these results to quantify the loss of utility due to non-identifiability by proving a voting-rule independent distortion lower bound.

\paragraph{Distortion of randomized voting and RLHF.}
The framework of implicit utilitarian voting, i.e., of comparing voting rules in terms of their distortion was introduced by \citet{PR06a}, which has since sparked a large body of work\,---\,both in the original utility setting~\citep{CP11a,BCH+12,EKP+24,FPW23,BNP+21} and in the metric setting~\citep{ABE+18,AFS+21,GHS20,CRW+24,KK22a}.
Several recent works have highlighted the importance of using distortion as a metric to evaluate the quality of AI alignment methods.
\citet{dai2024mapping} draw a conceptual connection between social choice and RLHF, and propose to apply the notion of distortion to RLHF.
\citet{goyal2024metric} uses alignment as motivation for studying the metric distortion of probabilistic voting rules under Bradley-Terry and other random utility models, where the voters and candidates are assumed to lie in a common metric space satisfying triangle inequality.
We not only study the non-metric distortion (which is more expressive), but also go beyond the social choice setting to consider the alignment setting in which output policies are constrained to remain close to a given reference policy.
More broadly, our work also contributes to the growing line of research on the intersection of social choice theory and RLHF, as advocated in recent position papers \citep{conitzer2024position,mishra2023ai}.

  \section{Linearization Lemma for Expected Win-Rates}
\label{app:linearizationproof}

\lemsigmoidlinear*
\begin{proof}[Proof of \Cref{lemma:sigmoid-linear}]
    We prove this lemma by linearizing the sigmoid function $\sigma(z)=\frac{1}{1+e^{-z}}$ in the domain of $z\in[-\beta,\beta]$. When $z\in[0,\beta]$, the sigmoid function is concave and increasing, thus we have $\sigma(z)\le \sigma'(0)\cdot z+\sigma(0)=\frac{1}{2}+Lz$, where $L=\frac{1}{4}$ is the derivative $\sigma'(0)$. When $z\in[-\beta,0]$, the sigmoid function is convex, thus we have $\sigma(z)\le \left(1+\frac{z}{\beta}\right)\sigma(0)-\frac{z}{\beta}\sigma(-\beta)=\frac{1}{2}+l_\beta\cdot z$, where $l_\beta=\frac{\sigma(\beta)-\frac{1}{2}}{\beta}$ is the slope of the line connecting $(-\beta,\sigma(-\beta))$ and $(0,\sigma(0))$. 

    Plugging the above bounds into $\sigma\Big(\beta\cdot(u(x)-u(y))\Big)$, we have that 
    \begin{align*}
        \sigma\Big(\beta\cdot(u(x)-u(y))\Big)-\frac{1}{2}\le{}& \beta\cdot(u(x)-u(y))\cdot\left(L\cdot\1{u(x)-u(y)\ge0}+l_\beta\cdot\1{u(x)-u(y)<0}\right)\\
        \le{}& \beta\cdot\left(L\cdot u(x) - l_\beta\cdot u(y)\right).
    \end{align*}
    Finally, taking an expectation over $u\sim \cD$, we have that 
    \begin{align*}
        p(x\succ y)-\frac{1}{2}\le  \beta\left(L\cdot \Ex{u\sim \cD}{u(x)} - l_\beta\cdot \Ex{u\sim \cD}{u(y)}\right)=\beta\left(L\cdot \SW(x) - l_\beta\cdot \SW(y)\right).
    \end{align*}
    This completes the proof of the upper bound. The lower bound follows from applying the same argument to $p(y\succ x)$ and using the fact that $p(x\succ y)=1-p(y\succ x)$.
\end{proof}

  \section{Finite-Sample Convergence Bounds}
\label{app:concentration}

In this section, we use standard concentration techniques to derive finite-sample convergence bounds for the normalized Borda score and the empirical win rate. The lemmas presented in this section will serve as a building block for proving finite-sample guarantees for the alignment methods studied in \Cref{sec:social-choice,sec:rlhf}.

\subsection{Estimation Error of Win-Rates}
\label{app:winrate-finite-sample}

\begin{lemma}
    \label{lem:winrate-estimation-error}
    For any instance $\cD$ with any number of alternatives $m$, any distribution $\mu$ over alternatives with $\mu_{\min}=\min_{x \in A} \mu(x)$, and $n$ i.i.d. users sampled from $\cD$ where each user labels $d$ comparison pairs following the Bradley-Terry model with temperature $\beta$, we have that with probability at least $1 - \delta$ where $\delta\ge m^2 \exp\left(-\frac{nd\mu_{\min}^2}{8}\right)$, the empirical win rates $p_n(x\succ y):=\frac{\#(x \succ y)}{\#(x \succ y) + \#(y \succ x)}$ satisfies that:
    \begin{align*}
        \forall x,y \in A, \quad \left| p_n(x \succ y) - p(x \succ y) \right| \le O\left( \sqrt{ \frac{ \log(m/\delta) }{ n \cdot \min\{1,\, d \cdot \mu_{\min}^2\} } }
        +\frac{\log(m/\delta)}{n\mu_{\min}^2}\right).
    \end{align*}
    \end{lemma}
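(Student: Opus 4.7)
The natural approach is to decompose the per-pair error as
\[
p_n(x\succ y) - p(x\succ y) \;=\; \frac{\#(x\succ y) - p(x\succ y)\cdot N_{xy}}{N_{xy}}, \qquad N_{xy} \coloneqq \#(x\succ y) + \#(y\succ x),
\]
and control the denominator from below and the centered numerator in magnitude separately. For the denominator, $N_{xy}$ is a sum of $nd$ i.i.d.\ Bernoulli indicators $P_{ij} \coloneqq \indicator{\{x_i^j, y_i^j\} = \{x,y\}}$, each with mean at least $\mu_{\min}^2$. A standard multiplicative Chernoff bound gives $N_{xy} \ge \tfrac{1}{2} nd\mu_{\min}^2$ with probability at least $1 - \exp(-nd\mu_{\min}^2/8)$; the hypothesis $\delta \ge m^2 \exp(-nd\mu_{\min}^2/8)$ is exactly what is needed to union-bound this event over all $m^2$ pairs.

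For the centered numerator, I would exploit that data across users are independent even though the $d$ answers within a single user may be arbitrarily correlated. Writing $Z_{ij} \coloneqq \indicator{\{x_i^j, y_i^j\} = \{x,y\} \text{ and } x\succ_i y}$, the centered numerator equals $\sum_i T_i$ with $T_i \coloneqq \sum_{j=1}^d (Z_{ij} - p(x\succ y)\, P_{ij})$. A short calculation using the independence of pair draws from $u_i$ together with the identity $\Ex{u\sim\cD}{\sigma(\beta(u(x)-u(y)))} = p(x\succ y)$ gives $\mathbb{E}[T_i] = 0$, and trivially $|T_i| \le d$. Hoeffding alone would give only $\Var{T_i} \le d^2$, which translates to a leading term of $\sqrt{\log(m/\delta)/(n\mu_{\min}^2)}$ and loses the $\min\{1, d\mu_{\min}^2\}$ improvement; recovering the claimed bound requires Bernstein's inequality with a sharper variance estimate.

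The variance bound is the technical crux. Even under worst-case within-user correlation, the cross term $\mathbb{E}[Z_{ij} Z_{ij'}]$ for $j \neq j'$ cannot exceed $\mathbb{E}[P_{ij} P_{ij'}] = (2\mu(x)\mu(y))^2$, using the across-$j$ independence of pair sampling. Expanding $\Var{T_i}$ and summing the $d$ diagonal terms (each $O(\mu(x)\mu(y))$) with the $d(d-1)$ cross terms (each $O(\mu(x)^2\mu(y)^2)$) yields $\sum_i \Var{T_i} = O(nd\mu_{\min}^2 + nd^2\mu_{\min}^4)$. Bernstein's inequality then gives, for each fixed pair, with probability at least $1 - \delta/(2m^2)$,
\[
\Big|\textstyle\sum_i T_i\Big| \;=\; O\!\left(\sqrt{(nd\mu_{\min}^2 + nd^2\mu_{\min}^4)\log(m/\delta)} \;+\; d\log(m/\delta)\right).
\]

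Dividing this numerator bound by $N_{xy} \ge \tfrac{1}{2}nd\mu_{\min}^2$ and union-bounding over the $m^2$ pairs produces exactly the claimed guarantee: the Bernstein variance term contributes $\sqrt{\log(m/\delta)/(n\min\{1, d\mu_{\min}^2\})}$ and the Bernstein max term contributes $\log(m/\delta)/(n\mu_{\min}^2)$. The main obstacle I anticipate is precisely the cross-term variance bound---the naive estimate $\mathbb{E}[Z_{ij}Z_{ij'}] \le \mathbb{E}[Z_{ij}] = O(\mu(x)\mu(y))$ by Cauchy--Schwarz would cost a factor of $1/(\mu(x)\mu(y))$ and eliminate the improvement in the leading term for large $d$; the right observation is that pair selection, not the possibly correlated win outcomes within a user, is the effective bottleneck in the cross covariance.
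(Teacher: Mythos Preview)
Your proposal is correct and takes essentially the same approach as the paper: lower-bound the denominator $N_{xy}$ via multiplicative Chernoff (this is where the hypothesis $\delta\ge m^2\exp(-nd\mu_{\min}^2/8)$ enters), upper-bound the centered numerator via Bernstein with the key variance estimate $\mathrm{Var}(T_i)=O(dq+d^2q^2)$ for $q=2\mu(x)\mu(y)$, divide, and union-bound over pairs. The only cosmetic difference is in how that variance bound is obtained: the paper assumes a user's repeated answers to the same pair $\{x,y\}$ are consistent, rewrites the numerator as $\sum_i k_i(p_i-p)$ with $k_i\sim\mathrm{Binomial}(d,q)$ independent of $p_i\sim\mathrm{Bern}(p)$, and bounds $\mathrm{Var}(k_i(p_i-p))\le\mathbb{E}[k_i^2]=dq(1-q)+d^2q^2$; your cross-term argument via $Z_{ij}\le P_{ij}$ handles arbitrary within-user correlation directly and lands at the same bound.
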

\begin{proof}[Proof of \Cref{lem:winrate-estimation-error}]
    We first bound the estimation error of $p_n(x \succ y)$ for a fixed pair $x,y\in\binom{A}{2}$. Here we assume $x\neq y$ without loss of generality, because the estimation error for the $x=y$ case is $0$.
    
    Since each voter $i\in [n]$ is asked to label $d$ pairwise comparisons, if each of them are asked to label a pair $\{x,y\}$ multiple times, their answer will be consistent. Therefore, we can equivalently rewrite the process of sampling $p_n(x \succ y)$ as follows:
    \begin{enumerate}
        \item Draw $k_1,\ldots,k_n\simiid \Binomial(d,q)$ to represent the number of times the $i$-th voter is asked to label $\{x,y\}$, where $q:=2\mu(x)\mu(y)$ is the probability that each comparison pair is $\{x,y\}$;
        \item Draw $p_1,\ldots,p_n\simiid \Bern(p)$ to represent the preference of the $i$-th voter on pair $\{x,y\}$, where $p:=p(x\succ y)$ is the probability that a fresh voter prefers $x$ over $y$. In particular, we have $p_i\perp k_i$ because the sampling of voters and comparison pairs are independent;
        \item Each voter $i\in[n]$ contributes $k_i\cdot p_i$ to $\#(x\succ y)$ and $k_i\cdot (1-p_i)$ to $\#(y\succ x)$.
    \end{enumerate}
    As a result, the empirical win rate $p_n(x \succ y)$ can be rewritten as:
    \begin{align*}
        p_n(x \succ y) =
        \frac{\#(x \succ y)}{\#(x \succ y) + \#(y \succ x)} =
        \frac{\sum_{i=1}^n k_i p_i}{\sum_{i=1}^n k_i}.
    \end{align*}
    The error term is then given by:
    \begin{align*}
         p_n(x \succ y) - p(x \succ y)  =
         \frac{\sum_{i=1}^n k_i p_i}{\sum_{i=1}^n k_i} - p 
         = \frac{\sum_{i=1}^n k_i (p_i - p)}{\sum_{i=1}^n k_i}.
    \end{align*}

    Now we use Bernstein's inequality to bound the numerator. We start by bounding the variance of random variable $Z_i:=k_i(p_i-p)$. Note that $\Ex{}{Z_i}=\Ex{}{k_i}\cdot\Ex{}{p_i-p}=0$ because $k_i$ and $p_i-p$ are independent. Therefore, we have
    \begin{align*}
        \Var{Z_i} = 
        \Ex{}{Z_i^2}
        = \Ex{}{k_i^2} \cdot \Ex{}{(p_i-p)^2}
        \le \Ex{}{k_i^2} =\Var{k_i}+(\Ex{}{k_i})^2
        =dq(1-q)+d^2q^2.
    \end{align*}
    According to Bernstein's inequality, we have that with probability at least $1-\delta$,
    \begin{align}
        \left|\sum_{i=1}^n Z_i\right|
        =\left|\sum_{i=1}^n k_i (p_i-p)\right|
        \le \sqrt{2n(dq(1-q)+d^2q^2)\log(2/\delta)}
        + 3d\log(2/\delta).
        \label{ineq:numerator-winrates}
    \end{align}
    Now we bound the denominator. Note that $\Ex{}{k_i}=dq$ and $\Var{k_i}=dq(1-q)$. 
    From the Chernoff bound, we have that with probability at least $1-e^{-\frac{ndq}{8}}$,
    \begin{align}
        \sum_{i=1}^n k_i \ge  \frac{n\Ex{}{k_i}}{2}=\frac{ndq}{2}.
        \label{ineq:denominator-winrates}
    \end{align}
   Combining the bounds in \Cref{ineq:numerator-winrates} and \Cref{ineq:denominator-winrates}, we have that 
   when $\delta\ge e^{-\frac{ndq}{8}}$,
   with probability at least $1-2\delta$, for a fixed pair $x,y\in A$, we have
   \begin{align*}
    \left|p_n(x\succ y)-p(x\succ y)\right|
    \le{}& \frac{\sqrt{2n(dq(1-q)+d^2q^2)\log(2/\delta)}+3d\log(2/\delta)}{ndq/2}\\
    \le{}& O\left(\sqrt{\frac{(1-q+dq)\log(1/\delta)}{ndq}}+\frac{\log(1/\delta)}{nq}\right)\\
    \intertext{where we use the fact that $\frac{1-q+dq}{dq}\le\frac{2}{\min\{1,dq\}}$ and $q=2\mu(x)\mu(y)\ge\mu_{\min}^2$ to obtain:}
    \le{}& O\left(\sqrt{\frac{\log(1/\delta)}{n\min\{1,d\cdot\mu_{\min}^2\}}}+\frac{\log(1/\delta)}{n\mu_{\min}^2}\right).
   \end{align*}
    Finally, by union bound over all $\binom{m}{2}$ pairs, we have that with probability at least $1-\delta$ where $\delta\ge m^2 \exp\left(-\frac{nd\mu_{\min}^2}{8}\right)$, the following holds simultaneously for all $x,y\in A$:
    \begin{align*}
     \left| p_n(x \succ y) - p(x \succ y) \right| \le O\left( \sqrt{ \frac{ \log(m/\delta) }{ n \cdot \min\{1,\, d \cdot \mu_{\min}^2\} } }
        +\frac{\log(m/\delta)}{n\mu_{\min}^2}\right).
    \end{align*}
The proof is complete.
\end{proof}

\subsection{Estimation Error of Normalized Borda Score}
\label{app:borda-finite-sample}

\begin{lemma}
    \label{lem:borda-estimation-error}
    For any instance $\cD$ with any number of alternatives $m$, any distribution $\mu$ over alternatives with $\mu_{\min}=\min_{x \in A} \mu(x)$, and $n$ i.i.d. users sampled from $\cD$ where each user labels $d$ comparison pairs, the normalized Borda score $\BC_n(x)$ of any alternative $x \in A$ satisfies that with probability at least $1-\delta$ where $\delta\ge 2m\exp(-\frac{nd\mu_{\min}}{8})$,
    \begin{align}
        \forall x \in A, \quad \left| \BC_n(x) - \BC^\star(x) \right| \le O\left(
            \sqrt{\frac{\log(m/\delta)}{n\cdot\min\{1,d\cdot\mu_{\min}^2\}}}+\frac{m\log(m/\delta)}{n\mu_{\min}}
        \right),
    \end{align}
    where $\BC^\star(x)$ is the limiting normalized Borda score of candidate $x$, defined as
    \begin{align}
        \BC^\star(x) := \sum_{y \in A} \mu(y) \cdot p(x \succ y) = \frac{1}{2} \mu(x) + \sum_{y \neq x} \mu(y) \cdot p(x \succ y).
    \end{align}
    
\end{lemma}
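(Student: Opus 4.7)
}
The strategy is to express the empirical Borda score as a weighted combination of empirical pairwise win-rates (where the weights are the empirical frequencies of each comparison pair), and then decouple the two sources of randomness: fluctuations in the win-rates, which are already controlled by \cref{lem:winrate-estimation-error}, and fluctuations in the empirical pair-frequency weights, which are governed by Binomial concentration of the pair counts $N(x,y) \coloneqq \#(x\succ y)+\#(y \succ x)$. Concretely, for each alternative $x\in A$, I would write
\begin{align*}
\BC_n(x) \;=\; \sum_{y\in A} \hat{w}_{x,y}\cdot \hat{p}_{x,y},
\end{align*}
where $\hat{w}_{x,y} = N(x,y)/N(x)$ is the empirical fraction of $x$'s comparisons against $y$ (treating the self-comparison term $\#(x\succ x)$ with the convention $\hat{p}_{x,x}=1/2$ consistent with the defining formula for $\BC^\star(x)$), and $\hat{p}_{x,y}=p_n(x\succ y)$.

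First I would apply \cref{lem:winrate-estimation-error} via a union bound over the $\binom{m}{2}$ pairs to get that, with probability at least $1-\delta/2$, all the empirical win-rates $\hat{p}_{x,y}$ satisfy $|\hat{p}_{x,y}-p(x\succ y)|\le \varepsilon_{\mathrm{wr}}$ uniformly, with $\varepsilon_{\mathrm{wr}}=O(\sqrt{\log(m/\delta)/(n\min\{1,d\mu_{\min}^2\})}+\log(m/\delta)/(n\mu_{\min}^2))$. Next I would control the weights: each $N(x,y)$ is a Binomial with mean at least $2nd\mu_{\min}^2$ (and $N(x,x)$ at least $nd\mu_{\min}^2$), so Bernstein's inequality together with a union bound over the $O(m^2)$ pairs yields
\begin{align*}
|N(x,y)-\Ex{}{N(x,y)}| \;\le\; O\bigl(\sqrt{\Ex{}{N(x,y)}\cdot\log(m/\delta)}+\log(m/\delta)\bigr)
\end{align*}
simultaneously for all pairs, with probability at least $1-\delta/2$. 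A multiplicative Chernoff bound on $N(x)=\sum_y N(x,y) \geq nd\mu_{\min}$ additionally ensures that $N(x)$ stays within a constant factor of its mean.

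The bulk of the work then lies in decomposing
\begin{align*}
\BC_n(x)-\BC^\star(x) \;=\; \underbrace{\sum_y \hat{w}_{x,y}\bigl(\hat{p}_{x,y}-p(x\succ y)\bigr)}_{\text{win-rate error}} \;+\; \underbrace{\sum_y \bigl(\hat{w}_{x,y}-w^\star_{x,y}\bigr)\,p(x\succ y)}_{\text{weighting error}},
\end{align*}
where $w^\star_{x,y}$ is the large-$n$ limit of $\hat{w}_{x,y}$. Since $\hat{w}_{x,y}\ge0$ and $\sum_y \hat{w}_{x,y}=1$, the win-rate error is at most $\varepsilon_{\mathrm{wr}}$. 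For the weighting error, I would treat $\hat{w}_{x,y}=N(x,y)/N(x)$ as a ratio, expand the perturbation in numerator and denominator using the Chernoff bounds from Step 2, and bound the pointwise deviation by $O(\sqrt{\log(m/\delta)/\Ex{}{N(x,y)}}+\log(m/\delta)/\Ex{}{N(x,y)})$; summing $|\hat{w}_{x,y}-w^\star_{x,y}|$ over the $m$ alternatives (and exploiting that the square-root contributions can be absorbed into $\varepsilon_{\mathrm{wr}}$ by Cauchy--Schwarz) yields the advertised additive term $O(m\log(m/\delta)/(n\mu_{\min}))$. Finally, a union bound over the $m$ choices of $x$ completes the argument.

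The main obstacle I expect is the third step: getting the correct $m$- and $\mu_{\min}$-dependence of the weighting error without losing factors. The subtlety is that the per-pair relative error in $N(x,y)$ can be large when $\mu(y)$ is small, so the bound must be applied at the level of the \emph{additive} deviation of $\hat{w}_{x,y}$ from $w^\star_{x,y}$, absorbing the square-root (Gaussian) part of Bernstein into the win-rate rate $\varepsilon_{\mathrm{wr}}$ via Cauchy--Schwarz while keeping only the additive-Poisson part as the second term of the final bound. The condition $\delta\ge 2m\exp(-nd\mu_{\min}/8)$ is precisely what makes the multiplicative Chernoff bound on $N(x)$ applicable uniformly in $x$.
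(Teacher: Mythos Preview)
Your decomposition into a ``win-rate error'' and a ``weighting error'' is natural, but black-boxing \cref{lem:winrate-estimation-error} to bound the win-rate error by the uniform $\varepsilon_{\mathrm{wr}}$ overshoots the target dependence on $\mu_{\min}$. The second (Poisson) term of $\varepsilon_{\mathrm{wr}}$ from \cref{lem:winrate-estimation-error} is $\log(m/\delta)/(n\mu_{\min}^{2})$, whereas the lemma you are proving asks for $m\log(m/\delta)/(n\mu_{\min})$; since $m\mu_{\min}\le 1$ (with equality only for uniform $\mu$), the former is in general strictly larger, so your step ``the win-rate error is at most $\varepsilon_{\mathrm{wr}}$'' already exceeds the stated bound before you ever touch the weighting error. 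Relatedly, \cref{lem:winrate-estimation-error} carries the precondition $\delta\ge m^{2}\exp(-nd\mu_{\min}^{2}/8)$, which is not implied by the weaker condition $\delta\ge 2m\exp(-nd\mu_{\min}/8)$ assumed here.

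The paper sidesteps this by not routing through individual win-rates. It writes $\BC_n(x)=W_n(x)/T_n(x)$ and uses the ratio decomposition
\[
|\BC_n(x)-\BC^\star(x)|\;\le\;\frac{|W_n(x)-\Ex{}{W_n(x)}|}{T_n(x)}\;+\;\frac{|T_n(x)-\Ex{}{T_n(x)}|}{T_n(x)},
\]
then bounds $|W_n(x)-\Ex{}{W_n(x)}|$ term-by-term (Bernstein on each $\#(x\succ y)$, exactly as inside the proof of \cref{lem:winrate-estimation-error}) and divides only once by $T_n(x)\gtrsim nd\mu_{\min}$. The key point is that the Bernstein Poisson term per pair is $O(d\log(1/\delta'))$, so summing over $m$ alternatives and dividing by $nd\mu(x)$ gives $m\log(1/\delta')/(n\mu_{\min})$---the $d$ cancels, and only one factor of $\mu_{\min}$ appears because the denominator is $T_n(x)$, not $N(x,y)$. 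Your route can be repaired by noting $\hat w_{x,y}\,\hat p_{x,y}=\#(x\succ y)/T_n(x)$ and aggregating the win-rate error at the numerator level instead of invoking a uniform bound on $|\hat p_{x,y}-p(x\succ y)|$; but once you do that, you have reproduced the paper's argument.
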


\begin{proof}
    We first bound the estimation error $|\BC_n(x)-\BC^\star(x)|$ for a fixed alternative $x\in A$. For notational simplicity, we use $T_n(x)$ to denote the number of comparison pairs involving $x$, and $W_n(x)$ to denote the number of comparison pairs where $x$ is the winner, i.e.,
\begin{align*}
    T_n(x)=2\#(x \succ x) + \sum_{y \neq x} \#(x \succ y) + \#(y \succ x),
    \quad
    W_n(x)=\#(x \succ x) + \sum_{y \neq x} \#(x \succ y).
\end{align*}
The normalized Borda score of $x$ is then given by $\BC_n(x) = \frac{W_n(x)}{T_n(x)}$.
It is then easy to see that 
\begin{align*}
    \Ex{}{T_n(x)} ={}& nd \left(2\mu(x)^2 + \sum_{y \neq x} 2\mu(x)\mu(y)\right)=2nd\mu(x);\\
    \Ex{}{W_n(x)} ={}& nd \left(\mu(x)^2 + \sum_{y \neq x} \mu(x)\mu(y)p(x\succ y)\right)=nd\mu(x)\left(\frac{1}{2}\mu(x)+ \sum_{y \neq x} \mu(y)p(x\succ y)\right).
\end{align*}
The limiting Borda score $\BC^\star(x)$ is then given by the ratio of the above two expectations, i.e.,
\begin{align*}
    \BC^\star(x) ={}& \frac{\Ex{}{W_n(x)}}{\Ex{}{T_n(x)}}
    = \frac{1}{2}\mu(x) + \sum_{y \neq x} \mu(y)p(x\succ y).
\end{align*}

We can thus decompose the estimation error $|\BC_n(x)-\BC^\star(x)|$ as follows:
\begin{align*}
    |\BC_n(x)-\BC^\star(x)| ={}& \left|\frac{W_n(x)}{T_n(x)} - \frac{\Ex{}{W_n(x)}}{\Ex{}{T_n(x)}}\right|\\
    \le{}& \frac{|W_n(x)-\Ex{}{W_n(x)}|}{T_n(x)} + \underbrace{\frac{\Ex{}{W_n(x)}}{\Ex{}{T_n(x)}}}_{\BC^\star(x)\le1} \cdot \frac{\left|T_n(x)-\Ex{}{T_n(x)}\right|}{T_n(x)}\\
    \le{}& \frac{|W_n(x)-\Ex{}{W_n(x)}|}{T_n(x)} + \frac{\left|T_n(x)-\Ex{}{T_n(x)}\right|}{T_n(x)}.
\end{align*}

Now we bound the two terms $|W_n(x)-\Ex{}{W_n(x)}|$ and $|T_n(x)-\Ex{}{T_n(x)}|$ separately, and apply the union bound at the end.

\paragraph{(I). Bounding $|W_n(x)-\Ex{}{W_n(x)}|$:}
For each $y\neq x$, we can write bound the deviation $|\#(x\succ y)-\Ex{}{\#(x\succ y)}|$ using the same argument as in the proof of \Cref{lem:winrate-estimation-error}. Specifically, we can write $\#(x\succ y)$ as a sum of i.i.d. random variables $k_i p_i$ where $k_i\sim\Binomial(d,q_{x,y})$ and $p_i\sim\Bern(p(x\succ y))$, where $q_{x,y}=2\mu(x)\mu(y)$ is the probability that each comparison pair is $\{x,y\}$. Therefore, we have
\begin{align*}
    \Var{k_ip_i}={}&\Var{k_i}\cdot\Var{p_i}+\Var{k_i}\Ex{}{p_i^2}+\Var{p_i}\Ex{}{k_i}^2\\
    \le{}& 2d q_{x,y}(1-q_{x,y}+dq_{x,y})\\
    \intertext{since $1-q_{x,y}+dq_{x,y}\le\frac{2dq_{x,y}}{\min\{1,dq_{x,y}\}}\le\frac{2dq_{x,y}}{\min\{1,d\mu_{\min}^2\}}$, we can further bound the variance as}
    \le{}& \frac{2(d q_{x,y})^2}{\min\{1,d\mu_{\min}^2\}} = \frac{8(d\mu(x)\mu(y))^2}{\min\{1,d\mu_{\min}^2\}}.
\end{align*}
Thus, by Bernstein's inequality, with probability at least $1-\delta'$,
\begin{align*}
    \left|\#(x\succ y)-\Ex{}{\#(x\succ y)}\right|
    \le{}& 4 d \mu(x)\mu(y)\sqrt{\frac{n\log(2/\delta')}{\min\{1,d\cdot\mu_{\min}^2\}}}+3d\log(2/\delta').
\end{align*}
On the other hand, for the comparison of $x$ with itself, we have that $\#(x\succ x)\sim\Binomial(nd,\mu(x)^2)$. Therefore, with probability at least $1-\delta'$,
\begin{align*}
    \left|\#(x\succ x)-\Ex{}{\#(x\succ x)}\right|
    \le{}& \sqrt{2nd\mu(x)^2\log(2/\delta')}+3d\log(2/\delta')\\
    \le{}& 4d\mu(x)^2\sqrt{\frac{n\log(2/\delta')}{\min\{1,d\cdot\mu_{\min}^2\}}}+3d\log(2/\delta').
\end{align*}

Applying a union bound over all the $m$ alternatives $y\in A$, we have that with probability at least $1-m\delta'$,
\begin{align*}
    |W_n(x)-\Ex{}{W_n(x)}|
    \le{}& \sum_{y\neq x} \left|\#(x\succ y)-\Ex{}{\#(x\succ y)}\right|\\
    \le{}& \sum_{y\neq x} \left(4 d \mu(x)\mu(y)\sqrt{\frac{n\log(2/\delta')}{\min\{1,d\cdot\mu_{\min}^2\}}}+3d\log(2/\delta')\right)\\
    \le{}& 4d\mu(x)\sqrt{\frac{n\log(2/\delta')}{\min\{1,d\cdot\mu_{\min}^2\}}}+3md\log(2/\delta').
\end{align*}

\paragraph{(II). Bounding $|T_n(x)-\Ex{}{T_n(x)}|$:}
We can write $T_n(x)$ as a sum of i.i.d. random variables: 
\begin{align*}
    T_n=\sum_{i=1}^{n}\sum_{j=1}^{d} (\1{x_i^j=x}+\1{y_i^j=x})
\end{align*}
Since each comparison pair $x_i^j,y_i^j$ is sampled independently from $\mu\times \mu$, we have that $T_n\sim\Binomial(2nd,\mu(x))$. Therefore,  with probability at least $1-\delta'$,
\begin{align*}
    \left|T_n(x)-\Ex{}{T_n(x)}\right|\le 2\sqrt{nd\mu(x)\log(2/\delta')}+3\log(2/\delta').
\end{align*}
In addition, with probability at least $1-\exp(-\frac{nd\mu(x)}{4})$, we also have
\begin{align*}
    T_n\ge\frac{\Ex{}{T_n}}{2}= nd\mu(x).
\end{align*}

\paragraph{(III). Combining the two bounds:}

Finally, combining the above bounds on $|W_n(x)-\Ex{}{W_n(x)}|$ and $|T_n(x)-\Ex{}{T_n(x)}|$, together with the bound on the denominator $T_n(x)$, we have that with probability at least $1-2m\delta'-\exp(-\frac{nd\mu(x)}{4})$,
\begin{align*}
    |\BC_n(x)-\BC^\star(x)|
    \le{}& \frac{|W_n(x)-\Ex{}{W_n(x)}|}{T_n(x)} + \frac{\left|T_n(x)-\Ex{}{T_n(x)}\right|}{T_n(x)}\\
    \lesssim{}& \frac{1}{nd\mu(x)}\left(d\mu(x)\sqrt{\frac{n\log(1/\delta')}{\min\{1,d\cdot\mu_{\min}^2\}}}+md\log(1/\delta')+\sqrt{nd\mu(x)\log(2/\delta')}\right)\\
    \lesssim{}& \sqrt{\frac{\log(1/\delta')}{n\cdot\min\{1,d\cdot\mu_{\min}^2\}}}+\frac{m\log(1/\delta')}{n\mu_{\min}}.
\end{align*}
Finally, setting $\delta'=\frac{\delta}{4m^2}$ and taking a union bound over all the $m$ alternatives $x\in A$, we have that 
when $\delta\ge 2m\exp(-\frac{nd\mu_{\min}}{8})$,
with probability at least $1-\delta$, the above bound holds simultaneously for all $x\in A$. This completes the proof.
\end{proof}

  \section{Supplemental Materials for Section~\ref{sec:social-choice}}
  \subsection{Upper Bound for Borda}
\label{app:borda}

\thmbordaupper*

\begin{proof}[Proof of \Cref{thm:upperbound_borda}]
    From \Cref{lem:borda-estimation-error}, we have that with probability at least $1-\delta$, all $x\in A$ satisfy that $\left| \BC_n(x) - \BC^\star(x) \right| \le \eps_{n,d}(\delta)$, where 
    \begin{align*}
        \eps_{n,d}(\delta)=O\left(
            \sqrt{\frac{\log(m/\delta)}{n\cdot\min\{1,d\cdot\mu_{\min}^2\}}}+\frac{m\log(m/\delta)}{n\mu_{\min}}
        \right).
    \end{align*}
    Following the proof sketch in \Cref{sec:borda-upper-bound}, we use $\hat{x}=\argmax_{x\in A}\BC_n(x)$ to denote the Borda winner, and $x^\star=\argmax_{x\in A}\SW(x)$ to denote the true utility maximizer.

    Since $\BC_n(\hat{x})\ge\BC_n(x^\star)$, we have 
    \begin{align}
        \BC^\star(\hat{x})-\BC^\star(x^\star)\ge-2\eps_{n,d}(\delta).
        \label{ineq:borda-upper-bound-eps}
    \end{align}

    For the limiting Borda score $\BC^\star(x)$, the argument in \Cref{sec:borda-upper-bound} shows that 
    \begin{align}
        \BC^\star(\hat{x})-\BC^\star(x^\star)\le{}&
        \beta\cdot\left(
            L\cdot\SW(\hat{x})-\ell_\beta\cdot\SW(x^\star)+(L-\ell_\beta)\cdot\SW(\mu)
        \right)\nonumber\\
        \le{}&
        \beta\cdot\left(
            \frac{L^2}{\ell_\beta}\cdot\SW(\hat{x})-\ell_\beta\cdot\SW(x^\star)
        \right)
        \label{ineq:borda-upper-bound-rhs}
    \end{align}
    Therefore, Combining \Cref{ineq:borda-upper-bound-eps,ineq:borda-upper-bound-rhs}, we have
    \begin{align*}
        -2\eps_{n,d}(\delta)\le 
        \beta\cdot\left(
            \frac{L^2}{\ell_\beta}\cdot\SW(\hat{x})-\ell_\beta\cdot\SW(x^\star)
        \right)\ \Rightarrow\ 
        \SW(\hat{x})\ge
        \left(\frac{\ell_\beta}{L}\right)^2\SW(x^\star)-\frac{2\ell_\beta\cdot\eps_{n,d}(\delta)}{\beta L^2}.
    \end{align*}
    Combining this with the failure probability $\delta$ of the above argument, the average utility $\SW_n(\borda)$ satisfies that
    \begin{align*}
        \SW_n(\borda)\ge{}&
        \left(1-\delta\right)\cdot\left(\left(\frac{\ell_\beta}{L}\right)^2\SW(x^\star)-\frac{2\ell_\beta\cdot\eps_{n,d}(\delta)}{\beta L^2}\right)\\
        \ge{}&
        \left(\frac{\ell_\beta}{L}\right)^2\SW(x^\star)-O\left(\frac{\eps_{n,d}(\delta)}{\beta}\cdot\left(\frac{\ell_\beta}{L}\right)+\delta\cdot\left(\frac{\ell_\beta}{L}\right)^2\right).
    \end{align*}
    Finally, we set the failure probability to be
    \begin{align*}
        \delta=\Theta\left(\frac{L}{\beta\cdot\ell_\beta}\cdot\sqrt{\frac{1}{n\cdot\min\{1,d\cdot\mu_{\min}^2\}}}\right)
    \end{align*}
    (which satisfies the condition in \Cref{lem:borda-estimation-error} for large $n$), we have
    \begin{align*}
        \SW_n(\borda)\ge{}&
        \left(\frac{\ell_\beta}{L}\right)^2\SW(x^\star)-O\left(
            \frac{1}{\beta^2}\sqrt{\frac{\log(mn\beta)}{n\cdot\min\{1,d\mu_{\min}^2\}}}
            +\frac{m\log(mn\beta)}{n\cdot \beta^2 \mu_{\min}^2}
        \right),
    \end{align*}
    which completes the proof.    
\end{proof}
  \subsection{Lower Bound for Borda}
\label{app:borda-lower}

\begin{figure}[htbp]
\centering
\includegraphics[width=.8\textwidth]{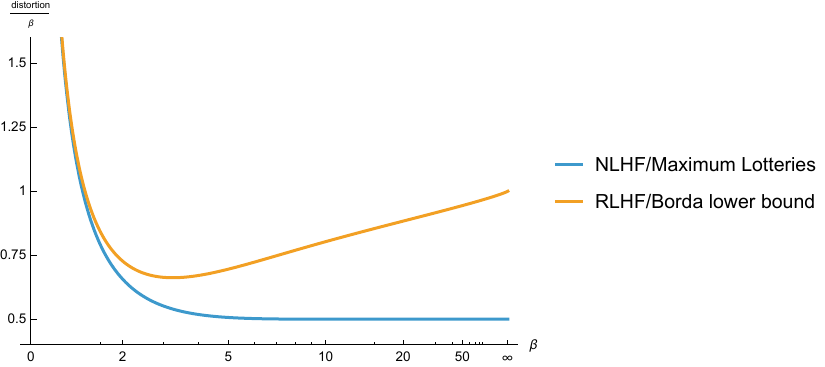}
\label{fig:bordabound}
\caption{Comparison of the distortion achieved by NLHF/Maximum Lotteries and the lower bound on RLHF/Borda in \cref{thm:bordalower}, both as a fraction of $\beta$. The figure illustrates that NLHF has a worse distortion for every value of $\beta>0$ (for worst-case distributions $\mu$); in particular, the distortion of RLHF for large $\beta$ is at least $\beta - o(\beta)$, whereas the distortion of NLHF is $\beta/2 + o(\beta)$.}

\end{figure}

\begin{theorem}[Lower Bound for Borda; Formal Version of \Cref{thm:bordalower}]
\label{thm:bordalower-formal}
For any $\beta > 0$ and $m \geq 3$, the Borda voting rule (and, hence, RHLF) cannot guarantee a distortion better than $\max_{0 < \gamma < 1} \frac{\beta}{2} \frac{1 + e^{-\beta}}{1 - e^{-\beta}} \cdot \big( 1 - \gamma + \frac{\sigma(\beta \gamma) - 1/2}{\sigma(\beta) - 1/2} \big)$. 
This bound is strictly higher than the voting-rule independent lower bound $\frac{\beta}{2} \frac{1 + e^{-\beta}}{1 - e^{-\beta}}$ for all $\beta$ and is at least $(1 - o(1)) \, \beta$ as $\beta \to \infty$.
\end{theorem}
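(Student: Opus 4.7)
The plan is to exhibit, for each $\gamma \in (0, 1)$, a family of instances on which Borda asymptotically selects an alternative whose average utility is a factor of $\frac{L}{\ell_\beta}(1 - \gamma + t/s)$ below the utility-maximizing alternative, where $s = \sigma(\beta) - 1/2$ and $t = \sigma(\beta \gamma) - 1/2$. Since the social choice setting is the special case $\tau \to \infty$ of the alignment setting, a distortion lower bound on Borda here transfers to RLHF in the alignment setting.

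For each $\gamma$, I would consider an instance with $m = 3$ alternatives $a, b, c$ (padded with zero-utility dummies if $m > 3$), uniform $\mu$, and a user distribution $\cD$ that places (i) a mass on users with $u(a) = 1$ and all other utilities $0$, (ii) a mass on users with $u(a) = 1 - \gamma$ and all other utilities $0$, and (iii) the remaining mass on users with $u(b) = \gamma$ and all other utilities $0$. The first step is to express $\BC^\star(x) = \sum_y \mu(y)\, p(x \succ y)$ as an affine function of the mixture weights, using the expected win-rate $p(x \succ y) = \mathbb{E}_{u \sim \cD}[\sigma(\beta (u(x) - u(y)))]$. The second step is to identify the parameter region in which $b$ attains the strictly maximum expected Borda score and to compute $\SW(a)/\SW(b)$ at the boundary of this region (after a small perturbation that makes Borda strictly prefer $b$). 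Tuning the mixture weights should yield the claimed factor. The finite-sample convergence $\BC_n(x) \to \BC^\star(x)$ from \Cref{lem:borda-estimation-error} then upgrades the expected-value calculation to a lower bound on $\SW_n(\borda)$ in the $n \to \infty$ limit.

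The main obstacle is designing the mixture so that the resulting distortion is exactly $\frac{L}{\ell_\beta}(1 - \gamma + t/s)$ rather than only $\frac{L}{\ell_\beta} \cdot (t/s)/\gamma$, which is what a naive two-group construction (types (i) and (iii) only) would yield and which merely matches the voting-rule-independent bound of \Cref{thm:lowerbound_alg_independent} without separating Borda from Maximal Lotteries. I expect the critical ingredient to be the moderate $a$-supporter group (ii): it adds $q(1-\gamma)$ to $\SW(a)$ while shifting the Borda-tie condition by a weight involving $s_{1-\gamma} = \sigma(\beta(1-\gamma)) - 1/2$ rather than $s$. Because the sigmoid is steeper on $[0, \beta\gamma]$ than on $[\beta(1-\gamma), \beta]$, the interplay of the two slopes $s$ and $s_{1-\gamma}$ against $t$ should produce the additive structure $(1 - \gamma) + t/s$. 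If the three-group construction fails to realize exactly this factor, I would enlarge the construction by splitting group (ii) across a fourth alternative, so that the $(1-\gamma)$-utility contribution is decoupled from Borda's $a$-versus-$b$ comparison.

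For the qualitative consequences: the ``bounded away from $\frac{L}{\ell_\beta}$'' claim follows because strict concavity of $\sigma$ on $[0, \beta]$ gives $t > \gamma s$ for every $\gamma \in (0, 1)$, so $(1 - \gamma + t/s) > 1$ and hence $\max_\gamma > L/\ell_\beta$ strictly for every $\beta > 0$. For the asymptotic $(1 - o(1))\beta$ bound as $\beta \to \infty$, I would choose $\gamma = \gamma_\beta$ satisfying $\gamma_\beta \to 0$ and $\beta \gamma_\beta \to \infty$ (for instance $\gamma_\beta = (\log \beta)/\beta$), so that $1 - \gamma_\beta \to 1$ and $t/s \to 1$; the factor $(1 - \gamma + t/s) \to 2$, while $L/\ell_\beta = \beta/(4s) \to \beta/2$, and their product yields $\beta - o(\beta)$.
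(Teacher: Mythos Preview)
Your qualitative conclusions in the final paragraph are correct and match the paper's reasoning. The core construction, however, does not achieve the claimed bound. With your three ``single-minded'' types $(1,0,0)$, $(1-\gamma,0,0)$, $(0,\gamma,0)$ and weights $p,q,r$ under uniform $\mu$, one computes directly that $\BC^\star(a)-\BC^\star(b)=ps+qs'-rt$ where $s'=\sigma(\beta(1-\gamma))-\tfrac12$. Along the tie $ps+qs'=rt$, the distortion $\SW(a)/\SW(b)=(p+q(1-\gamma))/(r\gamma)$ is linear-fractional in the relative weight of types (i) and (ii), hence monotone, so its maximum lies at an endpoint. At $q=0$ you recover $t/(s\gamma)\le L/\ell_\beta$; at $p=0$ you get $t(1-\gamma)/(\gamma s')$, which by the substitution $\gamma\mapsto 1-\gamma$ is the reciprocal of itself and is also bounded by $L/\ell_\beta$ (attained as $\gamma\to 0$). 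For a numerical sanity check, at $\beta=5$, $\gamma=\tfrac12$ the target $\tfrac{L}{\ell_\beta}(1-\gamma+t/s)\approx 3.4$, whereas your construction yields at most $t/(s\gamma)\approx 1.7$. The ``slope interplay'' heuristic fails because every user compares $a$ and $b$ at a single argument of $\sigma$, so no additive $(1-\gamma)+t/s$ structure can appear.

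The paper's construction differs in three ways that are each essential. First, it introduces a user type $(1-\gamma,\,1,\,0)$ with positive utility for \emph{both} $a$ and $b$; this forces $p(a\succ b)=\tfrac12$ exactly and is what eventually produces the additive factor $1-\gamma+t/s$. Second, the Borda winner is not $b$ but the third alternative $c$, which carries a vanishing utility $\epsilon$ for a large majority; a second tie $p(b\succ c)=\tfrac12$ is engineered and then broken by a tiny $\epsilon'>0$. Third, $\mu$ is \emph{not} uniform but concentrated on $b$, so that once $p(c\succ b)>\tfrac12$ the expected Borda score of $c$ exceeds those of $a$ and $b$. The ratio $\SW(a)/\SW(c)$ is then of the form $0/0$ as $\epsilon\to 0$, and the target expression emerges from a l'H\^opital calculation. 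Your proposed fallback of ``splitting group (ii) across a fourth alternative'' moves in the wrong direction: what is missing is a mixed-preference user and a vanishing-utility Borda winner, not additional single-minded types.
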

\begin{proof}[Proof of \Cref{thm:bordalower-formal}]
Without loss of generality, we may assume that $m=3$. If $m>3$, we can repeatedly ``split'' some alternative $x$ in two new alternatives $y, y'$ (where each user has the utility for $y,y'$ as for the original alternative $x$, and $\mu(y) + \mu(y')$ is equal to the original mass of $x$ in $\mu$). In this operation, the average utilities and Borda scores of $y,y'$ in the new instance are equal to the average utility and Borda score of $x$ in the original instance, and the average utilities and Borda scores of all other alternatives do not change.

For any $0 < \epsilon < 1$, $0 \leq \epsilon' < 1- \epsilon$, and $0 < \gamma < 1$, consider the following distribution $\mathcal{D}$ of utilities over alternatives $(a, b, c)$:
\[ (u(a), u(b), u(c)) = \begin{cases}
(1-\gamma, 1, 0) & \text{with probability $p_A \coloneqq \frac{\sigma(\beta \epsilon) - 1/2}{\sigma(\beta) + \sigma(\beta \epsilon) - 1}$} \\
(1, 0, \epsilon) & \text{with probability $p_B \coloneqq p_A \cdot \frac{\sigma(\beta \gamma) - 1/2}{\sigma(\beta) - 1/2}$} \\
(0, 0, \epsilon + \epsilon') & \text{with probability $1 - p_A - p_B$.}
\end{cases} \]
One verifies that $0 < p_A, p_B$ and $p_A + p_B < 1$, so this describes a valid probability distribution for all $\epsilon, \epsilon', \gamma$ and each type of utilities has positive probability of being drawn.
Assuming that $\epsilon' = 0$, it must be true that $p(b \succ c) = 1/2 = p(c \succ b)$ because
\begin{align*}
p_A \cdot \sigma(\beta (1 - 0)) + (1 -p_A) \cdot \sigma(\beta (0 - \epsilon)) &= p_A \cdot \big( \sigma(\beta) - \underbrace{\sigma(-\beta \epsilon)}_{=1-\sigma(\beta \epsilon)}\big) + \underbrace{\sigma(- \beta \epsilon)}_{=1-\sigma(\beta \epsilon)} \\
&= p_A \cdot \big( \sigma(\beta) + \sigma(\beta \epsilon)- 1 \big) + 1 - \sigma(\beta \epsilon) \\
&= \sigma(\beta \epsilon) - 1/2 + 1 - \sigma(\beta \epsilon) = 1/2.
\end{align*}
If $\epsilon' > 0$, it must be the case that $p(c \succ b) > 1/2$ by monotonicity.
A similar chain of algebra shows that $p(a \succ b) = 1/2 = p(b \succ a)$:
\begin{equation*}
p_A \cdot \sigma(- \beta \gamma) + p_B \cdot \sigma(\beta) + \tfrac{1 - p_A - p_B}{2} = p_A \cdot \underbrace{\big(\sigma(- \beta \gamma) - 1/2 \big)}_{=1/2 - \sigma(\beta \gamma)} + \underbrace{p_B \cdot \big( \sigma(\beta) - 1/2 \big)}_{=p_A \cdot (\sigma(\beta \gamma) - 1/2)} + 1/2 = 1/2.
\end{equation*}

For any $\epsilon, \gamma$ and positive $\epsilon'$, note that, as the number of samples goes to infinity, the Borda score of the alternatives concentrate around their expected values:
\begin{align*}
\BC(a) &\to \frac{1}{2} \mu(a) + \frac{1}{2} \mu(b) + p(a \succ c) \, \mu(c) \\
\BC(b) &\to \frac{1}{2} \mu(a) + \frac{1}{2} \mu(b) + p(b \succ c) \, \mu(c) \\
\BC(c) &\to p(c \succ a) \, \mu(a) + p(c \succ b) \, \mu(b) + \frac{1}{2} \mu(c).
\end{align*}
Recall that $p(c \succ b) > 1/2 > p(b \succ c)$. Regardless of what $p(a \succ c) = 1 - p(c \succ a)$ may be, for any distribution $\mu$ with small enough $\mu(a), \mu(c)$ (and hence large $\mu(b)$), the expected Borda score of $c$ will be strictly larger than that of $a$ and $b$.
By concentration, for large enough $n$, the Borda voting rule will almost surely select $c$ as the winner, and the Borda's distortion for that $\mu$ will be at least
\begin{equation}\label{eq:bordaboundtune}
\frac{\max_{x \in A} \SW(x)}{\SW(c)} \geq \frac{\SW(a)}{\SW(c)} = \frac{p_A \, (1 - \gamma) + p_B}{p_B \, \epsilon + (1 - p_A - p_B)\,(\epsilon + \epsilon')}.
\end{equation}

We can now derive lower bounds on the distortion of Borda by defining sequences of parameters $\epsilon, \epsilon', \gamma$ (and implicitly, a sequence of corresponding distributions $\mu$), and considering the limit of \cref{eq:bordaboundtune}.
In each such sequence, we treat $\gamma$ as a fixed parameter, but let $\epsilon' \coloneqq \epsilon^2$ and letting $\epsilon$ go to 0.
As $\epsilon \to 0$, it holds that $p_A \to 0$ (because its numerator $\sigma(\beta \epsilon) - 1/2 \to 1/2 - 1/2 = 0$), that $p_B \to 0$ (since it is a constant multiple of $p_A$), and hence both the numerator and denominator of \cref{eq:bordaboundtune} converge to 0.
We apply l’Hôpital's rule to determine the limit. Treating $p_A$ and $p_B$, as well as the numerator $\mathit{num}$ and denominator $\mathit{den}$ of the equation as functions in $\epsilon$, we observe that
\begin{align*}
p_A'(0) &= \frac{\beta}{4 \cdot (\sigma(\beta) - 1/2)}  \\
p_B'(0) &= \frac{\beta \cdot (\sigma(\beta \gamma) -1/2)}{4 \cdot (\sigma(\beta) - 1/2)^2} \\
\mathit{num}'(0) &= \frac{\beta}{4 \cdot (\sigma(\beta) - 1/2)} \cdot \left( 1 - \gamma + \frac{\sigma(\beta \gamma) -1/2}{\sigma(\beta) - 1/2} \right) \\
\mathit{den}'(0) &= \underbrace{p_B(0)}_{=0} \cdot 1 + p_B'(0) \cdot 0 + \underbrace{(1 - p_A(0) - p_B(0))}_{=1} \cdot (1 + 2 \cdot 0) + 0 \cdot (-p_A'(0)-p_B'(0)) = 1.
\end{align*}
Hence, the limit of \cref{eq:bordaboundtune} is
\begin{equation}\label{eq:bordaboundlimit}
\frac{\mathit{num}'(0)}{\mathit{den}'(0)} = \frac{\beta}{4 (\sigma(\beta) - 1/2)} \cdot \left( 1 - \gamma + \frac{\sigma(\beta \gamma) -1/2}{\sigma(\beta) - 1/2} \right) = \frac{\beta}{2} \frac{1 + e^{-\beta}}{1 - e^{-\beta}}\cdot \left( 1 - \gamma + \frac{\sigma(\beta \gamma) -1/2}{\sigma(\beta) - 1/2} \right),
\end{equation}
which means that each $0 < \gamma < 1$ yields a distortion lower bound for Borda that is larger by a factor of $1 - \gamma + \frac{\sigma(\beta \gamma) -1/2}{\sigma(\beta) - 1/2}$ than our algorithm-independent lower bound/the upper bound achieved by NLHF.
Since this factor is strictly concave in $\gamma$ and is equal to $1$ for $\gamma \to 0$ and $\gamma \to 1$, any value of $\gamma$ will lead to a strictly higher bound.

The value of $\gamma$ that maximizes the bound in \cref{eq:bordaboundlimit} is $\gamma^* \coloneqq \frac{2}{\beta} \arctanh \bigg(\sqrt{1 - 4 \frac{\sigma(\beta) - 1/2}{\beta}}\bigg)$, which we used to plot \cref{fig:bordabound}.
Since the resulting expression is algebraically unwieldy, we consider the weaker bound for $\gamma = \frac{\log(\beta + 1)}{\beta}$, which yields
\[ \frac{\beta}{2} \underbrace{\frac{1 + e^{-\beta}}{1 - e^{-\beta}}}_{\to 1 \text{~as~} \beta \to \infty}\cdot \underbrace{\bigg( 1 - \frac{\log(\beta + 1)}{\beta} + \frac{\frac{1}{1 + 1/(\beta + 1)} - 1/2}{\sigma(\beta) - 1/2} \bigg)}_{\to 2 \text{~as~} \beta \to \infty} = (1 - o(1)) \, \beta. \qedhere \]
\end{proof}
  \subsection{Algorithm-Independent Lower Bounds}
\label{app:alg-indp-lower-bound}

\thmlowerboundalgindependent*
\begin{proof}[Proof of \Cref{thm:lowerbound_alg_independent}]
To prove this distortion lower bound, we identify a family of social choice problems for which the distortion of any such social choice function converges towards the claimed bound.
We will parameterize these instances by the parameters $m \geq 2$, $0 < \epsilon \leq 1/2$, and $1 \leq \xi < 2$.
The instance has $m$ alternatives labeled $a, b_1, \dots, b_{m-1}$.
The distribution $\mathcal{D}$ is such that an agent $i \sim \mathcal{D}$ has utilities
\[ (u_i(a), u_i(b_1), \dots, u_i(b_{m-1})) = \begin{cases}
(1, 0, \dots, 0) & \text{with probability $\frac{\sigma(\beta \epsilon) - 1/2}{\sigma(\beta) + \sigma(\beta \epsilon) - 1}$} \\
(0, \xi \epsilon, \dots, \xi \epsilon) & \text{with probability $\frac{\sigma(\beta) - 1/2}{\sigma(\beta) + \sigma(\beta \epsilon) - 1}$.}
\end{cases}\]
Since the $b_j$ alternatives have the same utility for any agent, any agent asked to compare two of them will prefer either one with probability $1/2$.
When $\xi = 1$, a randomly drawn rater will prefer alternative $a$ over some alternative $b_j$ with probability
\begin{align*}
&\frac{\sigma(\beta \epsilon) - 1/2}{\sigma(\beta) + \sigma(\beta \epsilon) - 1} \cdot \sigma(\beta) + \frac{\sigma(\beta) - 1/2}{\sigma(\beta) + \sigma(\beta \epsilon) - 1} \cdot \sigma(- \beta \epsilon) \\
={} &\frac{\sigma(\beta \epsilon) \sigma(\beta) - \sigma(\beta)/2 + \sigma(\beta) (1 - \sigma(\beta \epsilon)) - (1 - \sigma(\beta \epsilon))/2}{\sigma(\beta) + \sigma(\beta \epsilon) - 1} \\
={}&\frac{\sigma(\beta)/2 + \sigma(\beta \epsilon)/2 - 1/2}{\sigma(\beta) + \sigma(\beta \epsilon) - 1} = 1/2.
\end{align*}
It is easy to see that the probability of a random agent preferring $a$ over $b_j$ is monotone decreasing in $\xi$.
The social welfare of $a$ is clearly $\frac{\sigma(\beta \epsilon) - 1/2}{\sigma(\beta) + \sigma(\beta \epsilon) - 1}$ and the social welfare of any $b_j$ is $\xi \epsilon \frac{\sigma(\beta) - 1/2}{\sigma(\beta) + \sigma(\beta \epsilon) - 1}$.

Fix a voting rule $f$.
If each agent only provides a single pairwise comparison, the voting rule simply observes $n$ independent Bernoulli samples with bias $1/2$.
For any number of samples $n$, denote by $p_x$ the probability that alternative $x$ will win, where the randomness is taken over the realization of these samples and the randomness in $f$.
By the pigeon-hole principle, some alternative $x$ must be chosen with probability at most $1/m$ for infinitely many $n$.
Without loss of generality, we can assume that this alternative is $a$ (otherwise, simply permute the roles of the alternatives, which does not change the distribution over observed samples), and we restrict our focus to just the $n$ where $p_a \leq 1/m$.
Now, the expected social welfare achieved by $f$ is at most
\[ \frac{1}{m} \SW(a) + \SW(b_1) = \frac{1/m \cdot (\sigma(\beta \epsilon) - 1/2) + \xi \epsilon \cdot (\sigma(\beta) - 1/2)}{\sigma(\beta) + \sigma(\beta \epsilon) - 1}.\]
This shows that the distortion is at least
\begin{equation}\label{eq:sclowergeneral} \frac{\SW(a)}{\SW(f)} \geq \frac{\sigma(\beta \epsilon) - 1/2}{1/m \cdot (\sigma(\beta \epsilon) - 1/2) + \xi \epsilon \cdot (\sigma(\beta) - 1/2)} = \left(\tfrac{1}{m} + \tfrac{\xi \epsilon (\sigma(\beta) - 1/2)}{\sigma(\beta \epsilon) - 1/2}\right)^{-1}.\end{equation}
For a sequence of social choice problems in which $m \to \infty$, $\epsilon \to 0$, and $\xi = 1$, this term converges towards
\[ \left(0 + (\sigma(\beta) - 1/2) \cdot \lim{}_{\epsilon \to 0}\tfrac{\epsilon}{\sigma(\beta \epsilon) - 1/2}\right)^{-1} = \left((\sigma(\beta) - 1/2) \cdot \frac{4}{\beta}\right)^{-1} = \frac{\beta}{2} \, \frac{1+e^{-\beta}}{1 - e^{-\beta}}, \]
where the first equality follows from l'H\^opital's rule and the Taylor approximation $\sigma(t) = 1/2 + t/4 + O(t^3)$, and the second inequality follows from the identity $\sigma(t) -1/2 = \frac{1}{1+e^{-t}}-1/2 = \frac{2 - 1 - e^{-t}}{2 \, (1 + e^{-t})} = \frac{1}{2}\cdot \frac{1-e^{-t}}{1 + e^{-t}}$.
This shows the claimed bound on the distortion of any voting rule.

If each agent may provide several pairwise comparisons, the above argument does not work for all voting rules.
The reason is that the correlations inside an agent's comparisons might lead to nonzero covariances that might allow an (arguably unnatural) voting rule to distinguish the special alternative $a$.
If the voting rule satisfies some natural social-choice properties, however, the lower bound above goes through by just slightly changing $\xi$ away from 1.

Suppose, first, that the voting rule satisfies the probabilistic Condorcet loser criterion, i.e., it will never put more than $1/m$ probability mass on a Condorcet loser if one exists.
If $\xi > 1$, a random agent prefers $a$ over $b_j$ with less than $1/2$ probability.
As a result, as the number of samples grows large, the probability that $a$ is a Condorcet loser with probability converging to 1.
Hence, $f$ cannot put more than $1/m$ probability mass on $a$, and the distortion lower bound in \cref{eq:sclowergeneral} holds.
If $\xi$ approaches $1$ from above as $m \to \infty$ and $\epsilon \to 0$, the distortion bounds converge to the same limit.
\end{proof}

In the lower bound above, the probabilistic Condorcet loser criterion can easily be replaced by other axioms.
If, for example, the voting rule is guaranteed to put at least $1 - 1/m$ probability mass on a Condorcet winner (if one exists), the proof goes through if we increase only $b_2$'s utility by a factor $\xi \searrow 1$.
  
  \section{Supplemental Materials for Section~\ref{sec:rlhf}}
  \subsection{Upper Bound for NLHF}
\label{app:nlhf-upperbound}

\thmupperboundnash*
\begin{proof}[Proof of \Cref{thm:upperbound_nash}]
We prove this theorem by leveraging the convergence of empirical win-rates in \Cref{lem:winrate-estimation-error}. We first condition on the following successful event, which, according to \Cref{lem:winrate-estimation-error}, holds with probability at least $1-\delta$ over $n$ samples of preference data,
\begin{align*}
    \forall x,y \in A, \quad \left| p_n(x \succ y) - p(x \succ y) \right| \le O\left( \sqrt{ \frac{ \log(m/\delta) }{ n \cdot \min\{1,\, d \cdot \mu_{\min}^2\} } }
        +\frac{\log(m/\delta)}{n\mu_{\min}^2}\right):=\eps_{n,d}(\delta).
\end{align*}
As argued in the proof sketch, the NLHF policy by definition satisfies \[\pi_{\nash} \in \argmax_{\pi_1 \in B_\tau(\piref)} \min_{\pi_2 \in B_\tau(\piref)} \mathbb{E}_{x_1 \sim \pi_1, x_2 \sim \pi_2}[p_n(x_1 \succ x_2) - p_n(x_2 \succ x_1)],\]
where $p_n(x_1 \succ x_2) - p_n(x_2 \succ x_1)$ describes a Nash-equilibrium strategy for a symmetric two-player zero-sum game, and thus have value $0$. Therefore, for $\pistar\in B_\tau(\piref)$, it must hold that
\begin{align*}
    0\le{}& \Ex{x_1 \sim \pi_{\nash}, x_2 \sim \pistar}{p_n(x_1 \succ x_2) - p_n(x_2 \succ x_1)} = \Ex{x_1 \sim \pi_{\nash}, x_2 \sim \pistar}{2p_n(x_1 \succ x_2) - 1}\\
    \intertext{Since $\left| p_n(x \succ y) - p(x \succ y) \right| \le \eps_{n,d}(\delta)$, we have}
    \le{}& \Ex{x_1 \sim \pi_{\nash}, x_2 \sim \pistar}{2p(x_1 \succ x_2) - 1} 
    +2\eps_{n,d}(\delta)\\
    \intertext{According to the linearization lemma (\cref{lemma:sigmoid-linear}), we have}
    \le{}& \Ex{x_1 \sim \pi_{\nash}, x_2 \sim \pistar}{2\beta(L\cdot\SW(x_1)-\ell_\beta\cdot\SW(x_2))} 
    +2\eps_{n,d}(\delta)\\
    \le{}& 2\beta(L\cdot\SW(\pi_{\nash})-\ell_\beta\cdot\SW(\pistar)+\frac{\eps_{n,d}(\delta)}{\beta}). 
\end{align*}
Therefore, under the successful event, we can lower bound the average utility of the NLHF policy by
\begin{align*}
    \SW(\pi_{\nash})\ge \frac{\ell_\beta}{L}\SW(\pistar)-\frac{4\eps_{n,d}(\delta)}{\beta}.
\end{align*}
Taking the failure event into account, the expected average utility of the NLHF method is at least
\begin{align*}
    \SW_n(\nash)\ge{}& (1-\delta)\left(\frac{\ell_\beta}{L}\SW(\pistar)-\frac{4\eps_{n,d}(\delta)}{\beta}\right)\\
    \ge{}& \frac{\ell_\beta}{L}\cdot\SW(\pistar)-O\left(\frac{\eps_{n,d}(\delta)}{\beta}+\delta\cdot\frac{\ell_\beta}{L}\right)\\
    \intertext{Finally, choosing $\delta=\Theta\left(\frac{1}{\sqrt{n}}\right)$, we have}
    \ge{}& \frac{\ell_\beta}{L}\cdot\SW(\pistar)-O\left(\frac{1}{\beta}\sqrt{\frac{\log(mn)}{n\cdot\min\{1,\,d\cdot\mu_{\min}^2\}}}+\frac{\log(mn)}{n\cdot\beta\mu_{\min}^2}\right).
\end{align*}
This completes the proof.
\end{proof}

  \subsection{Lower Bound for PPO-based RLHF and DPO}
\label{appendix:lowerbound_ppo}

\thmlowerboundppo*
\begin{proof}[Proof of \Cref{thm:lowerbound_ppo}]
   Suppose that the instance has $m$ alternatives $A=\{a,b,c_1,\dots,c_{m-2}\}$, where $m-2\ge 4e^\beta$. Let the data collection distribution be uniform over all candidates, i.e., $\mu=\Unif(A)$. 
   We consider the following distribution $\cD$ over utility vectors, such that the utility vector of a random agent $i\sim\cD$ satisfies
   \begin{align*}
       (u_i(a),u_i(b),u_i(c_1),\dots,u_i(c_{m-2})) = \begin{cases}
       \left(0, 1, 0, \ldots, 0\right) & \text{(type I) with probability } \delta,\\
           \left(\tfrac{1}{\beta}, 0, 1,\ldots, 1\right) & \text{(type II) with probability } 1-\delta,
       \end{cases}
   \end{align*}
   where $\delta=\frac{10}{10+e^\beta}=\Theta(e^{-\beta})$. In other words, type I users have a strong preference for candidate $b$ but only constitute a $\delta$ fraction of the population, while type II users have a strong preference for $c$ and weak preference for $a\succ b$ and make up for a $1-\delta$ fraction of the population.

   For the reference policy and the KL budget, we set $\piref(a)=\piref(b)=\frac{1-\eps}{2}$ and $\piref(c_i)=\frac{\eps}{m-2}$ for all $i\in[m-2]$. We leave the choice of $\eps$ to be determined later. The KL budget $\tau$ is set to be $\tau=1$.

   \paragraph{Analysis of the MLE reward.}
   Now we show that when $n\to\infty$, the MLE reward satisfies $r(b)-r(a)>0$. According to \citep{siththaranjan2023distributional,procaccia2025clone}, it suffices to show that $\lim_{n\to\infty}\BC_n(b)-\BC_n(a)>0$, which, by \cref{lem:borda-estimation-error}, is implied by $\BC^\star(b)-\BC^\star(a)>0$.

   We have
   \begin{align*}
       \BC^\star(b)-\BC^\star(a) =&
       \frac{1}{m}\sum_{i=1}^{m-2} \left(
           p(b\succ c_i)-p(a\succ c_i)
       \right)
       +\frac{1}{m}\left(
           p(b\succ a)-p(a\succ b)
       \right)
   \end{align*}
   For each $c_i$, we have
   \begin{align*}
       p(b\succ c_i)-p(a\succ c_i)
       = \delta\cdot\left(\sigma(\beta)-\sigma(0)\right)
       +(1-\delta)\cdot\left(\sigma(-\beta)-\sigma(1-\beta)\right).
   \end{align*}
   In the above equation, the first term accounts for type-I users and is lower bounded by $\frac{\delta}{3}$ when $\beta \ge2$. The second term accounts for type-II users, and we leverage the fact that $\sigma(x)$ is concave when $x\ge0$ to bound it as
   \begin{align*}
       (1-\delta)\cdot\left(\sigma(-\beta)-\sigma(1-\beta)\right)
       =-\frac{e^\beta}{10}\delta\cdot\left(\sigma(\beta)-\sigma(\beta-1)\right)
       \ge- \frac{e^\beta}{10}\delta\cdot \sigma'(\beta-1)\ge -\frac{e}{10}\delta,
   \end{align*}
   where the last step uses $\sigma'(x)=\sigma(x)\cdot(1-\sigma(x))\le 1-\sigma(x)\le e^{-x}$ for all $x$. Plugging both bounds into the limit $\BC^\star(b)-\BC^\star(a)$ and substituting $\delta=\frac{10}{10+e^\beta}\ge \frac{20}{m-2}$ gives
   \begin{align*}
       \lim_{n\to\infty}\BC(b)-\BC(a) =\BC^\star(b)-\BC^\star(a)
       \ge\frac{m-2}{m}\cdot\delta\cdot \left(\frac{1}{3}-\frac{e}{10}\right)-\frac{1}{m}\ge 
       \frac{1}{m}.
   \end{align*}
   Therefore, when $n$ is sufficiently large, we have $\BC(b)>\BC(a)$ with high probability, which implies that $b$ is has higher MLE reward than $a$.

   As for the MLE reward of type-$c$ candidates, since $\BC^\star(c_i)=\BC^\star(c_j)$ for all $i,j\in[m-2]$, we have $\max_{i,j\in[m-2]}|r(c_i)-r(c_j)|\to0$ when $n\to\infty$. In the limit, we can treat all type-$c$ candidates as having the same reward.

   \paragraph{Analysis of the KL-constrained policies.}
   Since all type-$c$ candidates have the same estimated reward and the same probability under the reference policy, both $\pistar$ and $\pihat_{\ppo}$ will assign the same probability to all type-$c$ candidates. This can be seen by the equivalence between regularized and constrained RLHF as shown in \cref{app:equivalence_regularized}. As a result, we can view all type-$c$ candidates as a single candidate $c$ which have mass $\eps$ under the reference policy.

   We now show that for any $\eta>0$, there exists an $\eps>0$ such that any policy $\pi\in\Delta(\{a,b,c\})$ inside the KL ball $\ball_\tau(\piref)$ cannot put more than $\eta$ mass on $c$. We have
   \begin{align*}
      1\ge\kldiv{\pi}{\piref}
      ={}&\pi(a)\cdot\log\frac{\pi(a)}{(1-\eps)/2}+\pi(b)\cdot\log\frac{\pi(b)}{(1-\eps)/2}+\pi(c)\cdot\log\frac{\pi(c)}{\eps}\\
      \intertext{Fixing $\pi(c)$, the KL divergence is minimized when $\pi(a)=\pi(b)=\frac{1-\pi(c)}{2}$. Substituting this into the KL divergence, we get}
      \ge{}&(1-\pi(c))\cdot\log\frac{1-\pi(c)}{1-\eps}+\pi(c)\cdot\log\frac{\pi(c)}{\eps}\\
      \intertext{Since $t\log t\ge-1/e$ for all $t>0$, and $\log\frac{1}{1-\eps}>0$, we have}
      \ge{}&-\frac{2}{e}+\pi(c)\log\frac{1}{\eps}.
   \end{align*}
   Therefore, any policy in the KL ball must satisfy $\pi(c)\log\frac{1}{\eps}\le 1+2/e\le2$, which implies that $\pi(c)\le\frac{2}{\log(1/\eps)}$. We can choose $\eps$ to be any constant smaller than $e^{-2/\eta}$ to ensure that $\pi(c)\le\eta$.

   We then show that when $\eta$ is sufficiently small, $\pi_{\ppo}$ puts almost all probability mass on $b$, whereas $\pistar$ puts almost all probability mass on $a$. This will ultimately lead to a distortion of 
   \[\frac{\SW(\pistar)}{\SW(\pihat_{\ppo})}=
   \frac{\Theta(\SW(a))}{\Theta(\SW(b))}=
   \frac{\Theta(1/\beta)}{\Theta(e^{-\beta})}=e^{\Omega(\beta)}.\]

   \begin{itemize}
      \item For $\pi_{\ppo}$, we assume that the estimated reward is shifted such that $r(c)=0$ (as a result, $r(a)<r(b)<0$). Since $\pi'=(0,1,0)$ also satisfies the KL constraint, we have $r(\pi_{\ppo})\ge r(\pi')$. Together with the fact that $\pi_{\ppo}(c)\le\eta$, we have
      \begin{align*}
         r(\pi') =r(b)\le{}& r(\pi_{\ppo})=r(a)\pi_{\ppo}(a)+r(b)\pi_{\ppo}(b)\\
         \le{}& \pi_{\ppo}(a)\cdot r(a)+(1-\pi_{\ppo}(a)-\eta)\cdot r(b).
      \end{align*}
      Therefore, we have $\pi_{\ppo}(a)\le \eta\cdot\frac{|r(b)|}{|r(b)-r(a)|}$. Setting $\eps$ to be sufficiently small, we can guarantee that 
      \begin{align}
         \eta\le \eta_1:=\frac{e^{-\beta}}{1+\frac{|r(b)|}{|r(b)-r(a)|}},
         \label{eq:eta-bound-1}
      \end{align}
      and thus $\pi_{\ppo}(a)+\pi_{\ppo}(c)\le \eta\left(1+\frac{|r(b)|}{|r(b)-r(a)|}\right)\le e^{-\beta}$. As a result, we have $\SW(\pi_{\ppo})=\Theta(\SW(b))$.

      \item For $\pistar$, a similar argument shows that when 
      \begin{align}
         \eta\le\eta_2:=\frac{e^{-\beta}}{1+\frac{\SW(a)}{\SW(a)-\SW(b)}},
         \label{eq:eta-bound-2}
      \end{align}
      We have $\pistar(b)+\pistar(c)\le e^{-\beta}$. As a result, we have $\SW(\pistar)=\Theta(\SW(a))$.
   \end{itemize}
   Finally, we set $\eps$ to be smaller than $e^{-2/\min\{\eta_1,\eta_2\}}$ such that \Cref{eq:eta-bound-1,eq:eta-bound-2} are both satisfied. This ensures $\SW(\pistar)/\SW(\pi_{\ppo})=e^{\Omega(\beta)}$ and completes the proof.
\end{proof}

  \subsection{Equivalence of DPO and RLHF under Heterogeneous Preferences}
\label{app:equivalence_dpo}

In this section, we formalize the observation that DPO and RLHF are equivalent under heterogeneous preferences. This is consistent with the result by \citet{shirali2025direct}, which shows that DPO also aligns with the Borda count.
We start by recalling the DPO objective~\citep{rafailov2023direct}.\footnote{Note that the parameter $\beta$ does not need to be the same as the true temperature of the Bradley-Terry model in our setting.}
\begin{align*}
    \mathcal{L}_{\text{DPO}}(\pi;\piref)=-\sum_{1 \leq i \leq n, 1 \leq j \leq d} \log \sigma\left(
        \beta \log\frac{\pi(x_i^j)}{\piref(x_i^j)}
        -\beta \log \frac{\pi(y_i^j)}{\piref(y_i^j)}
    \right).
\end{align*}
Now we perform a change of variables to transform $\pi$ into the following form:
\begin{align*}
    \pi(x)=\piref(x)\cdot\exp(\hat{r}(x)/\beta) \quad \text{where} \quad \hat{r}(x) :=\beta\log\frac{\pi(x)}{\piref(x)},\ \forall x\in A.
\end{align*}
Substituting this into the DPO objective, we get:
\begin{align*}
    \mathcal{L}_{\text{DPO}}(\pi;\piref)=-\sum_{1 \leq i \leq n, 1 \leq j \leq d} \log\big(\sigma(
        \hat{r}(x_i^j)
        -\hat{r}(y_i^j))
    \big),
\end{align*}
which is exactly the MLE objective for reward learning in RLHF, repeated here for convenience:
\begin{align*}
    \mathcal{L}_{\text{MLE}}(r):=  -\sum_{1 \leq i \leq n, 1 \leq j \leq d} \log\big(\sigma(r(x_i^j) - r(y_i^j))\big).
\end{align*}
Therefore, there is a one-to-one correspondence between the MLE reward $r^\star=\argmin_{r\in\mathbb{R}^m}\mathcal{L}_{\text{MLE}}(r)$\footnote{Note that we have a minus sign in the MLE objective, which is equivalent to maximizing the sum of log-likelihoods.}, and the DPO policy $\pi_{\dpo}=\argmin_{\pi}\mathcal{L}_{\text{DPO}}(\pi;\piref)$ as:
\begin{align*}
    \pi_{\dpo}(x)=\piref(x)\cdot\exp(r^\star(x)/\beta).
\end{align*}
On the other hand, the RLHF policy with regularization parameter $\lambda$ (see \Cref{app:equivalence_regularized} for the equivalence between regularized and constrained versions of RLHF) is also given by
\begin{align*}
    \pi_{\ppo}(x)=\piref(x)\cdot\exp(r^\star(x)/\lambda).
\end{align*}
Therefore, the DPO policy $\pi_{\dpo}$ and the RLHF policy $\pi_{\ppo}$ are equivalent when the parameter $\beta$ in the DPO objective is equal to $\lambda$ in the RLHF objective. Notably, both policies are different from the optimal policy $\pistar\propto\piref\cdot\exp(\SW(x)/\lambda')$ (for a potentially different $\lambda'$ that make the KL constraint tight) as $r^\star\neq \SW$, which has also been pointed out by \citet{shirali2025direct}.

  \subsection{Equivalence of Regularized and Constrained Alignment Methods}
\label{app:equivalence_regularized}

In this section, we formally establish the equivalence between regularized and constrained formulations of both RLHF and NLHF. 
Although this equivalence is standard and likely known to many, we include the details here for completeness.
We begin by proving the equivalence between the two versions of NLHF (which involves max-min optimization over the policy space); the corresponding result for RLHF then follows by analogous arguments.

\begin{proposition}[Equivalence between Constrained and Regularized NLHF.]
    \label{prop:regularized-nlhf}
    Let $\pi_{\tau}$ be the output of the $\tau$-constrained NLHF method defined in \Cref{sec:prelim}, i.e.,
    \begin{align}
        \pi_{\tau} = \argmax_{\pi_1\in\ball_{\tau}(\piref)} \min_{\pi_2\in\ball_{\tau}(\piref)} \Ex{x_1\sim\pi_1, x_2\sim\pi_2}{M_{x_1,x_2}},
        \label{eq:constrained-nlhf}
    \end{align}
    and let $\pitilde_{\lambda}$ be the output of the $\lambda$-regularized NLHF method~\citep{munos2024nash}, i.e.,
    \begin{align}
        \pitilde_{\lambda} = \argmax_{\pi_1\in\Delta(M)} \min_{\pi_2\in\Delta(M)} \Ex{x_1\sim\pi_1, x_2\sim\pi_2}{M_{x_1,x_2}} - \lambda\cdot\kldiv{\pi_1}{\piref} + \lambda\cdot\kldiv{\pi_2}{\piref}.
        \label{eq:regularized-nlhf}
    \end{align}
    Then, for each $\lambda\in[0,\infty]$ with solution $\pitilde_{\lambda}$ for \cref{eq:regularized-nlhf}, we have that $\pitilde_{\lambda}$ is also an optimal solution to the $\tau$-constrained optimization problem in \cref{eq:constrained-nlhf}, where $\tau=\kldiv{\pitilde_{\lambda}}{\piref}$.
    Conversely, for each $\tau\ge0$ with solution $\pi_{\tau}$ for \cref{eq:constrained-nlhf}, there exists $\lambda\in[0,\infty]$ such that $\pi_{\tau}$ is also an optimal solution to the $\lambda$-regularized optimization problem in \cref{eq:regularized-nlhf}.
\end{proposition}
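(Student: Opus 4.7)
The plan is to carry out Lagrangian duality in a minimax setting, exploiting the antisymmetry of the payoff matrix $M$. First, observe that both \cref{eq:constrained-nlhf} and \cref{eq:regularized-nlhf} define symmetric zero-sum games (swapping $\pi_1 \leftrightarrow \pi_2$ negates the objective), so each has value $0$ and admits a Nash equilibrium of the form $(\pi,\pi)$. The regularized objective is precisely the Lagrangian of the constrained problem with a multiplier $\lambda$ applied symmetrically to the max-side and the min-side, so a symmetry-aware KKT argument will match $\lambda$ and $\tau$ in both directions.

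For the regularized-to-constrained direction, let $\pitilde_\lambda$ solve \cref{eq:regularized-nlhf} and set $\tau := \kldiv{\pitilde_\lambda}{\piref}$. The best-response condition of $\pitilde_\lambda$ against itself in the regularized game gives
\[
    \Ex{x_1\sim\pitilde_\lambda,\,x_2\sim\pitilde_\lambda}{M_{x_1,x_2}} - \lambda\,\kldiv{\pitilde_\lambda}{\piref}
    \;\ge\; \Ex{x_1\sim\pi_1,\,x_2\sim\pitilde_\lambda}{M_{x_1,x_2}} - \lambda\,\kldiv{\pi_1}{\piref}
\]
for all $\pi_1 \in \Delta(A)$. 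Restricting to $\pi_1 \in \ball_\tau(\piref)$, so $\kldiv{\pi_1}{\piref} \le \tau = \kldiv{\pitilde_\lambda}{\piref}$, the KL terms combine to nonnegative slack and leave the plain bilinear inequality $\Ex{\pitilde_\lambda,\pitilde_\lambda}{M_{x_1,x_2}} \ge \Ex{\pi_1,\pitilde_\lambda}{M_{x_1,x_2}}$. The symmetric argument covers the inner minimization, so $(\pitilde_\lambda,\pitilde_\lambda)$ is a Nash equilibrium of the constrained game, proving $\pitilde_\lambda$ solves \cref{eq:constrained-nlhf}. The endpoints $\lambda = 0$ and $\lambda = \infty$ (where $\pitilde_\infty := \piref$ and $\tau = 0$) are covered by the same argument with minor conventions.

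For the constrained-to-regularized direction, fix $\tau > 0$ and let $\pi_\tau$ solve \cref{eq:constrained-nlhf}. Slater's condition holds because $\piref$ is strictly interior to $\ball_\tau(\piref)$. Forming the joint Lagrangian
\[
    \Lagrangian(\pi_1,\pi_2;\mu_1,\mu_2) = \Ex{\pi_1,\pi_2}{M_{x_1,x_2}} - \mu_1\big(\kldiv{\pi_1}{\piref} - \tau\big) + \mu_2\big(\kldiv{\pi_2}{\piref} - \tau\big),
\]
convex-concave minimax duality (e.g.\ Sion's theorem applied to the KL-convex, payoff-bilinear structure) yields nonnegative dual optima $(\mu_1^\star,\mu_2^\star)$ making $(\pi_\tau,\pi_\tau)$ a saddle point of $\Lagrangian$ over $\Delta(A) \times \Delta(A)$. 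The constants $\mu_i^\star\,\tau$ do not affect argmax/argmin, so this saddle-point property is exactly optimality for a version of \cref{eq:regularized-nlhf} with the two players using multipliers $\mu_1^\star$ and $\mu_2^\star$. To collapse them into a single $\lambda$, note that the game's symmetry together with the primal symmetry $\pi_1^\star = \pi_2^\star = \pi_\tau$ implies $(\mu_2^\star,\mu_1^\star)$ is also dual optimal; convexity of the dual optimal set then lets us average to a symmetric multiplier $\lambda := \tfrac{\mu_1^\star+\mu_2^\star}{2}$ for which $\pi_\tau$ solves \cref{eq:regularized-nlhf}. The edge case $\tau = 0$ gives $\pi_\tau = \piref$, matched by $\lambda = \infty$.

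The main obstacle is the constrained-to-regularized direction, where strong duality must be established in the minimax rather than pure-max setting and the two multipliers collapsed to one. A clean backup that avoids invoking symmetric multiplier sets is an intermediate-value argument on the map $\lambda \mapsto \kldiv{\pitilde_\lambda}{\piref}$: this map is continuous (by uniqueness of $\pitilde_\lambda$ for $\lambda > 0$, coming from strict convexity of KL) and monotone nonincreasing, equal to $0$ at $\lambda = \infty$ and to some $\tau_0 \ge 0$ at $\lambda = 0$; for $\tau \in [0,\tau_0]$ we solve $\kldiv{\pitilde_\lambda}{\piref} = \tau$ for $\lambda$ and appeal to the first direction, while for $\tau > \tau_0$ the constraint is slack and $\lambda = 0$ suffices.
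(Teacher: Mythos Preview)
Your regularized-to-constrained direction is exactly the paper's argument (just written from player~1's side instead of player~2's), and your observations about symmetry and value $0$ match as well.

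For the constrained-to-regularized direction, the paper's route is simpler than either of your proposals. Rather than forming a joint Lagrangian with two multipliers $(\mu_1,\mu_2)$ and then collapsing them by symmetry, the paper fixes $\pi_1 = \pi_\tau$ and applies ordinary single-player KKT to player~2's best-response problem $\min_{\pi_2 \in \ball_\tau(\piref)} \pi_\tau^\transpose M \pi_2$. Slater's condition gives a single multiplier $\lambda^\star$ such that $\pi_\tau$ solves the $\lambda^\star$-regularized minimization against $\pi_\tau$; the anti-symmetry of the game then immediately makes $(\pi_\tau,\pi_\tau)$ a Nash equilibrium of the $\lambda^\star$-regularized game. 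This avoids both the minimax duality you invoke (which, as written, is loose\,---\,Sion's theorem gives equality of values, not existence of Lagrange multipliers for a constrained saddle-point problem) and the averaging-over-the-dual-optimal-set step.

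Your IVT backup is a valid alternative, but it requires establishing continuity and monotonicity of $\lambda \mapsto \kldiv{\pitilde_\lambda}{\piref}$ in a minimax setting, which is more work than the single-player KKT argument. The paper's approach buys you a direct constructive $\lambda^\star$ from standard convex optimization, leveraging the fact that the symmetric equilibrium $(\pi_\tau,\pi_\tau)$ is already in hand.
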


\begin{proof}[Proof of \Cref{prop:regularized-nlhf}]
    We start by observing that in both games \cref{eq:constrained-nlhf} and \cref{eq:regularized-nlhf}, the utilities are anti-symmetric functions, and the strategy spaces for both players are identical, convex and compact. Therefore, the value of the both games is 0, and both games have symmetric Nash equilibria.

    Now, we prove the two directions of the claim separately.

    \paragraph{Regularized $\Rightarrow$ Constrained.}
    Given $\lambda\in[0,\infty]$ and $\pitilde_{\lambda}$ be the solution to \cref{eq:regularized-nlhf}. Consider the constrained optimization problem in \cref{eq:constrained-nlhf} with $\tau=\kldiv{\pitilde_{\lambda}}{\piref}$. We show that $\pi_2=\pitilde_{\lambda}$ is a best response to $\pi_1=\pitilde_{\lambda}$ in the constrained game with radius $\tau=\kldiv{\pitilde_{\lambda}}{\piref}$, i.e.,
    \begin{align}
        \pitilde_{\lambda} = \argmin_{\pi_2\in\ball_{\tau}(\piref)} \Ex{x_1\sim\pitilde_{\lambda}, x_2\sim\pi_2}{M_{x_1,x_2}} 
        \label{eq:constrained-br}
    \end{align}
    If \cref{eq:constrained-br} holds, then by the fact that the utility is anti-symmetric, we have that $\pi_1=\pitilde_{\lambda}$ is also a best response to $\pi_2=\pitilde_{\lambda}$ in the same constrained game. Putting both together, we have that $(\pitilde_{\lambda},\pitilde_{\lambda})$ is a Nash equilibrium for the constrained game with radius $\tau=\kldiv{\pitilde_{\lambda}}{\piref}$, thus establishing the first direction.

    Now we prove \Cref{eq:constrained-br}. To see this, note that $\forall \pi_2\in\ball_{\tau}(\piref)$, we have that $\kldiv{\pi_2}{\piref}\le\tau=\kldiv{\pitilde_{\lambda}}{\piref}$. Therefore, from the fact that $\pitilde_{\lambda}$ is a Nash equilibrium for the regularized game \cref{eq:regularized-nlhf}, we have that for any $\pi_2\in\Delta(M)$ and specifically $\pi_2\in\ball_{\tau}(\piref)$, we have that 
    \begin{align*}
        &\Ex{x_1\sim\pitilde_{\lambda}, x_2\sim\pi_2}{M_{x_1,x_2}} + \lambda\cdot\kldiv{\pi_2}{\piref} \ge \Ex{x_1\sim\pitilde_{\lambda}, x_2\sim\pitilde_{\lambda}}{M_{x_1,x_2}} + \lambda\cdot\kldiv{\pitilde_{\lambda}}{\piref}
        \\\Rightarrow\ & 
        \Ex{x_1\sim\pitilde_{\lambda}, x_2\sim\pi_2}{M_{x_1,x_2}}
        -\Ex{x_1\sim\pitilde_{\lambda}, x_2\sim\pitilde_{\lambda}}{M_{x_1,x_2}}
        \ge \lambda\cdot\left(\kldiv{\pitilde_{\lambda}}{\piref} - \kldiv{\pi_2}{\piref}\right)\ge0.
    \end{align*}
    This proves \cref{eq:constrained-br} and completes the proof of the first direction.
    
    \paragraph{Constrained $\Rightarrow$ Regularized.} We prove the reverse direction using duality theory.

    We first show how to construct the regularization parameter $\lambda$.
    For simplicity, we write $\pi_1^{\transpose}M \pi_2$ as a shorthand for $\Ex{x_1\sim\pi_1, x_2\sim\pi_2}{M_{x_1,x_2}}$. Then minimizing player in the constrained game \cref{eq:constrained-nlhf} with $\pi_1=\pi_{\tau}$ can be written as
    \begin{align}
        \min_{\pi_2\in\mathbb{R}^{m}} \pi_\tau^{\transpose}M \pi_2 \quad \text{s.t.} \quad \KL(\pi_2\|\piref) \leq \tau, \pi_2\ge\veczero, \vecone^\transpose \pi_2=1.
        \label{eq:min-pi-prime}
        \tag{Constrained Minimization}
    \end{align}
    The Lagrangian of this problem is 
    \begin{align*}
        \Lagrangian(\pi_2, \lambda,\vec{\mu},\eta) = \pi_\tau^\transpose M \pi_2 + \lambda\left(\KL(\pi_2\|\piref) - \tau\right) - \vec{\mu}^\transpose \pi_2 + \eta \vecone^\transpose \pi_2,
    \end{align*}
    where $\lambda, \vec{\mu}, \eta \geq 0$ are the Lagrange multipliers for the KL divergence constraint, the non-negativity constraint, and the normalization constraint, respectively. Since the utility $\pi_1^{\transpose}M \pi_2$ is convex, the normalization constraint is affine, and the inequality constraints are convex, and the reference policy $\piref$ is strictly feasible with $\KL(\piref\|\piref)=0<\tau$,\footnote{
    If $\tau=0$, the only feasible policy is $\piref$, and the claim clearly holds for $\lambda=\infty$.    
    We also assume that $\piref(a)>0$ for all $a\in M$. Otherwise if $\piref(a)=0$ for some $a\in M$, then both the regularized and constrained versions forbid any policy to put nonzero probability on $a$, which leads to an effectively smaller candidate set.} the Slater's condition is satisfied, which guarantees that the KKT conditions are necessary and sufficient for optimality --- there exists parameters $\lambda^\star, \vec{\mu}^\star, \eta^\star \geq 0$ such that as an optimal solution to \cref{eq:min-pi-prime}, $\pi_2=\pi_{\lambda}$ satisfies the following KKT conditions:
    \begin{align}
        \nabla_{\pi_2} \Lagrangian(\pi_2, \lambda^\star,\vec{\mu}^\star,\eta^\star) = 
        M^{\transpose}\pi_\tau + \lambda^\star\cdot\nabla_{\pi_2}\KL(\pi_2\|\piref) - \vec{\mu}^\star + \eta^\star\cdot\vecone = 0,
        \label{eq:kkt-pi-prime}
    \end{align}
    where 
    \begin{align}
        \eta^\star(\vecone^\transpose \pi_2-1)=0\quad\text{and}\quad\mu_i^\star(\pi_2)_i=0, \forall i\in[m],
        \label{eq:kkt-pi-prime-slackness}
    \end{align}
    due to complementary slackness.
    
    We will show that this $\lambda^\star$ is the regularization parameter we are looking for. Namely, for the regularized game in \cref{eq:regularized-nlhf} with $\lambda=\lambda^\star$, we have that $(\pi_\tau, \pi_\tau)$ is a Nash equilibrium.

    Again, let us first fix $\pi_1=\pi_\tau$ and consider the regularized optimization problem for the minimizing player:
    \begin{align}
        \min_{\pi_2} \pi_\tau^\transpose M \pi_2 - \lambda^\star\cdot\KL(\pi_2\|\piref) \quad \text{s.t.} \quad \pi_2\ge\veczero, \vecone^\transpose \pi_2=1.
        \label{eq:min-pi-prime-lambda}
        \tag{Regularized Minimization}
    \end{align}
    Since $\pi_\tau^{\transpose} M \pi_2$ is convex in $\pi_2$, Slater's condition is satisfied for the above problem and implies that the KKT conditions are sufficient for optimality. 
    It is also not hard to see that $\Lagrangian(\pi_2, \lambda^\star,\vec{\mu}^\star,\eta^\star)$ coincides with the Lagrangian of \eqref{eq:min-pi-prime-lambda}.
    We can therefore conclude from \cref{eq:kkt-pi-prime,eq:kkt-pi-prime-slackness} that for the minimizing player in the regularized game, $\pi_2=\pi_\tau$ is a best response strategy to $\pi_1=\pi_\tau$.
    
    Since the regularized game is anti-symmetric, the same argument shows that for the maximizing player in the regularized game, $\pi_1=\pi_\tau$ is also a best response strategy to $\pi_2=\pi_\tau$.
    Together, we have that $(\pi_\tau, \pi_\tau)$ is a Nash equilibrium for the regularized game in \cref{eq:regularized-nlhf} with $\lambda=\lambda^\star$. The proof is complete.
\end{proof}

Combining \Cref{prop:regularized-nlhf} with \Cref{thm:upperbound_nash}, we obtain the following guarantee for the KL-regularized version of NLHF:
\corregularizednlhf*

For the RLHF case, the proof is analogous, except that we no longer have nested minimization-maximization, so the proof is slightly simpler. We omit the details of proof, but state the result below.

\begin{proposition}[Equivalence between Constrained and Regularized RLHF]
    \label{prop:regularized-rlhf}
    Let $r$ be the MLE reward learned from the comparison data, and let $\pi_{\tau}$ be the output of the $\tau$-constrained RLHF method defined in \Cref{sec:prelim}, i.e.,
    \begin{align}
        \pi_{\tau} = \argmax_{\pi\in \ball_{\tau}(\piref)} \Ex{x\sim\pi}{r(x)},
        \label{eq:constrained-rlhf}
    \end{align}
    and let $\pitilde_{\lambda}$ be the output of the $\lambda$-regularized RLHF method, i.e.,
    \begin{align}
        \pitilde_{\lambda} = \argmax_{\pi\in\Delta(M)} \Ex{x\sim\pi}{r(x)} - \lambda\cdot\kldiv{\pi}{\piref}.
        \label{eq:regularized-rlhf}
    \end{align}
    Then, for each $\lambda\in[0,\infty]$ with solution $\pitilde_{\lambda}$ for \cref{eq:regularized-rlhf}, we have that $\pitilde_{\lambda}$ is also an optimal solution to the $\tau$-constrained optimization problem in \cref{eq:constrained-rlhf}, where $\tau=\kldiv{\pitilde_{\lambda}}{\piref}$. Conversely, for each $\tau\ge0$ with solution $\pi_{\tau}$ for \cref{eq:constrained-rlhf}, there exists $\lambda\in[0,\infty]$ such that $\pi_{\tau}$ is also an optimal solution to the $\lambda$-regularized optimization problem in \cref{eq:regularized-rlhf}.
\end{proposition}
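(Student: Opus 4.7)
\textbf{Proof Proposal for \Cref{prop:regularized-rlhf}.} The plan is to follow the same Lagrangian-duality template used in the proof of \Cref{prop:regularized-nlhf}, but exploit the fact that RLHF is a plain (single-player) concave maximization over the simplex, so no Nash-equilibrium reasoning is needed. Throughout, I may assume $\piref(a)>0$ for all $a\in A$, since otherwise both formulations restrict to the support of $\piref$ without loss of generality, and I can treat the degenerate cases $\tau=0$ (forcing $\pi=\piref$, matched by $\lambda=\infty$) and $\lambda=\infty$ (forcing $\pi=\piref$, matched by $\tau=0$) separately at the outset.

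\emph{Regularized $\Rightarrow$ constrained.} This direction is elementary and does not need duality. Given $\lambda\in[0,\infty)$ with maximizer $\pitilde_\lambda$ of \cref{eq:regularized-rlhf}, set $\tau=\kldiv{\pitilde_\lambda}{\piref}$. For any competitor $\pi\in\ball_\tau(\piref)$, optimality of $\pitilde_\lambda$ in the regularized problem gives
\[
\Ex{x\sim\pi}{r(x)}-\lambda\kldiv{\pi}{\piref}\le \Ex{x\sim\pitilde_\lambda}{r(x)}-\lambda\kldiv{\pitilde_\lambda}{\piref},
\]
so that $\Ex{x\sim\pi}{r(x)}-\Ex{x\sim\pitilde_\lambda}{r(x)}\le \lambda(\kldiv{\pi}{\piref}-\kldiv{\pitilde_\lambda}{\piref})\le 0$ because $\kldiv{\pi}{\piref}\le \tau=\kldiv{\pitilde_\lambda}{\piref}$ and $\lambda\ge 0$. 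Hence $\pitilde_\lambda$ is optimal for \cref{eq:constrained-rlhf}.

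\emph{Constrained $\Rightarrow$ regularized.} Here I apply KKT/Lagrangian duality. Rewrite \cref{eq:constrained-rlhf} as a convex program (the objective is linear in $\pi$, and the constraint $\kldiv{\pi}{\piref}\le\tau$ is convex), with Lagrangian
\[
\Lagrangian(\pi,\lambda,\bmu,\eta)=\Ex{x\sim\pi}{r(x)}-\lambda\bigl(\kldiv{\pi}{\piref}-\tau\bigr)+\bmu^{\transpose}\pi-\eta\bigl(\vecone^{\transpose}\pi-1\bigr),
\]
with $\lambda,\bmu\ge 0$. When $\tau>0$, the reference policy $\piref$ is strictly feasible ($\kldiv{\piref}{\piref}=0<\tau$), so Slater's condition holds and the KKT conditions are necessary and sufficient for optimality. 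Let $\lambda^\star,\bmu^\star,\eta^\star$ be the dual multipliers certifying optimality of $\pi_\tau$. I claim that $\lambda^\star$ is the desired regularization parameter.

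To verify this, consider the regularized problem \cref{eq:regularized-rlhf} with $\lambda=\lambda^\star$, whose Lagrangian over the simplex is obtained by removing the KL constraint from $\Lagrangian$; since the term $\lambda^\star\tau$ is constant in $\pi$, the stationarity and complementary-slackness conditions satisfied by $(\pi_\tau,\lambda^\star,\bmu^\star,\eta^\star)$ are identical to those of the KKT system for \cref{eq:regularized-rlhf} with dual multipliers $(\bmu^\star,\eta^\star)$. The regularized objective is strictly concave in $\pi$ on the relative interior of the simplex (because $\kldiv{\cdot}{\piref}$ is strictly convex), so its KKT conditions are sufficient, and $\pi_\tau$ is an optimal solution to \cref{eq:regularized-rlhf}. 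The main subtlety I expect to handle carefully is just the boundary bookkeeping: extracting $\lambda^\star$ when the KL constraint is inactive (in which case $\lambda^\star=0$ works) versus active, and the two degenerate cases $\tau=0$/$\lambda=\infty$ mentioned above; none of these obstruct the argument but they need a short case analysis.
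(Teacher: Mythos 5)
Your proposal is correct and follows essentially the same route as the paper, which omits the details for this proposition but explicitly states that the proof is analogous to (and slightly simpler than) the Lagrangian/KKT argument given for \Cref{prop:regularized-nlhf}: a direct comparison of objectives for the regularized-to-constrained direction, and Slater's condition plus matching KKT systems for the converse, with the same degenerate cases $\tau=0$ and $\lambda=\infty$ handled separately. No gaps.
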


  \section{Other Sampling Models}
\label{app:other-sampling-models}
To prove \cref{thm:unbounded-correlated-sampling}, we first prove the following lemma.
We illustrate the constructed sequence of alternatives in \cref{fig:sequenceunbounded}.
\begin{figure}[htb]
    \centering
    \includegraphics[width=\textwidth]{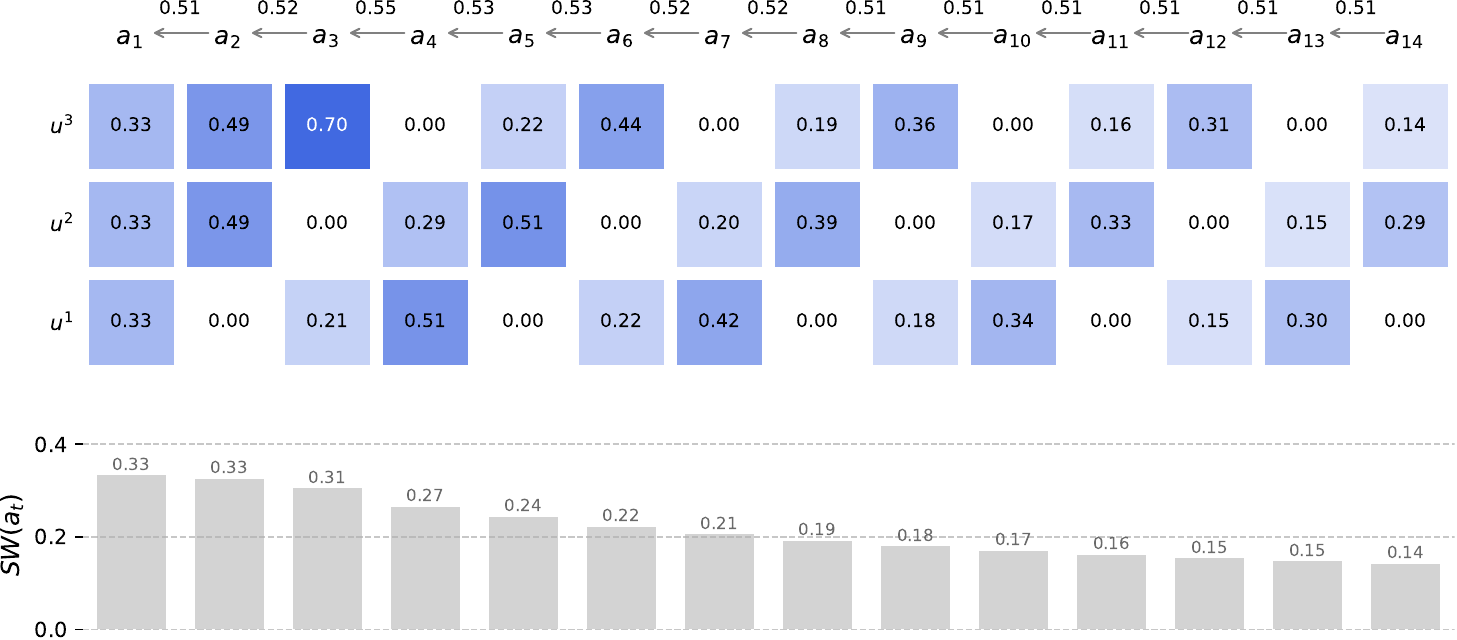}
    \caption{Utilities for first 14 alternatives in the sequences constructed in \cref{lem:sequenceunbounded}, for $\beta=5$. Bottom bar chart shows decreasing utility. Numbers between alternative labels $a_{t+1} \to a_t$ give the expected win-rate $p(a_{t+1} \succ a_t)$.}
    \label{fig:sequenceunbounded}
\end{figure}

\begin{lemma}
\label{lem:sequenceunbounded}
For any $\beta > 0$, there is an infinite sequence $a_1, a_2, \dots$ of alternatives, and a distribution $\cD$ of utility functions over these alternatives such that
\begin{itemize}
\item $\SW(a_1) = 1/3$,
\item for all $t \geq 2$, $0 < \SW(a_t) \leq \SW(a_{t-1}) - \frac{2}{3 \, \beta} \, \log \Big(1 + \tanh\big(\beta/4 \cdot \SW(a_{t-1})\big)^3\Big) < \SW(a_{t-1})$, and
\item for all $t \geq 2$, $p(a_{t} \succ a_{t-1}) > 1/2$.
\end{itemize}
\end{lemma}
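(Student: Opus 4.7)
The plan is to build the sequence $(a_t)_{t\ge 1}$ and a distribution $\cD$ recursively, exploiting an asymmetry of the Bradley--Terry model: a small minority with a strong preference for $a_{t-1}$ can dominate the average utility, while a large majority with a slight preference for $a_t$ can dominate the pairwise win-rate, because $\sigma$ saturates (equivalently, $\tanh$ is strictly concave on $[0,\infty)$).

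For the base case I would set $\SW(a_1)=1/3$ by having a $1/3$ fraction of users with utility $1$ at $a_1$ and the rest with utility $0$. For the inductive step, given $a_{t-1}$ with $\SW(a_{t-1})=s$, I would define $a_t$ so that the comparison $(a_{t-1},a_t)$ is carried out by two groups: a minority of mass $\alpha$ with $u(a_{t-1})=m$ and $u(a_t)=0$ (a strong loss), and a majority of mass $1-\alpha$ with $u(a_{t-1})=0$ and $u(a_t)=\gamma$ (a small gain). A direct computation gives $\SW(a_t)=(1-\alpha)\gamma$ and $p(a_t\succ a_{t-1})>1/2$ iff $(1-\alpha)\tanh(\beta\gamma/2)>\alpha\tanh(\beta m/2)$. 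By the strict concavity of $\tanh$ on $[0,\infty)$ with $\tanh(0)=0$, one has $\tanh(\beta\gamma/2)>(\gamma/m)\tanh(\beta m/2)$ whenever $0<\gamma<m$, so a range of valid $(\alpha,\gamma,m)$ exists simultaneously giving $\SW(a_t)<s$ and $p(a_t\succ a_{t-1})>1/2$. Parameterizing three equally weighted subpopulations of mass $1/3$ each (matching the $2/3$ coefficient in the target bound) and optimizing under the binding win-rate constraint should yield a decrement at least $\frac{2}{3\beta}\log(1+\tanh(\beta s/4)^3)$; the cube and the logarithm emerge after converting the $\arctanh$ expressions produced by the tight win-rate equality into logarithmic form via $\arctanh(y)=\frac{1}{2}\log\frac{1+y}{1-y}$.

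To realize these per-step constructions through a single joint distribution $\cD$ over entire utility trajectories, I would introduce for each $t\ge 2$ a fresh subpopulation of mass $\alpha_t$ of \emph{peak-at-$a_{t-1}$} users, defined by $u(a_{t-1})=1$ and $u(a_\tau)=0$ for $\tau\ne t-1$; these play the minority role in the comparison $(a_{t-1},a_t)$ and are indifferent elsewhere. The complementary baseline population carries utility values $b_\tau\in[0,1]$ tuned to provide the majority's slight preference for $a_\tau$ over $a_{\tau-1}$ in each step. Summability $\sum_t\alpha_t\le 1$ would follow from $\alpha_t\to 0$, which in turn is forced by the recursion (if $s_t$ did not vanish, the decrement would be bounded below, contradicting monotonicity).

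The main obstacle will be the precise quantitative tuning: extracting exactly $\frac{2}{3\beta}\log(1+\tanh(\beta s/4)^3)$ from the local optimization is delicate and seems to require a genuine three-subpopulation parameterization rather than the two-group sketch above; on top of that, the global construction must keep baseline utilities inside $[0,1]$ and peak masses summable throughout the infinite recursion, which requires careful bookkeeping as $s_t\to 0$.
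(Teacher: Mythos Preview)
Your intuition about the sigmoid asymmetry is right, and you correctly anticipate that three equally weighted subpopulations underlie the bound. But the global construction you actually propose---a single baseline population together with a \emph{fresh} ``peak-at-$a_{t-1}$'' subpopulation of mass $\alpha_t$ at every step---is broken, and your summability argument does not rescue it.

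First, $\alpha_t\to 0$ does not imply $\sum_t\alpha_t<\infty$. The target recursion forces decrements of order $s_{t-1}^3$ for small $s$, so $s_t\asymp t^{-1/2}$; any peak mass comparable to $s_{t-1}$ is therefore not summable. Second, and more fundamentally: for the baseline to supply the majority's preference for $a_t$ over $a_{t-1}$ at every step you need $b_t>b_{t-1}$, so $(b_t)$ is strictly increasing and its limit $b_\infty$ is positive. But then $\SW(a_t)\ge(\text{baseline mass})\cdot b_t$ stays bounded away from $0$, contradicting the conclusion $\SW(a_t)\to 0$ that the recursion forces. (If instead the baseline has mass zero, the win-rate condition becomes $\alpha_{t+1}>\alpha_t$, contradicting the welfare decrease.) A monotone baseline with one-shot peak users therefore cannot realize the sequence at all, regardless of the quantitative tuning you left open.

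The paper avoids all of this by \emph{rotating} the minority role among three \emph{fixed} users. The distribution $\cD$ is simply uniform over three utility vectors $u^1,u^2,u^3$; no new users are ever introduced. At step $t$, the user $u^A\in\{u^1,u^2,u^3\}$ with the highest value on $a_{t-1}$ is reset to $u^A(a_t)=0$, while the other two are increased by $\Delta'/\beta$, where $\Delta=\beta\,u^A(a_{t-1})$ and $\Delta'=\Delta/2-\log\bigl(1+\tanh(\Delta/4)^3\bigr)$. The decrement bound and $p(a_t\succ a_{t-1})>1/2$ then follow from short direct calculations, and the utilities stay in $[0,1]$ because $\SW(a_{t-1})\le 1/3$ inductively keeps the two non-maximal users below $1/2$ while $\Delta'/\beta<1/2$. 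No summability or baseline bookkeeping is needed because the population never changes.
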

\begin{proof}
Our population $\cD$ will be a uniform distribution over three utility vectors, $u^1$, $u^2$, and $u^3$.
We define the sequence of alternatives and prove the claim by induction over $t \geq 1$.

For $t=1$, set $u^1(a_1)=u^2(a_1)=u^3(a_1) \coloneqq 1/3$, which clearly satisfies the first claim.

\newcommand{\edt}{e^{\Delta/2}}
Now, let $t \geq 2$, and suppose that we have defined the utilities for alternatives $a_1, \dots, a_{t-1}$ and established the claims for all $t' < t$.
We define utilities for $a_{t}$ and extend the claims to $t$.
Let $u^A$ denote the utility vector among $u^1, u^2, u^3$ with the highest utility for $a_{t-1}$, and denote the other two utility vectors by $u^B, u^C$.
For convenience, set $\Delta \coloneqq u^A(a_{t-1}) \cdot \beta$ and $\Delta' \coloneqq \log\big(\frac{(\edt + 1)^3}{2 \, (e^\Delta + 3)}\big)$.
\[ u^A(a_t) \coloneqq u^A(a_{t-1}) - \Delta/\beta = 0, \quad\quad u^B(a_t)\coloneqq u^B(a_{t-1}) + \Delta'/\beta, \quad\quad u^C(a_t)\coloneqq u^C(a_{t-1}) + \Delta'/\beta. \]

It will be useful to derive an alternative expression for $\Delta'$:
\newcommand{\edts}{e^\Delta}
\begin{align*}
\Delta' &=  \log\left(\frac{(\edt+1)^3}{2 \, (\edts+3)}\right) = \log\left(\edt \, \frac{(\edt+1)^3}{2 \, \edt \, (\edts+3)}\right) \\
&= \frac{\Delta}{2} - \log \frac{2 \, \edt \, (\edts+3)}{(\edt+1)^3} = \frac{\Delta}{2} - \log \frac{(\edt+1)^3 + (\edt-1)^3}{(\edt+1)^3} \\
&= \frac{\Delta}{2} - \log \left(1 + \Big(\frac{\edt-1}{\edt+1}\Big)^3\right) = \frac{\Delta}{2} - \log \left(1 + \tanh(\Delta/4)^3\right).
\end{align*}
Since the value $1 + (\frac{\edt-1}{\edt+1})^3$ in the logarithm is greater than 1, we know that $\Delta' < \Delta/2$.

Since $\SW(a_{t-1}) > 0$ by the induction hypothesis, it must hold that $\Delta > 0$, and, by expanding, that
\begin{equation}
\label{eq:edeltaprime}
\Delta' = \log \frac{(\edt + 1)^3}{2 \, (e^\Delta + 3)} = \log \frac{3 \, e^\Delta + 1 +  \tfrac{1}{2} \cdot (\edt - 1)^3}{e^\Delta + 3} > \log \frac{3 \, e^\Delta + 1}{e^\Delta + 3}.
\end{equation}
Since $\log \frac{3 \, e^\Delta + 1}{e^\Delta + 3} > \log \frac{e^\Delta + 3}{e^\Delta + 3} = 0$, it holds that $\Delta' > 0$ and that $\SW(a_t) > 0$.

We first must show that we have not set $u^B(a_t)$ and $u^C(a_t)$ greater than 1.
Since, by the induction hypothesis, $\frac{1}{3} (u^A(a_{t-1}) + u^B(a_{t-1}) + u^C(a_{t-1})) = \SW(a_{t-1}) \leq \SW(a_{t-2}) \leq \cdots \leq \SW(a_1) = 1/3$, it must hold that $u^A(a_{t-1}) + u^B(a_{t-1}) + u^C(a_{t-1}) \leq 1$, which by the choice of $u^A$ implies that $u^B(a_{t-1}), u^C(a_{t-1})$ are at most $1/2$.
Since $\Delta' < \Delta/2 = u^A(a_{t-1}) \cdot \beta/2 \leq \beta/2$, $u^B(a_t) = u^B(a_{t-1}) + \Delta'/\beta \leq 1/2 + 1/2 = 1$, this holds for $u^B$, and analogously for $u^C$.

Next, we show the claimed reduction in average utility using our alternative expression for $\Delta'$.
\begin{align*}
\SW(a_t) &= \frac{1}{3} \cdot \left( u^A(a_t) + u^B(a_t) + u^C(a_t)\right) = \frac{1}{3} \cdot \left( u^A(a_{t-1}) + u^B(a_{t-1}) + u^C(a_{t-1}) - \frac{\Delta - 2 \, \Delta'}{\beta}\right) \\
&= \SW(a_{t-1}) - \frac{\Delta - 2 \, \Delta'}{3 \, \beta} = \SW(a_{t-1}) - \frac{2}{3 \, \beta} \, \log \Big(1 + \tanh(\Delta/4)^3\Big).
\end{align*}
By our choice of $u^A$ and averaging, it holds that $u^A(a_{t-1}) \geq \SW(a_{t-1})$ and hence that $\Delta \geq \beta \cdot \SW(a_{t-1})$. Since the bound on $\SW(a_t)$ above is monotone nonincreasing in $\Delta$, we obtain our claim that
\[\SW(a_t) \leq \SW(a_{t-1}) - \frac{2}{3 \, \beta} \, \log \Big(1 + \tanh\big(\beta/4 \cdot \SW(a_{t-1})\big)^3\Big). \]

Finally, it remains to show that $p(a_{t} \succ a_{t-1}) > 1/2$.
Since
\begin{align*}
p(a_{t} \succ a_{t-1}) &= \frac{\sigma(- \Delta) + 2 \, \sigma(\Delta')}{3} = \frac{1}{2} + \frac{(\sigma(-\Delta) - 1/2) + 2 \, (\sigma(\Delta') - 1/2)}{3} \\
&= \frac{1}{2} + \frac{(1/2 - \sigma(\Delta)) + 2 \, (\sigma(\Delta') - 1/2)}{3},
\end{align*}
it suffices to show that $2 \, (\sigma(\Delta') - 1/2) > \sigma(\Delta) - 1/2$.
Observing that $\sigma(x) - 1/2 = \frac{1}{2}\cdot \frac{1 - e^{-x}}{1 + e^{-x}}$ and applying \cref{eq:edeltaprime}, we bound
\begin{align*}
2 \, \big(\sigma(\Delta') - \tfrac{1}{2}\big) &> 2 \, \Big(\sigma\big(\log(\tfrac{3^\Delta + 1}{e^\Delta + 3})\big) - \tfrac{1}{2}\Big) = \frac{1 - \frac{e^\Delta + 3}{3 \, e^\Delta + 1}}{1 + \frac{e^\Delta + 3}{3 \, e^\Delta + 1}} = \frac{\frac{2 \, e^\Delta - 2}{3 \, e^\Delta + 1}}{\frac{4 \, e^\Delta + 4}{3 \, e^\Delta + 1}} = \frac{2}{4} \cdot \frac{e^\Delta - 1}{e^\Delta + 1} \\
&= \frac{1}{2} \cdot \frac{1-e^{-\Delta}}{1 + e^{-\Delta}} = \sigma(\Delta)-\tfrac{1}{2},
\end{align*}
which establishes our claim.
\end{proof}

\thmunboundedcorrelatedrlhf*
\begin{proof}
We construct our sequence of instances by taking increasingly long prefixes of the sequence in \cref{lem:sequenceunbounded}, i.e., by considering the alternatives $a_1, \dots, a_m$ for increasing $m$.
Rescaling by some constants, we can define the MLE rewards in RLHF as
\[r = \argmax_{r \in \mathbb{R}^m} \sum_{\{x, y\} \in \binom{A}{2}} \frac{\#(x \succ y)}{n \, d} \log \big(\sigma(r(x)- r(y))\big) + \frac{\#(y \succ x)}{n \, d} \log \big(\sigma(r(y)- r(x))\big).\]
Following \citet{siththaranjan2023distributional}, we apply the first-order optimality conditions to obtain that, for each alternative $x$,
\begin{equation*}
\sum_{y \neq x} \frac{\#(x \succ y)}{n \, d} = \sum_{y \neq x} \frac{\#(x \succ y) + \#(y \succ x)}{n \, d} \cdot \sigma(r(x) - r(y)). \end{equation*}
By the strong law of large numbers, as the number $n$ of samples goes to infinity (regardless of $d$), the sample fraction $\frac{\#(x \succ y)}{n \, d}$ converges almost surely to its expected value $\nu(\{x,y\}) \cdot p(x \succ y)$. Hence, as $n \to \infty$, the rewards (a random variable depending on the random pairwise comparisons) will satisfy that
\begin{equation}\label{eq:firstordermle} \sum_{y \neq x} \nu(\{x, y\}) \cdot \sigma(r(x) - r(y)) \stackrel{\text{a.s.}}{\longrightarrow} \sum_{y \neq x} \nu(\{x,y\}) \cdot p(x \succ y). \end{equation}

Consider a distribution $\nu$ over pairs of alternatives that assigns each pair of adjacent alternatives $\{a_t, a_{t+1}\}$ a probability of $\frac{1 - \epsilon}{m-1}$ of being drawn for comparison, and all other pairs a probability of $\frac{\epsilon}{\binom{m}{2} - (m-1)}$, where $\epsilon>0$ is a small value, dependent on the current $m$, to be determined in the following.

Applying \cref{eq:firstordermle} to $x = a_{m}$, we obtain that
$\frac{1-\epsilon}{m-1} \sigma(r(a_m) - r(a_{m-1})) + O(\epsilon)$ converges almost surely to $\frac{1-\epsilon}{m-1} \cdot p(a_m \succ a_{m-1}) + O(\epsilon)$.
Since \cref{lem:sequenceunbounded} guarantees that $p(a_m \succ a_{m-1}) > 1/2$, for small enough $\epsilon$, it will hold almost surely that $\sigma(r(a_m) - r(a_{m-1})) > 1/2$, i.e., that $r(a_m) > r(a_{m-1})$.

Next, we apply \cref{eq:firstordermle} to $x = a_{m-1}$, to obtain that
$\frac{1-\epsilon}{m-1} (\sigma(r(a_{m-1}) - r(a_m)) + \sigma(r(a_{m-1}) - r(a_{m-2}))) + O(\epsilon)$ converges almost surely to $\frac{1-\epsilon}{m-1} \cdot (p(a_{m-1} \succ a_m) + p(a_{m-1} \succ a_{m-2})) + O(\epsilon)$.
Having established above that, for small enough $\epsilon$, we can make $\sigma(r(a_{m-1}) - r(a_m)) = 1 - \sigma(r(a_{m}) - r(a_{m-1}))$ arbitrarily close to $p(a_{m-1} \succ a_m) = 1 - p(a_{m} \succ a_{m-1})$, we see that $\sigma(r(a_{m-1}) - r(a_{m-2}))$ must become arbitrarily close to $p(a_{m-1} \succ a_{m-2}) > 1/2$.

Continuing this argument for $x = a_{m-2}, a_{m-3}, \dots, a_1$, we obtain that, for small enough $\epsilon$, the rewards will be ordered as $r(a_1) < r(a_2) < \cdots < r(a_m)$ almost surely. Set $\epsilon$ (for this specific $m$) so that this is the case.
We set the KL constraint large enough that all policies are possible; say, by choosing the reference policy to be uniform and setting $\tau = \log m$.\footnote{This means that this lower bound fits into the social choice subsetting. Note that the theorem does not apply to Borda count (which is not defined for a general distribution $\nu$), but to RLHF considered as a voting rule.}
Then, the reward-maximizing policy clearly puts all probability mass on the alternative $a_m$ with maximal reward, obtaining a distortion of $\frac{\SW(a_1)}{\SW(a_m)} = \frac{1/3}{\SW(a_m)}$.

We have now defined a sequence of instances, whose distortion grows as $\frac{1/3}{\SW(a_m)}$ as $m \to \infty$.
To show that distortion is not bounded in $\beta$, it remains to show that $\SW(a_m) \to 0$.
The lemma already tells us that $\SW(a_t)$ (for $t=1,2,\dots$) is a monotonically decreasing sequence.
Since the average utility is nonnegative, the sequence is bounded from below and thus convergent, which also implies that the sequence of differences $\SW(a_{t-1}) - \SW(a_t)$ must converge to 0.
Since the bound $\frac{2}{3 \beta} \log (1 + \tanh(\beta/4 \SW(a_{t-1}))^3)$ is sandwiched between these differences and 0, it must also converge to 0.
Since it is continuous in $\SW(a_{t-1})$ and positive for all positive values of $\SW(a_{t-1})$, this implies that $\SW(a_{t-1})$ must converge to 0.
This shows that $\SW(a_m) \to 0$ for large $m$, and that the distortion grows unboundedly large as $m$ increases, which concludes our proof.
\end{proof}

\end{document}